\def\fortech{1}
\newsavebox\mybox
\newtheorem{definition}{Definition}
\newtheorem{proposition}{Proposition}
\DeclareMathOperator{\domain}{domain}
\DeclareMathOperator{\range}{range}
\DeclareMathOperator{\tovec}{vec}
\DeclareMathOperator{\tomat}{mat}
\DeclareMathOperator{\diag}{diag}
\DeclarePairedDelimiterX{\norm}[1]{\lVert}{\rVert}{#1}
\ifundef{\fortech}{
  \setbool{techreportbool}{false}
}{
  \setbool{techreportbool}{true}
}
\newcommand{\conferenceversion}[1]{\notbool{techreportbool}{#1}{}}
\newcommand{\techreport}[1]{\ifbool{techreportbool}{#1}{}}
\newcommand{\ConfOrTech}[2]{\notbool{techreportbool}{#1}{#2}}
\begin{document}


\title{Differentially Private Confidence Intervals for Empirical Risk Minimization}



%
%
%
%

\numberofauthors{3} 

\author{
%
%
\alignauthor
Yue Wang\\
       \affaddr{The Pennsylvania State University}\\
       \email{yuw140@cse.psu.edu}
\alignauthor
Daniel Kifer\\
       \affaddr{The Pennsylvania State University}\\
       \email{dkifer@cse.psu.edu}
\alignauthor
Jaewoo Lee\\
       \affaddr{University of Georgia}\\
       \email{jwlee@cs.uga.edu}
}

\maketitle

\begin{abstract}
The process of data mining with differential privacy produces results that are affected by two types of noise: sampling noise due to data collection and privacy noise that is designed to prevent the reconstruction of sensitive information. In this paper, we consider the problem of designing confidence intervals for the parameters of a variety of differentially private machine learning models.  The algorithms can provide confidence intervals that satisfy differential privacy (as well as the more recently proposed concentrated differential privacy) and can be used with existing differentially private mechanisms that train models using objective perturbation and output perturbation.
\end{abstract}

\section{Introduction}\label{sec:intro}
Differential privacy \cite{dwork2006calibrating} is now seen as a gold standard for protecting individual data records while releasing aggregate information such as noisy count queries or parameters of data mining models. There has been a great deal of focus on answering queries and building models using differential privacy but much less focus on quantifying their uncertainty. Uncertainty estimates are needed by data users to understand how much they can trust a query answer or a data mining model. 

Uncertainty comes from two sources: uncertainty in the data and uncertainty due to privacy mechanisms. Uncertainty in the data is often referred to as sampling error -- the data are a sample from a larger population (so a different sample could yield different results). 
Uncertainty due to privacy mechanisms comes from the fact that any useful algorithm that satisfies differential privacy must have randomized behavior. Both must be quantified in an uncertainty estimate.


In the setting we consider, a differentially private algorithm has trained a model and released its parameters. The end user would like to obtain confidence intervals around each parameter. These confidence intervals themselves must satisfy differential privacy. There has been very little work on this topic and, to the best of our knowledge, all of it has focused on linear regression \cite{sheffet2017differentially,barrientos2017differentially}.

On the other hand, differentially private model fitting algorithms such as objective perturbation \cite{chaudhuri2011differentially} and output perturbation \cite{chaudhuri2011differentially} can train a variety of models, such as logistic regression and SVM, and achieve state-of-the-art (or near state-of-the-art) accuracy on many datasets. However, they do not come with confidence intervals.

In this paper, we propose privacy-preserving algorithms for generating confidence intervals for differentially private models trained by the techniques of Chaudhuri et al. \cite{chaudhuri2011differentially}. We provide 
versions of these algorithms for pure $\epsilon$-differential privacy, as well as the recently introduced concentrated differential privacy (zCDP) \cite{bun2016concentrated}.

There are three basic steps in our framework. The first is to use either the output or objective perturbation techniques \cite{chaudhuri2011differentially} to provide model parameters. In the case of objective perturbation, the result satisfies both $\epsilon$-differential privacy as well as $\frac{\epsilon^2}{2}$-zCDP. In the case of output perturbation, the algorithms for differentially privacy and zCDP (concentrated differential privacy) are different. 
%
In the second step, we use Taylor's Theorem and the Central Limit Theorem to approximate the randomness in the model coefficients that is due to both the data and the privacy mechanisms. This approximation relies on properties of the data and thus necessitates a third step of estimating them using either differential privacy or zCDP. Thus, the overall privacy budget must be split into two phases: the budget allocated to getting the model parameters and the budget allocated to estimating uncertainty in the parameters.

In our experiments, we verify the accuracy of our confidence intervals and observe that under pure differential privacy, the confidence intervals for models trained with objective perturbation are shorter than those for models trained with output perturbation. However, under concentrated differential privacy, the confidence intervals for output perturbation are much smaller. 

Note that the goal of this paper is not to introduce new model fitting algorithms. The goal is to add capabilities for quantifying uncertainty in the model coefficients.


To summarize, our contributions are the following.
\begin{list}{$\bullet$}{
\setlength{\itemsep}{0pt}
\setlength{\topsep}{3pt}
\setlength{\parsep}{3pt}
\setlength{\partopsep}{0pt}
\setlength{\leftmargin}{1em}}
\item To the best of our knowledge, this is the first paper that provides differentially private confidence intervals for models other than linear regression and our work is \emph{not} limited to any specific model -- it works for any model that can be trained using objective perturbation \cite{chaudhuri2011differentially}.
\item The confidence intervals can be made to satisfy different variations of differential privacy, including pure $\epsilon$-differential privacy \cite{dwork2006calibrating}, zero-mean concentrated differential privacy (zCDP) \cite{bun2016concentrated}, and approximate $(\epsilon,\delta)$-differential privacy \cite{DworkOurData}.
\item We empirically validate our confidence intervals using a variety of public datasets.
\item Finally, we provide a small improvement to the original objective perturbation model fitting technique ~\cite{chaudhuri2011differentially} by improving some of the constants in the algorithm.
%
\end{list}

We discuss related work in Section~\ref{sec:relatedworks} and introduce the preliminaries and notation in Section~\ref{sec:preliminaries}. We derive confidence intervals for models trained with objective perturbation in Section \ref{sec:obj}. We derive confidence intervals for models trained with output perturbation  in Section~\ref{sec:output}. We show how to apply our algorithms to logistic regression and support vector machines in Section~\ref{sec:applications} and present experiments in Section~\ref{sec:experiments}. We present conclusions in Section~\ref{sec:conclusions}.

\section{Related Work}\label{sec:relatedworks}

Differentially private training of data mining models has been extensively studied, for example, in \cite{chaudhuri2011differentially,FriedmanDT,kifer2012private,yu2014differentially,zhang2012functional,wu2015revisiting,bassily2014private,kasiviswanathan2016efficient,zhang2013privgene,jain2013differentially,wang2017differentially,zhang2017efficient,rubinstein2009learning,talwar2014private,talwar2015nearly,jain2014near,kasiviswanathan2017private,ligett2017accuracy,wang2018efficient}. However, such work provides model parameters without any uncertainty estimates (such as confidence intervals) about the parameters. To the best of our knowledge, the only exceptions are for linear regression \cite{sheffet2017differentially,barrientos2017differentially}.

Chaudhuri et al. \cite{chaudhuri2011differentially} studied a general class of models that (without privacy) are trained  with empirical risk minimization. They proposed two general approaches, called \emph{objective perturbation} and \emph{output perturbation} for training such models with differential privacy. 
Subsequent work increased the set of models that can be trained \cite{kifer2012private,yu2014differentially,zhang2012functional,zhang2013privgene,wu2015revisiting}.
Kifer et al.~\cite{kifer2012private} extended the algorithm of ~\cite{chaudhuri2011differentially} by removing some differentiability requirements and allowing constraints in model training. 
 Yu et al.~\cite{yu2014differentially} solved the problem of differentially private penalized logistic regression with elastic-net regularization by extending the objective perturbation technique to any convex penalty function.
Zhang et al.~\cite{zhang2012functional} proposed the functional mechanism, which approximates models by polynomials. Subsequently, Zhang et al.~\cite{zhang2013privgene}  proposed a general solution based on genetic algorithms and a novel random perturbation technique called the enhanced exponential mechanism. Wu et al. \cite{wu2015revisiting}  proposed another output perturbation technique for learning tasks with convex and Lipschitz loss functions on a bounded domain. They relaxed the condition of differentiable loss functions in \cite{chaudhuri2011differentially}. However, we found that when both methods are applicable, the noise added by the output perturbation technique of Wu et al.~\cite{wu2015revisiting} is generally larger than the noise added by the output perturbation technique of Chaudhuri et al. \cite{chaudhuri2011differentially}.

High dimensional regression problems were also studied in \cite{bassily2014private,kasiviswanathan2016efficient}.
Bassily et al. \cite{bassily2014private} proposed new algorithms for the private convex ERM problem when the loss function is only Lipschitz and the domain of the optimization is bounded. They also proposed separate algorithms when the loss function is also strongly convex. They propose algorithms for both pure and approximate differential privacy.
Kasiviswanathan and Jin \cite{kasiviswanathan2016efficient}  improved the worst-case risk bounds of Bassily et al. \cite{bassily2014private} under differential privacy with access to full data. Moreover, with access to only the projected data and the projection matrix, they derived the excess risk bounds for generalized linear loss functions.

There has been some work on quantifying the uncertainty for differentially private models, mostly in the form of confidence intervals and hypothesis testing.

Differentially private hypothesis testing has been studied in \cite{uhler2013privacy,yu2014scalable,wang2015differentially,gaboardi2016differentially,rogers2017new,kakizaki2017differentially,cai2017priv,acharya2017differentially}. Uhler et al. \cite{uhler2013privacy} and Yu et al. \cite{yu2014scalable} conducted differentially private independence testing through $\chi^2$-tests with output perturbation, and adjusted the asymptotic distribution used to compute p-values. 
 Using input perturbation, Wang et al. \cite{wang2015differentially}, and later independently Gaboardi et al. \cite{gaboardi2016differentially} proposed  differentially private hypothesis testing for independence and goodness of fit. Kifer and Rogers \cite{rogers2017new} later proposed new test statistics for chi-squared testing that are more compatible with privacy noise. 
Kakizaki et al. \cite{kakizaki2017differentially} proposed the unit circle mechanism for independence testing on $2\times 2$ tables with known marginal sums.
Cai et al. \cite{cai2017priv} studied the sample complexity to conduct differentially private goodness of fit test with guaranteed type I and II errors. Later work by Acharya et al. \cite{acharya2017differentially} derived the upper and lower bounds on the sample complexity for goodness of fit and closeness testing under $(\epsilon,\delta)$-differential privacy.

Providing diagnostics for differentially private regression analysis was studied in \cite{chen2016differentially}, where Chen et al. designed differentially private algorithms to construct residual plots for linear regression and receiver operating characteristics (ROC) curves for logistic regression.

Work on differentially private confidence intervals includes \cite{d2015differential,sheffet2017differentially,karwa2017finite}. D'Orazio et al. \cite{d2015differential} and Karwa and Vadhan~\cite{karwa2017finite} did not study models, instead they constructed differentially private confidence intervals for a mean \cite{karwa2017finite} and the difference of two means \cite{d2015differential}.
In the context of model coefficients,
Sheffet \cite{sheffet2017differentially} studied $(\epsilon,\delta)$-differentially private Ordinary Least Squares Regression (OLS) and  generated confidence intervals for the parameters.
Barrientos et al. \cite{barrientos2017differentially} used differential privacy to quantify the uncertainty of the coefficients of differentially private linear regression models. They generated differentially private t statistics for each coefficient.

Thus the closest work related to ours is Sheffet \cite{sheffet2017differentially} and Barrientos et al. \cite{barrientos2017differentially}. While their work only targets linear regression, our work targets any models that can be trained under the objective perturbation and output perturbation techniques of Chaudhuri et al. \cite{chaudhuri2011differentially}, which include many models such as logistic regression and SVM, but excludes linear regression.


To obtain confidence intervals, we also need to privately estimate second order matrices from the data.
Perturbing second order matrices for data are common in privacy-preserving principal component analysis (PCA).
Chaudhuri et al. proposed to perturb the second order matrices with the exponential mechanism to achieve differential privacy in~\cite{chaudhuri2012near}.
With the SuLQ framework \cite{blum2005practical}, Blum et al. added Gaussian noise to the second moment matrix and used it in the PCA to protect a notion of $(\epsilon,\delta,T)$-Privacy.
Jiang et al.~\cite{jiang2016wishart} studied the problem of publishing differentially private second order matrices by adding proper Laplace or Wishart noise.
Dwork et al. worked on projecting the second moment matrix of data into the low dimensional space using the notion of approximate differential privacy in \cite{dwork2014analyze}.
Later in \cite{sheffet2015private}, Sheffet also discussed three techniques to get the second moment matrix while preserving the approximate differential privacy, with the matrices being positive-definite.     

Due to the structure of the matrices needed by our techniques, a spherical version of the Laplace Mechanism, introduced in the objective perturbation method \cite{chaudhuri2011differentially} to achieve differential privacy, or the Gaussian Mechanism \cite{bun2016concentrated} to achieve zero-mean concentrated differential privacy \cite{bun2016concentrated}, are most appropriate.

\section{Preliminaries and Notation}\label{sec:preliminaries}
In this section,  we introduce notation used in the paper and then review the background of differential privacy and its variants, empirical risk minimization, and its applications to logistic regression and support vector machines.

Let $\mathcal{D}=\{(\vec{x}_1,y_1), \dots, (\vec{x}_n, y_n)\}$ be a set of $n$ records. Each record $i$ has a $d$-dimensional vector $\vec{x}_i$ of real numbers known as a \emph{feature vector} and each $y_i\in\{-1, 1\}$ is called the \emph{target}. Following \cite{chaudhuri2011differentially}, we require that each record is normalized so that $||\vec{x}_i||_2=1$.

\subsection{Differential Privacy}\label{subsec:reviewdp}

\begin{definition}{(Differential Privacy~\cite{dwork2006calibrating}).}
Given an $\epsilon>0$ and $\delta\geq 0$, a randomized mechanism $\mathcal{M}$ satisfies $(\epsilon,\delta)$-differential privacy if for all pairs of databases $\mathcal{D}, \mathcal{D}^{\prime}$ differing on the value of a record, and all $V\subseteq \range(\mathcal{M})$,
\[Pr(\mathcal{M}(\mathcal{D})\in V)\leq e^{\epsilon}\; Pr(\mathcal{M}(\mathcal{D}^{\prime})\in V) + \delta.\]
\end{definition}
When $\delta=0$, we refer to it as both $\epsilon$-differential privacy and pure differential privacy. When $\delta>0$, we refer to it as both $(\epsilon,\delta)$-differential privacy and approximate differential privacy. Another relaxation of differential privacy is known as zero-mean concentrated differential privacy, or $\rho$-zCDP for short.

\begin{definition}{(Zero-Concentrated Differential Privacy \\ (zCDP)~\cite{bun2016concentrated}).}\label{def:zcdp}
A randomized mechanism $\mathcal{M}$ satisfies $\rho$-zero-concentrated differential privacy (i.e., $\rho$-zCDP) if for all pairs of databases $\mathcal{D}$ and $\mathcal{D}^\prime$ that differ on the value of a single record and all $\alpha\in(1,\infty)$,
\[D_{\alpha}(\mathcal{M}(\mathcal{D})||\mathcal{M}(\mathcal{D}^{\prime}))\leq \rho\alpha,\]
where $D_{\alpha}(\mathcal{M}(\mathcal{D})||\mathcal{M}(\mathcal{D}^{\prime}))$ is the $\alpha$-$R\acute{e}nyi$ divergence between the distribution of $\mathcal{M}(\mathcal{D})$ and $\mathcal{M}(\mathcal{D}^{\prime})$.
\end{definition}

$\rho$-zCDP is weaker than pure differential privacy and stronger than approximate differential privacy. The following results make the relations between them precise.

\begin{proposition}{\cite{bun2016concentrated}}.\label{pro:dptozcdp}
If $\mathcal{M}$ satisfies $\epsilon$-differential privacy, then $\mathcal{M}$ satisfies $(\epsilon^2/2)$-zCDP.
\end{proposition}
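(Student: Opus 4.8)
The plan is to reduce to a single pair of neighboring output distributions and then control the R\'enyi divergence through the moment generating function of the privacy-loss random variable. Fix databases $\mathcal{D},\mathcal{D}'$ that differ in one record and an order $\alpha\in(1,\infty)$, and write $P=\mathcal{M}(\mathcal{D})$ and $Q=\mathcal{M}(\mathcal{D}')$. It suffices to show $D_\alpha(P\|Q)\le\frac{\epsilon^2}{2}\alpha$, since Definition~\ref{def:zcdp} with $\rho=\epsilon^2/2$ is exactly this bound quantified over every neighboring pair and every $\alpha$. First I would introduce the privacy-loss variable $Z=\log\frac{dP}{dQ}$ and record the key translation of the hypothesis: pure $\epsilon$-differential privacy (the $\delta=0$ case of Definition~1) is equivalent to $\lvert Z\rvert\le\epsilon$ almost surely. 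Using the definition of R\'enyi divergence I would rewrite $(\alpha-1)D_\alpha(P\|Q)=\log\mathbb{E}_{P}\!\left[e^{(\alpha-1)Z}\right]$, so the whole problem reduces to bounding a log-MGF of a bounded random variable.

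The second step is to center $Z$. Writing $\mu=\mathbb{E}_P[Z]=D_{KL}(P\|Q)$, I would split
\[
D_\alpha(P\|Q)=\mu+\frac{1}{\alpha-1}\log\mathbb{E}_{P}\!\left[e^{(\alpha-1)(Z-\mu)}\right].
\]
Because $Z$ lies in an interval of width at most $2\epsilon$, Hoeffding's lemma applied to the centered variable with parameter $s=\alpha-1$ gives $\mathbb{E}_{P}[e^{(\alpha-1)(Z-\mu)}]\le e^{(\alpha-1)^2(2\epsilon)^2/8}=e^{(\alpha-1)^2\epsilon^2/2}$, so the fluctuation term contributes at most $\frac{(\alpha-1)\epsilon^2}{2}$.

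It remains to bound the mean $\mu=D_{KL}(P\|Q)$, and this is the step I expect to be the crux. The clean trick is to exploit the normalization $\mathbb{E}_{P}[e^{-Z}]=\mathbb{E}_P[dQ/dP]=1$: applying Hoeffding's lemma a second time, now with $s=-1$, yields $e^{\mu}=\mathbb{E}_P[e^{-(Z-\mu)}]\le e^{(2\epsilon)^2/8}=e^{\epsilon^2/2}$, hence $\mu\le\frac{\epsilon^2}{2}$. Combining the two estimates gives $D_\alpha(P\|Q)\le\frac{\epsilon^2}{2}+\frac{(\alpha-1)\epsilon^2}{2}=\frac{\alpha\epsilon^2}{2}$, which is the desired $\rho\alpha$ bound with $\rho=\epsilon^2/2$, and since $\mathcal{D},\mathcal{D}'$ and $\alpha$ were arbitrary the mechanism is $(\epsilon^2/2)$-zCDP. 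The only remaining care is measure-theoretic: one must check the mutual absolute continuity of $P$ and $Q$ (guaranteed precisely because $\delta=0$) so that $Z$ is well defined and finite, and verify the exact constant in Hoeffding's lemma. These are routine, but they are where a sign error or an off-by-a-factor mistake in the MGF bound would most easily creep in.
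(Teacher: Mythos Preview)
Your argument is correct: the reduction to the bounded privacy-loss variable $Z$, the split into the KL term $\mu$ and the centered log-MGF, and the two applications of Hoeffding's lemma (once with $s=\alpha-1$ and once with $s=-1$ exploiting $\mathbb{E}_P[e^{-Z}]=1$) together give exactly $D_\alpha(P\|Q)\le \alpha\epsilon^2/2$. The measure-theoretic caveat you flag is also the right one, and it is indeed handled by $\delta=0$.

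As for comparison with the paper: there is nothing to compare against. The paper does not prove this proposition; it merely quotes it from Bun and Steinke~\cite{bun2016concentrated} and uses it as a black box (e.g., to convert the $\epsilon$-DP guarantee of objective perturbation into an $\epsilon^2/2$-zCDP guarantee in Theorem~\ref{thm:ciobj}). Your proof is essentially the original argument from that reference, so it is both correct and the standard route.
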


\begin{proposition}{\cite{bun2016concentrated}}.\label{pro:zcdptodp}
If $\mathcal{M}$ satisfies $\rho$-zCDP then it satisfies $(\rho + 2\sqrt{\rho\log(1/\delta)},\delta)$-differential privacy.
\end{proposition}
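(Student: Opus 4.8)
The plan is to translate the Rényi-divergence bound in the definition of $\rho$-zCDP into a tail bound on the \emph{privacy loss random variable}, and then to invoke the standard fact that such a tail bound yields $(\epsilon,\delta)$-differential privacy. Fix a pair of neighboring databases $\mathcal{D}$ and $\mathcal{D}'$, and let $P$ and $Q$ denote the densities of $\mathcal{M}(\mathcal{D})$ and $\mathcal{M}(\mathcal{D}')$. Writing the Rényi divergence as $D_\alpha(P\|Q) = \frac{1}{\alpha-1}\log\int P(x)^\alpha Q(x)^{1-\alpha}\,dx$, the zCDP hypothesis $D_\alpha(P\|Q)\le\rho\alpha$ is equivalent to $\int P(x)^\alpha Q(x)^{1-\alpha}\,dx \le e^{(\alpha-1)\rho\alpha}$ for every $\alpha>1$.

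Next I would introduce the privacy loss $Z = \log\frac{P(X)}{Q(X)}$ with $X\sim P$ and observe that its moment generating function is controlled by exactly these integrals: for any $s>0$, setting $\alpha = s+1$,
\[\mathbb{E}_{X\sim P}\!\left[e^{sZ}\right] = \int P(x)^{s+1}Q(x)^{-s}\,dx \le e^{\rho s(s+1)}.\]
Thus $Z$ behaves like a sub-Gaussian variable with mean and variance proxy both on the order of $\rho$. A Chernoff bound then gives, for any $\epsilon>0$,
\[\Pr_{X\sim P}[Z>\epsilon] \le e^{-s\epsilon}\,\mathbb{E}_{X\sim P}\!\left[e^{sZ}\right] \le \exp\!\left(\rho s^2 + \rho s - s\epsilon\right),\]
and minimizing the exponent over $s>0$ (the optimum is at $s=(\epsilon-\rho)/(2\rho)$, which is positive whenever $\epsilon>\rho$) yields $\Pr_{X\sim P}[Z>\epsilon]\le \exp\!\big({-}(\epsilon-\rho)^2/(4\rho)\big)$. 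Setting this upper bound equal to $\delta$ and solving for $\epsilon$ produces exactly $\epsilon = \rho + 2\sqrt{\rho\log(1/\delta)}$.

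To finish, I would connect this tail bound to the differential privacy inequality. For any measurable set $V$,
\[\Pr(\mathcal{M}(\mathcal{D})\in V) - e^{\epsilon}\Pr(\mathcal{M}(\mathcal{D}')\in V) = \int_V\big(P(x)-e^{\epsilon}Q(x)\big)\,dx \le \int_{\{P>e^{\epsilon}Q\}} P(x)\,dx = \Pr_{X\sim P}[Z>\epsilon] \le \delta,\]
which is the required $(\epsilon,\delta)$-DP guarantee for this ordering of the databases; since the zCDP hypothesis is symmetric in $\mathcal{D}$ and $\mathcal{D}'$, the same argument with $P$ and $Q$ interchanged covers the other direction. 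The main obstacle is not the algebra of the Chernoff optimization, which is routine, but rather the two reduction steps that bracket it: correctly deriving the moment generating function bound from the Rényi-divergence hypothesis via the substitution $\alpha=s+1$, and the final inequality bounding $P(V)-e^{\epsilon}Q(V)$ by the tail probability of the privacy loss. Care must also be taken that the optimizing $s$ lies in the admissible range $(0,\infty)$, equivalently $\alpha\in(1,\infty)$, which holds precisely because $\delta<1$ forces $\epsilon>\rho$.
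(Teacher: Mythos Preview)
The paper does not actually supply its own proof of this proposition: it is stated with a citation to Bun and Steinke~\cite{bun2016concentrated} and used as a black box, so there is nothing in the paper to compare your argument against. That said, your proof is correct and is essentially the argument in the cited reference --- pass from the R\'enyi bound to a moment generating function bound on the privacy loss via the substitution $\alpha=s+1$, optimize the Chernoff exponent, and convert the resulting tail bound on the privacy loss into the $(\epsilon,\delta)$-DP inequality.
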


Thus, we only focus on pure differential privacy and $\rho$-zCDP in this paper. All $\rho$-zCDP algorithms can be converted into algorithms for approximate differential privacy using Proposition \ref{pro:zcdptodp}.

The algorithms studied in this paper rely on the concept of $L_2$ sensitivity:
\begin{definition}{($L_2$-Sensitivity~\cite{chaudhuri2011differentially,bun2016concentrated}).}
The $L_2$-sensitivity for a (scalar- or vector-valued) function $f$ is
\[\Delta_2(f)=\max_{\mathcal{D},\mathcal{D}^{\prime}}\|f(\mathcal{D})-f(\mathcal{D}^{\prime})\|_2\]
for all pairs of databases $\mathcal{D},\mathcal{D}^{\prime}\in\domain(f)$ differing on the value of at most one entry.
\end{definition}

For example, the $L_2$ sensitivity is used to set the variance of the Gaussian Mechanism for $\rho$-zCDP.
\begin{proposition}{(Gaussian Mechanism \cite{bun2016concentrated}).}\label{pro:gaussmechanism}
Let $f$ be a vector-valued function (whose output is a vector of dimension $d$) with $L_2$ sensitivity $\Delta_2(f)$. Let $\sigma=\Delta_2(f)/\sqrt{2\rho}$. The Gaussian Mechanism, which outputs $f(\mathcal{D}) + N(\vec{0}, \sigma^2\mathbf{I}_d)$, satisfies $\rho$-zCDP.
\end{proposition}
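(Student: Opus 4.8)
The plan is to reduce the statement to a closed-form computation of the R\'enyi divergence between two multivariate Gaussian distributions that share a common covariance matrix. First I would observe that, since the mechanism outputs $f(\mathcal{D}) + N(\vec{0}, \sigma^2\mathbf{I}_d)$, adding the constant vector $f(\mathcal{D})$ merely shifts the mean, so $\mathcal{M}(\mathcal{D})$ is distributed as $N(f(\mathcal{D}), \sigma^2\mathbf{I}_d)$ and likewise $\mathcal{M}(\mathcal{D}')$ is distributed as $N(f(\mathcal{D}'), \sigma^2\mathbf{I}_d)$. Thus the two output distributions differ only in their means, with a common isotropic covariance $\sigma^2\mathbf{I}_d$, and the task becomes bounding $D_\alpha\bigl(N(f(\mathcal{D}), \sigma^2\mathbf{I}_d)\,\|\,N(f(\mathcal{D}'), \sigma^2\mathbf{I}_d)\bigr)$ uniformly over all $\alpha\in(1,\infty)$ and all adjacent $\mathcal{D}, \mathcal{D}'$.

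The central step is to derive the closed form for this divergence. Starting from the definition $D_\alpha(P\|Q) = \frac{1}{\alpha-1}\log\int p(\vec{x})^\alpha q(\vec{x})^{1-\alpha}\,d\vec{x}$, I would substitute the two Gaussian densities. Because both densities share the same covariance, the normalizing constant $(2\pi\sigma^2)^{-d/2}$ survives the exponents $\alpha + (1-\alpha) = 1$, so only the quadratic forms in the exponent must be combined. Completing the square in $\vec{x}$ for the expression $-\frac{\alpha}{2\sigma^2}\norm{\vec{x} - f(\mathcal{D})}_2^2 - \frac{1-\alpha}{2\sigma^2}\norm{\vec{x} - f(\mathcal{D}')}_2^2$ collapses the integral to a Gaussian integral whose value cancels the surviving normalizing constant, leaving a residual cross term. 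Carrying out this standard computation yields
\[ D_\alpha\bigl(N(f(\mathcal{D}), \sigma^2\mathbf{I}_d)\,\|\,N(f(\mathcal{D}'), \sigma^2\mathbf{I}_d)\bigr) = \frac{\alpha\,\norm{f(\mathcal{D}) - f(\mathcal{D}')}_2^2}{2\sigma^2}. \]

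With this formula in hand, the remainder is immediate. By the definition of $L_2$ sensitivity, $\norm{f(\mathcal{D}) - f(\mathcal{D}')}_2 \le \Delta_2(f)$ for every pair of adjacent databases, so the divergence is at most $\frac{\alpha\,\Delta_2(f)^2}{2\sigma^2}$. Substituting the prescribed $\sigma = \Delta_2(f)/\sqrt{2\rho}$, equivalently $\sigma^2 = \Delta_2(f)^2/(2\rho)$, makes the factor $\Delta_2(f)^2/(2\sigma^2)$ equal to $\rho$, so the bound becomes $\rho\alpha$, which holds for all $\alpha\in(1,\infty)$ and all adjacent $\mathcal{D}, \mathcal{D}'$ — exactly the condition required by Definition~\ref{def:zcdp}.

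I expect the main obstacle to be the Gaussian integral in the middle step, specifically keeping track of the completing-the-square algebra so that the $\alpha(1-\alpha)$ coefficient on the cross term and the $\frac{1}{\alpha-1}$ prefactor combine correctly to produce the clean $\frac{\alpha}{2}$ coefficient. The isotropy of the covariance helps here: the integral factorizes across the $d$ coordinates, so it suffices to perform the one-dimensional computation and multiply, reducing the multivariate case to a scalar Gaussian integral.
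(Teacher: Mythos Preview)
Your argument is correct and is precisely the standard derivation: the paper does not supply its own proof of this proposition but simply cites \cite{bun2016concentrated}, where the result is established via exactly this closed-form R\'enyi divergence computation between mean-shifted isotropic Gaussians. There is nothing to add.
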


Both differential privacy and $\rho$-zCDP are invariant under post-processing \cite{dwork2006calibrating,bun2016concentrated}. That is, if a mechanism $\mathcal{M}$ satisfies $\epsilon$-differential  privacy (resp., $\rho$-zCDP), and if $A$ is any algorithm whose input is the output of $\mathcal{M}$, then the composite algorithm, which first runs $\mathcal{M}$ on the input data and then runs $A$ on the result satisfies $\epsilon$-differential privacy (resp., $\rho$-zCDP).


Another useful property of these definitions is \emph{composition}, which allows the privacy parameter of a complicated algorithm be derived from the privacy parameters of its sub-components.

\begin{proposition}{(Composition \cite{dwork2006calibrating,bun2016concentrated}).}
Let $\mathcal{M}$ be a randomized mechanism that satisfies $\epsilon$-differential privacy (resp., $\rho$-zCDP) and $\mathcal{M}^{\prime}$ be a randomized mechanism that satisfies $\epsilon^{\prime}$-differential privacy (resp.,  $\rho^{\prime}$-zCDP). Then the composite algorithm $\mathcal{M}^*$ that, on input $\mathcal{D}$ outputs the tuple $(\mathcal{M}(\mathcal{D}), \mathcal{M}(\mathcal{D}^\prime))$  satisfies $(\epsilon+\epsilon^{\prime})$-differential privacy (resp., $(\rho+\rho^{\prime})$-zCDP).
\end{proposition}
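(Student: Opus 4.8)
The plan is to treat the two privacy notions separately, exploiting in both cases the fact that the composite mechanism $\mathcal{M}^*$ runs its two components on \emph{independent} random coins, so that the output distribution of $\mathcal{M}^*(\mathcal{D})$ is the product of the marginal distributions of $\mathcal{M}(\mathcal{D})$ and $\mathcal{M}'(\mathcal{D})$. Throughout, fix an arbitrary pair of neighboring databases $\mathcal{D},\mathcal{D}'$ differing on a single record. (Here I read the composite output as $(\mathcal{M}(\mathcal{D}),\mathcal{M}'(\mathcal{D}))$, both run on the same input $\mathcal{D}$.)

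For pure $\epsilon$-differential privacy I would argue pointwise on the product output space. Writing the joint density (or probability mass) of $\mathcal{M}^*(\mathcal{D})$ at a point $(v,v')$ as the product of the two marginals by independence, I apply the $\epsilon$-guarantee of $\mathcal{M}$ to the first factor and the $\epsilon'$-guarantee of $\mathcal{M}'$ to the second:
\[ p_{\mathcal{M}(\mathcal{D})}(v)\, p_{\mathcal{M}'(\mathcal{D})}(v') \le e^{\epsilon} p_{\mathcal{M}(\mathcal{D}')}(v)\cdot e^{\epsilon'} p_{\mathcal{M}'(\mathcal{D}')}(v') = e^{\epsilon+\epsilon'}\, p_{\mathcal{M}(\mathcal{D}')}(v)\, p_{\mathcal{M}'(\mathcal{D}')}(v'). \]
Integrating (or summing) this inequality over any measurable $V\subseteq\range(\mathcal{M}^*)$ gives $\Pr(\mathcal{M}^*(\mathcal{D})\in V)\le e^{\epsilon+\epsilon'}\Pr(\mathcal{M}^*(\mathcal{D}')\in V)$, which is exactly $(\epsilon+\epsilon')$-differential privacy.

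For $\rho$-zCDP the key structural fact is the \emph{additivity of the $\alpha$-R\'enyi divergence over product measures}: if $P=P_1\times P_2$ and $Q=Q_1\times Q_2$, then $D_\alpha(P\|Q)=D_\alpha(P_1\|Q_1)+D_\alpha(P_2\|Q_2)$. Applying this with $P$ the law of $\mathcal{M}^*(\mathcal{D})$ and $Q$ the law of $\mathcal{M}^*(\mathcal{D}')$, and invoking Definition~\ref{def:zcdp} for each component, yields for every $\alpha\in(1,\infty)$
\[ D_\alpha\!\big(\mathcal{M}^*(\mathcal{D})\,\|\,\mathcal{M}^*(\mathcal{D}')\big) = D_\alpha\!\big(\mathcal{M}(\mathcal{D})\,\|\,\mathcal{M}(\mathcal{D}')\big) + D_\alpha\!\big(\mathcal{M}'(\mathcal{D})\,\|\,\mathcal{M}'(\mathcal{D}')\big) \le \rho\alpha + \rho'\alpha = (\rho+\rho')\alpha, \]
which establishes $(\rho+\rho')$-zCDP.

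The main obstacle is justifying the factorization/additivity step, namely that the two component outputs are genuinely independent so that the joint law is a product measure; this is where the implicit assumption of fresh, independent randomness for $\mathcal{M}$ and $\mathcal{M}'$ is essential, and it is the only nontrivial ingredient. Granting independence, the additivity of R\'enyi divergence over products follows by applying Fubini's theorem to the defining integral $\int (dP/dQ)^\alpha\, dQ$, whose integrand factors across the two coordinates. Everything else reduces to a routine pointwise bound followed by integration, so I expect no further difficulty.
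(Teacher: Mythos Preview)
Your argument is correct and is the standard proof of this composition result. Note, however, that the paper itself does not supply a proof of this proposition: it is stated in the preliminaries as a known fact imported from \cite{dwork2006calibrating,bun2016concentrated}, so there is no ``paper's own proof'' to compare against. Your treatment---factorizing the joint density by independence of the internal coins and then either multiplying the two pointwise $e^{\epsilon}$ bounds (for pure DP) or invoking additivity of R\'enyi divergence over product measures (for zCDP)---is exactly the argument given in those references.
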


\subsection{Empirical Risk Minimization}\label{subsec:reviewerm}
Empirical risk minimization is a common way of training data mining models. There is an assumption that the dataset $\mathcal{D}=\{(\vec{x}_1,y_1), \dots, (\vec{x}_n,y_n)\}$ is independently sampled 
 from some unknown distribution $F_0$. In this setting, the model has a parameter vector $\theta$ and a prediction function $g$. Its prediction for $y$ is $g(\vec{x},\theta)$. 
 
 To train the model, in the setting assumed by Chaudhuri et al. \cite{chaudhuri2011differentially}, one specifies a loss function in the form of $f(\vec{x},y,\theta) = f(y\theta^T\cdot\vec{x})$, and finds the $\theta$ that minimizes the empirical risk:
\begin{equation}\label{eq:nperm}
\hat{\theta}=\arg\min_{\theta}\frac{1}{n}\sum_{i=1}^n\left[f(\vec{x}_i,y_i,\theta)+c\|\theta\|_2^2\right].
\end{equation}

To satisfy differential privacy, Chaudhuri et al.  \cite{chaudhuri2011differentially}, proposed the \emph{objective perturbation technique} to add noise to the objective function and then produce minimizer of the perturbed objective:
\begin{align}
&\arg\min_\theta J_n(\theta,\mathcal{D}) = \arg\min_\theta \left[L_n(\theta,\mathcal{D})+\frac{1}{n}\beta^T\theta\right]\nonumber\\
\equiv & \arg\min_\theta \left(\frac{1}{n}\sum\limits_{i=1}^n \left[f(\vec{x}_i,y_i,\theta) + c||\theta||_2^2\right] + \frac{1}{n}\beta^T\theta\right),\label{eq:privateerm}
\end{align}
where $\beta$ is a zero-mean random variable with density
\begin{equation}\label{eq:noisedist}
\mathtt{v}(\beta)=\frac{1}{u}e^{-\gamma\|\beta\|_2},
\end{equation}
where $u$ is the normalizing constant, and $\gamma$ depends on the privacy budget and the $L_2$-sensitivity of $L_n(\cdot)$.

Their proof of privacy depends on the concept of strong convexity:
\begin{definition}{(Strong-Convexity).}
A function $f(\theta)$ over $\theta\in\mathbb{R}^d$ is said to be $\lambda$-strongly convex if for all $\alpha\in(0,1)$, $\theta$ and $\eta$,
\[f(\alpha\theta+(1-\alpha)\eta)\leq\alpha f(\theta)+(1-\alpha)f(\eta)-\frac{1}{2}\lambda\alpha(1-\alpha)\|\theta-\eta\|_2^2.\]
\end{definition}


\subsection{Logistic Regression and SVM}\label{subsec:reviewregression}
In the paper, we will work with the applications of logistic regression and support vector machines (SVM)\footnote{We use these two applications as examples, but our algorithms are not restricted to them.}.
In logistic regression, the goal is to predict $P(y=1~|~\vec{x})$ and this is done by modeling $P(Y=1~|~\vec{x})=S(\theta\cdot\vec{x})$, where $S$ is the sigmoid function:
\[S(z)=\frac{1}{1+\exp{(-z)}}=\frac{\exp{(z)}}{1+\exp{(z)}},\]
Logistic regression is trained in the ERM framework using the loss function
\[f(\vec{x},y,\theta)=\log{[1+ \exp{(-y\theta\cdot\vec{x})}]}.\]

In support vector machines, the prediction for $y$ is 1 if $\theta\cdot\vec{x}\geq 0$ and is $-1$ otherwise. To train it in the ERM framework, we will use the Huberized hinge Loss \cite{chapelle2007training}, defined as follows:
\begin{equation*}
f_{Huber}(\vec{x},y,\theta)=\begin{cases}
0 & \text{if $z>1+h$}\\
\frac{1}{4h}(1+h-z)^2 & \text{if $|1-z|\leq h$}\\
1-z & \text{if $z<1-h$,}
\end{cases}
\end{equation*}
where $z=y\theta\cdot \vec{x}$ and where $h$ is a fixed constant \cite{chapelle2007training}.

\section{Confidence Intervals for Objective Perturbation}\label{sec:obj}
In this section, we show how to obtain confidence intervals for models trained by objective perturbation \cite{chaudhuri2011differentially}. For completeness, we present a slightly improved version of the algorithm in Section \ref{subsec:obj} and then derive the confidence interval algorithm in Sections \ref{subsec:foundation}, \ref{subsec:hessiancov}, and \ref{subsec:generation}.


\subsection{Objective Perturbation}\label{subsec:obj}
The objective perturbation algorithm modifies the ERM framework by randomly drawing a noise vector $\beta$ from a spherical version of the Laplace distribution (see Equation~\ref{eq:noisedist}). Then, instead of minimizing the original ERM objective $\frac{1}{n}\sum\limits_{i=1}^n \left[f(\vec{x}_i,y_i,\theta) + c||\theta||_2^2\right]$, it modifies it by adding  $\frac{1}{n}\beta^T\theta$ and then minimizes it with respect to $\theta$. The version of the techniques shown in Algorithm \ref{alg:improvederm} slightly differs from the original \cite{chaudhuri2011differentially} in the first line, allowing it to use less noise.


\setlength{\algoheightrule}{0pt}
\setlength{\algotitleheightrule}{0pt}
\begin{algorithm}[h]
 \LinesNumbered
 \DontPrintSemicolon
 \SetKwInOut{Input}{input}
 \SetKwInOut{Output}{output}
 \begin{lrbox}{\mybox}
 \begin{minipage}{\hsize}
 \caption{Objective Perturbation} \label{alg:improvederm}
 \Input{Data $\mathcal{D}=\{(\vec{x}_i,y_i)\}_{i=1}^n$, privacy budget $\epsilon$, loss function $f$ with $|f^{\prime\prime}(\cdot)|\leq t$, coefficient $c$ with $c\geq \frac{t}{2n(e^{\epsilon}-1)}$}
 $\epsilon^{\prime}\leftarrow \epsilon-\log{\left(1+\frac{t}{2nc}\right)}$\;
%
 Sample a $d$-dimensional vector $\beta$ with density from Equation~\ref{eq:noisedist} with $\gamma=\epsilon^{\prime}/2$\;
$\tilde{\theta}\gets \arg\min_\theta \left(\frac{1}{n}\sum\limits_{i=1}^n \left[f(\vec{x}_i,y_i,\theta) + c||\theta||_2^2\right] + \frac{1}{n}\beta^T\theta\right)$\;
 Output $\tilde{\theta}$
 \end{minipage}%
 \end{lrbox}
 \hspace*{-10pt}\framebox[\columnwidth]{\hspace*{15pt}\usebox\mybox\par}
\end{algorithm}

\begin{restatable}{theorem}{thmerm}
\label{thm:erm2}
If the loss function $f(\cdot)$ is convex and doubly differentiable, with $|f^{\prime}(\cdot)|\leq 1$ and $|f^{\prime\prime}(\cdot)|\leq t$, then Algorithm~\ref{alg:improvederm} satisfies $\epsilon$-differential privacy whenever all the feature vectors $\vec{x}_i$ have $||\vec{x}_i||_2\leq 1$.
\end{restatable}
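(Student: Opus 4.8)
The plan is to bound, for every pair of neighboring databases $\mathcal{D},\mathcal{D}'$ and every candidate output $\theta$, the ratio of output densities $p_{\mathcal{D}}(\theta)/p_{\mathcal{D}'}(\theta)$ by $e^{\epsilon}$. The starting point is that, for a fixed database, the zero-gradient optimality condition for the strongly convex objective in line~3 sets up a bijection between the noise vector $\beta$ and the minimizer $\theta$. Differentiating $J_n$ and clearing the $\frac1n$ factors, the minimizer $\theta$ and the noise $\beta$ that produces it are related by $\beta = -\sum_{i}\nabla_\theta f(\vec{x}_i,y_i,\theta) - 2cn\,\theta$; note that the regularization term, which carries no $\frac1n$ after the sum collapses, contributes $2cn\,\theta$ once denominators are cleared, and this is the source of the $n$ in the final constant. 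I would first argue that this map is a $C^1$ bijection (strong convexity of $J_n$ with parameter $2c$ gives uniqueness of the minimizer, and smoothness of $f$ lets me invoke the change-of-variables formula), so that $p_{\mathcal{D}}(\theta) = \mathtt{v}\bigl(\beta_{\mathcal{D}}(\theta)\bigr)\,\lvert\det J_{\mathcal{D}}(\theta)\rvert$, where $\beta_{\mathcal{D}}(\theta)$ is the noise reproducing $\theta$ on $\mathcal{D}$ and $J_{\mathcal{D}}=\partial\beta/\partial\theta$.

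This factors the density ratio into a noise-density ratio and a Jacobian-determinant ratio, which I would bound separately. For the noise term, since the two databases differ in a single record the two regularization contributions $2cn\,\theta$ cancel, so $\beta_{\mathcal{D}}(\theta)-\beta_{\mathcal{D}'}(\theta)$ is the difference of the gradients at the two differing records. Using $\nabla_\theta f(\vec{x},y,\theta)=f'(y\theta^T\vec{x})\,y\vec{x}$ together with $|f'|\le 1$, $|y|=1$, and $\|\vec{x}\|_2\le 1$, each gradient has norm at most $1$, so $\|\beta_{\mathcal{D}}(\theta)-\beta_{\mathcal{D}'}(\theta)\|_2\le 2$; the triangle inequality applied to the spherical-Laplace density then yields a noise ratio of at most $e^{2\gamma}=e^{\epsilon'}$.

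The Jacobian determinant is $\lvert\det J_{\mathcal{D}}\rvert = \det\bigl(A_{\mathcal{D}}+2cn\,\mathbf{I}\bigr)$, where $A_{\mathcal{D}}=\sum_i f''(z_i)\vec{x}_i\vec{x}_i^T\succeq 0$ with $z_i=y_i\theta^T\vec{x}_i$ (using $y_i^2=1$ in the Hessian), so writing $B=A_{\mathcal{D}'}+2cn\,\mathbf{I}$ and $E=A_{\mathcal{D}}-A_{\mathcal{D}'}$ the ratio becomes $\det(I+B^{-1}E)=\det(I+B^{-1/2}EB^{-1/2})$. I expect this to be the crux of the argument, and it is where the improved constant comes from. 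The perturbation $E=f''(z_k)\vec{x}_k\vec{x}_k^T-f''(z_k')\vec{x}_k'\vec{x}_k'^T$ is a difference of two rank-one positive semidefinite matrices, hence has rank at most two with exactly one nonnegative and one nonpositive eigenvalue; by Sylvester's law of inertia the symmetric matrix $B^{-1/2}EB^{-1/2}$ inherits this signature, so its nonzero eigenvalues $\nu_1\ge 0\ge\nu_2$ give $\det(I+B^{-1/2}EB^{-1/2})=(1+\nu_1)(1+\nu_2)\le 1+\nu_1$. Dropping the $(1+\nu_2)$ factor, rather than bounding both factors as a naive rank-two estimate would, is precisely what turns a quadratic bound into a linear one. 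It then remains to control $\nu_1$: since $E\preceq f''(z_k)\vec{x}_k\vec{x}_k^T$ and $B\succeq 2cn\,\mathbf{I}$, monotonicity gives $\nu_1\le f''(z_k)\,\vec{x}_k^T B^{-1}\vec{x}_k\le \tfrac{t}{2nc}$, so the determinant ratio is at most $1+\tfrac{t}{2nc}$.

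Combining the two bounds, the density ratio is at most $e^{\epsilon'}\bigl(1+\tfrac{t}{2nc}\bigr)$, and substituting $\epsilon'=\epsilon-\log\!\bigl(1+\tfrac{t}{2nc}\bigr)$ collapses this exactly to $e^{\epsilon}$; the hypothesis $c\ge \tfrac{t}{2n(e^{\epsilon}-1)}$ is exactly what guarantees $\epsilon'>0$ so that the noise distribution is well defined. Since the roles of $\mathcal{D}$ and $\mathcal{D}'$ are interchangeable and every estimate used only the uniform constraints $|f'|\le 1$, $0\le f''\le t$, and $\|\vec{x}\|_2\le 1$, the same bound holds for all ordered neighboring pairs, establishing $\epsilon$-differential privacy.
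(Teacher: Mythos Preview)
Your proof is correct and follows essentially the same approach as the paper: set up the change-of-variables bijection $\theta\mapsto\beta$ from the first-order optimality condition, split the density ratio into a noise-density factor (bounded by $e^{\epsilon'}$ via $\|\beta_{\mathcal{D}}-\beta_{\mathcal{D}'}\|_2\le 2$) and a Jacobian-determinant factor (bounded by $1+\tfrac{t}{2nc}$), and combine. The only cosmetic difference is in the Jacobian step: the paper factors out the common part $A$ of the two Hessians and applies the matrix determinant lemma to each rank-one update, obtaining the ratio $\frac{1+\vec{v}^TA^{-1}\vec{v}}{1+\vec{u}^TA^{-1}\vec{u}}\le 1+\vec{v}^TA^{-1}\vec{v}$, whereas you work with the full perturbation $E$ and invoke Sylvester's inertia to drop the $(1+\nu_2)\le 1$ factor; both routes exploit the same rank-one-plus/rank-one-minus structure and arrive at the identical bound.
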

The proof of Theorem~\ref{thm:erm2} is in \ConfOrTech{the online Appendix~\ref{app:erm2}}{Appendix~\ref{app:erm2}}.

In order to achieve $\rho$-zCDP, we use Proposition \ref{pro:dptozcdp} to conclude that the algorithm satisfies $\frac{\epsilon^2}{2}$-zCDP.


\subsection{Confidence Intervals}\label{subsec:foundation}
In this section, we describe one of our main contributions -- the construction of confidence intervals for objective perturbation.
Set $J_n(\theta) = \frac{1}{n}\sum\limits_{i=1}^n \left[f(\vec{x}_i,y_i,\theta) + c||\theta||_2^2\right] + \frac{1}{n}\beta^T\theta.$

 Let  $\tilde{\theta}$ be the privacy preserving parameters output by the objective perturbation algorithm. Let $\theta_0$ be the non-private solution we would get if we had infinite data (i.e. the true parameter vector). Since the noise in Algorithm \ref{alg:improvederm} is divided by $n$, then $\theta_0$ is also the privacy-preserving solution one would obtain with infinite data and $E[\nabla J_n(\theta_0, \mathcal{D})]=\vec{0}$, where the expectation is taken over the data and $\beta$ (note that $\beta$ has $\vec{0}$ mean).

Expanding the Taylor series of $J_n$ around $\tilde{\theta}$ and noting that the gradient of $J_n$ at $\tilde{\theta}$ is 0 by construction (since $\tilde{\theta}$ minimizes $J_n$), we have
\begin{align*}
\nabla J_n(\theta_0) &\approx \nabla J_n(\tilde{\theta}) + H[J_n(\tilde{\theta})](\theta_0 - \tilde{\theta})\\
&= H[J_n(\tilde{\theta})](\theta_0 - \tilde{\theta}),
\end{align*}
where $H[J_n(\tilde{\theta})]$ is the Hessian (matrix of second derivatives) of $J_n$ evaluated at $\tilde{\theta}$.

Now, $\nabla J_n(\theta_0)$ is equal to $\frac{1}{n}\beta^T$ plus the average $n$ terms -- one for each $\vec{x}_i$. This means that by the Central Limit Theorem, $\sqrt{n} \nabla J_n(\theta_0)$ can be approximated by the sum of $\frac{1}{\sqrt{n}}\beta^T$ and $N(\vec{0}, \Sigma)$, where $N(\vec{0},\Sigma)$ is a zero-mean Gaussian with covariance matrix:
\[\Sigma = E\left[\left(\nabla \Big(f(\vec{x}, y,\theta_0) + c||\theta_0||_2^2\Big)\right)\left(\nabla \Big(f(\vec{x},y, \theta_0) + c||\theta_0||_2^2\Big)\right)^T\right].\]

If the Hessian and covariance matrices were known, we could combine the two approximations for $\nabla J_n(\theta_0)$ as follows. Let $G$ be a random variable with distribution  $N(\vec{0}, \Sigma)$. Then
%
\begin{align*}
\sqrt{n}H[J_n(\tilde{\theta})](\theta_0-\tilde{\theta}) &\approx G+\tilde{\beta}/\sqrt{n},
\end{align*}
where $\tilde{\beta}$ is a fresh random variable that follows the same distribution as $\beta$.
However, since the Hessian and the covariance matrix of $G$, are unknown, we will need to obtain privacy preserving estimates $\tilde{H}$ and  $\tilde{G}$.
Substituting these privacy-preserving estimates and performing simple algebra, we obtain:
\begin{equation}\label{eq:objsample}
\theta_0-\tilde{\theta}\approx \tilde{H}[J_n(\tilde{\theta})]^{-1}(\tilde{G}+\tilde{\beta}/\sqrt{n})/\sqrt{n}
\end{equation}
(where $\tilde{G}=N(\vec{0},\tilde{\Sigma})$).
We next discuss how to estimate the Hessian and covariance matrix and how to use them with Equation \ref{eq:objsample} to produce confidence intervals for each element of $\theta_0$.

\subsection{Computations of the Hessian and Covariance Matrix}\label{subsec:hessiancov}

If privacy was not a concern, 
the Hessian would be computed as:
\begin{equation}\label{eq:hessian}
H[J_n(\theta)]=\frac{1}{n}\sum_{i=1}^n H[f(\vec{x}_i,y_i,\theta)]+2c\mathbf{I},
\end{equation}
and the covariance matrix $\Sigma$ would be estimated as:
\begin{align}
\Sigma =& E\left[\left(\nabla \Big(f(\vec{x},y, \theta_0) + c||\theta_0||_2^2\Big)\right)\left(\nabla \Big(f(\vec{x},y, \theta_0) + c||\theta_0||_2^2\Big)\right)^T\right]\nonumber\\
=& E\left[\left(\nabla f(\vec{x},y,\theta_0)+2c\theta_0\right)\left(\nabla f(\vec{x},y,\theta_0)+2c\theta_0\right)^T\right]\nonumber\\
=& E\left\{\nabla(f(\vec{x},y,\theta_0))[\nabla f(\vec{x},y,\theta_0)]^T\right\}+2cE[\nabla f(\vec{x},y,\theta_0)]\theta_0^T\nonumber\\
&+2c\theta_0E[\nabla f(\vec{x},y,\theta_0)]^T+4c^2\theta_0\theta_0^T\nonumber\\
=& E\left\{\nabla(f(\vec{x},y,\theta_0))[\nabla f(\vec{x},y,\theta_0)]^T\right\}-4c^2\theta_0\theta_0^T\nonumber\\
\approx&\frac{1}{n}\sum_{i=1}^n\nabla f(\vec{x}_i,y_i,\tilde{\theta})[\nabla f(\vec{x}_i,y_i,\tilde{\theta})]^T -4c^2\tilde{\theta}\tilde{\theta}^T
,\label{eq:cov}
\end{align}
where second-to-last step is obtained from the fact that $E[\nabla J_n(\theta_0)]=\vec{0}$ from which it follows that $E[\nabla f(\vec{x},y,\theta_0)]+2c\theta_0=\vec{0}$.


However, since privacy is indeed a concern, we need to obtain estimates of the Hessian and covariance matrix using either $\epsilon$-differential privacy or $\rho$-zCDP. The same algorithm works for both matrices and is shown in Algorithm \ref{alg:privspdmat}. 

The algorithm takes the matrix $M$, which is either the Hessian (computed as in Equation~\ref{eq:hessian}) or the covariance matrix (computed as in Equation~\ref{eq:cov}). It also takes the $L_2$ sensitivity of these matrices (we show how to compute the sensitivities for logistic regression and SVM in Section \ref{sec:applications}). It uses the $L_2$ sensitivity to determine the variance of the noise that must be added.\footnote{ 
Since the data are spherical (i.e. $||\vec{x}_i||_2=1$), it turns out that using the $L_2$ sensitivity is most appropriate.} The distribution of this noise depends on whether we want to use differential privacy or zCDP.

These resulting noisy matrices might not be symmetric positive-semidefinite (even though the Hessian and covariance matrices must have those properties). Thus we add a postprocessing step to make the matrix symmetric and have all eigenvalues at least $2c$.


\setlength{\algoheightrule}{0pt}
\setlength{\algotitleheightrule}{0pt}
\begin{algorithm}[h]
 \LinesNumbered
 \DontPrintSemicolon
 \SetKwInOut{Input}{input}
 \SetKwInOut{Output}{output}
 \begin{lrbox}{\mybox}
 \begin{minipage}{\hsize}
 \caption{Private Symmetric Positive Definite Matrix (PrivSPDMat)} \label{alg:privspdmat}
 \Input{Matrix $M \in \mathbb{R}_{d\times d}$, $L_2$ sensitivity $Sens(M)$, privacy budget $\phi$, parameter $c$}
 \uIf{Requiring $\epsilon$-differential privacy with $\epsilon=\phi$}{
 Sample a $d^2$-dimensional vector $\eta$ with density from Equation~\ref{eq:noisedist} with $\gamma=\frac{\phi}{Sens(M)}$\;
 }
 \Else{
 \tcp{ for $\rho$-zCDP with $\rho=\phi$}
 Sample a noise vector $\eta$ from $N\left(\vec{0}, \frac{Sens(M)^2}{2\phi}\mathbf{I}_{d}\right)$\;
 }
 Reshape $\eta$ to a $d\times d$ matrix $\tomat{(\eta)}$\;
 $\tilde{M}\leftarrow M+\tomat{(\eta)}$\;
 $\tilde{M}\leftarrow (\tilde{M}+\tilde{M}^T)/2$\;
 Let $V\diag{(\Lambda)}=\tilde{M}V$ be the eigen-decomposition for $\tilde{M}$\;
 \tcp{columns of $V$ are orthonormal eigenvectors}
 \For{$i\gets 1$ \KwTo $d$}{
 $\Lambda[i]\leftarrow \max{(\Lambda[i], 2c)}$\;
 }
 $\tilde{M}\leftarrow V\diag{(\Lambda)}V^{T}$\;
 Output $\tilde{M}$\;
 \end{minipage}%
 \end{lrbox}
 \hspace*{-10pt}\framebox[\columnwidth]{\hspace*{15pt}\usebox\mybox\par}
\end{algorithm}

\begin{restatable}{lemma}{privspdmat}
\label{lem:privspdmat}
Algorithm~\ref{alg:privspdmat} satisfies $\phi$-differential privacy and $\phi$-zCDP.
\end{restatable}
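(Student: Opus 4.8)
The plan is to observe that Algorithm~\ref{alg:privspdmat} contains exactly one data-dependent randomized step --- the addition of the noise vector $\eta$ to the vectorization of $M$ --- followed only by deterministic operations: reshaping, symmetrization $\tilde{M}\leftarrow(\tilde{M}+\tilde{M}^T)/2$, and the eigenvalue clamping in the for-loop. Since the regularization parameter $c$ is a public constant and the eigendecomposition is a deterministic function of the noisy matrix, all three of these latter operations are pure post-processing of $M+\tomat(\eta)$. Because both $\epsilon$-differential privacy and $\rho$-zCDP are invariant under post-processing (Section~\ref{subsec:reviewdp}), it suffices to prove that the noise-addition step alone achieves the claimed guarantee in each of the two branches; the post-processing invariance then carries the guarantee through to the final output.

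For the zCDP branch, I would invoke the Gaussian Mechanism of Proposition~\ref{pro:gaussmechanism}, treating the $d\times d$ matrix $M$ as a vector-valued function of the data (of dimension $d^2$) with $L_2$ sensitivity $Sens(M)$. The algorithm samples $\eta\sim N(\vec{0},\sigma^2\mathbf{I})$ with $\sigma^2 = Sens(M)^2/(2\phi)$, i.e. $\sigma = Sens(M)/\sqrt{2\phi}$, which is precisely the standard deviation prescribed by Proposition~\ref{pro:gaussmechanism} with sensitivity $Sens(M)$ and privacy parameter $\rho = \phi$. Hence $M+\tomat(\eta)$ already satisfies $\phi$-zCDP, and so does the final output.

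For the pure differential privacy branch, I would argue directly from the density in Equation~\ref{eq:noisedist}. Writing $z = M+\tomat(\eta)$ for the noisy matrix (again identified with its $d^2$-dimensional vectorization), the output density at $z$ given input $M$ is proportional to $\exp(-\gamma\|z-M\|_2)$ with $\gamma = \phi/Sens(M)$; crucially the normalizing constant $u$ depends only on $\gamma$ and the dimension, not on $M$, so it cancels in ratios. For neighboring inputs $M$ and $M'$, the ratio of output densities at any $z$ equals $\exp\!\big(\gamma(\|z-M'\|_2 - \|z-M\|_2)\big)$, and the triangle inequality gives $\|z-M'\|_2 - \|z-M\|_2 \le \|M-M'\|_2 \le Sens(M)$, so the ratio is bounded by $\exp(\gamma\, Sens(M)) = e^{\phi}$. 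Integrating this pointwise bound over any measurable event yields $\phi$-differential privacy (with $\delta=0$) for the noise step, and post-processing finishes the argument.

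The two parameter-matching computations are routine. The only step demanding genuine care is the density-ratio bound for the spherical Laplace branch, where I must confirm that $Sens(M)$ is genuinely the $L_2$ sensitivity of the full vectorized matrix (so neighboring matrices differ by at most $Sens(M)$ in $L_2$ norm) and that the triangle-inequality bound holds uniformly in $z$. A secondary point worth spelling out is that symmetrization and eigenvalue clamping are legitimately post-processing even though the true $M$ is already symmetric: because independent noise is injected into all $d^2$ entries, the sensitivity and noise-injection argument refers to the full matrix, and the deterministic repair steps introduce no further dependence on the raw data, so no tighter analysis is required for correctness.
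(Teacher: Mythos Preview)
Your proposal is correct and follows essentially the same approach as the paper: isolate the single data-dependent step (adding $\eta$ to $M$), invoke Proposition~\ref{pro:gaussmechanism} for the zCDP branch, bound the spherical-Laplace density ratio via the triangle inequality for the DP branch, and appeal to post-processing for the remaining deterministic steps. The paper's proof is terser but structurally identical; your additional remarks about the normalizing constant cancelling and about symmetrization/eigenvalue clamping being genuine post-processing are sound elaborations that the paper leaves implicit.
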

The proof of Lemma~\ref{lem:privspdmat} is in \ConfOrTech{the online Appendix~\ref{app:privspdmat}}{Appendix~\ref{app:privspdmat}}.

\subsection{Putting It All Together: Confidence Intervals Generation}\label{subsec:generation}
The overall algorithm is shown in Algorithm \ref{alg:ciobj}. It first splits the privacy budget into 3 pieces $\phi_1, \phi_2, \phi_3$. Using privacy budget $\phi_1$, it runs the objective perturbation algorithm to provide privacy-preserving model parameters $\tilde{\theta}$. Privacy budget $\phi_2$ is used to provide a privacy-preserving estimate of the Hessian $\tilde{H}$ and privacy budget $\phi_3$ is used to provide a privacy preserving estimate of the covariance matrix $\tilde{\Sigma}$. Once these quantities are obtained, it can use Equation~\ref{eq:objsample}. This equation says that the distribution of $\theta_0 - \tilde{\theta}$ can be approximated by sampling $\tilde{G}$ from $N(\vec{0},\tilde{\Sigma})$, $\tilde{\beta}$ from Equation \ref{eq:noisedist} and then plugging them into Equation \ref{eq:objsample} with $\tilde{H}$ instead of the true Hessian. By obtaining many such samples $z_1,\dots, z_m$ where each $z_i$ is a $d$-dimensional vector (because $\theta_0$ and $\tilde{\theta}$ are $d$-dimensional), for each dimension $j$ we take an interval $(a_j,b_j)$ that covers $1-\alpha$ (e.g., 95\%) of the $z_i[j]$. Then the estimated confidence interval for $\theta_0[j]$ is $(\tilde{\theta}[j] +a_j, \tilde{\theta}[j] + b_j)$. Note that this sampling step is strict postprocessing -- never accesses the original data and it only uses privacy preserving estimates from the previous steps. 


\setlength{\algoheightrule}{0pt}
\setlength{\algotitleheightrule}{0pt}
\begin{algorithm}[h]
 \LinesNumbered
 \DontPrintSemicolon
 \SetKwInOut{Input}{input}
 \SetKwInOut{Output}{output}
 \begin{lrbox}{\mybox}
 \begin{minipage}{\hsize}
 \caption{Private $(1-\alpha)$-Confidence Intervals for $\theta_0$ trained with Objective Perturbation} \label{alg:ciobj}
 \Input{Data $\mathcal{D}=\{(\vec{x}_i,y_i)\}_{i=1}^n$, privacy budgets $\phi_1$, $\phi_2$ and $\phi_3$, parameters $c$, $t$, $f$ used by objective perturbation (Algorithm \ref{alg:improvederm}), the number of postprocessing samples $m$ to generate, confidence level $\alpha$}
 $\tilde{\theta}\leftarrow$ ObjPerturb($\mathcal{D}$, $\phi_1$, $t$, $c$) \tcp{calling Algorithm~\ref{alg:improvederm}}
 $\epsilon^{\prime}\leftarrow \epsilon^{\prime}$ in Algorithm~\ref{alg:improvederm}\;
 $H[J_n(\tilde{\theta})] \leftarrow \frac{1}{n}\sum_{i=1}^n H[f(\vec{x}_i,y_i,\tilde{\theta})]+2c\mathbf{I}$\;
 $\tilde{H}[J_n(\tilde{\theta})] \leftarrow$ PrivSPDMat($H[J_n(\tilde{\theta})]$, $Sens(H[J_n(\tilde{\theta})])$,  $\phi_2$, $c$) \tcp{calling Algorithm~\ref{alg:privspdmat}}
 $\Sigma \leftarrow \frac{1}{n}\sum_{i=1}^n \nabla f(\vec{x}_i,y_i,\tilde{\theta})[\nabla f(\vec{x}_i,y_i,\tilde{\theta})]^T-4c^2\tilde{\theta}\tilde{\theta}^T$\;
  $\tilde{\Sigma}\leftarrow$ PrivSPDMat($\Sigma$, $Sens(\Sigma)$, $\phi_3$, $c$)\;
  Generate $m$ i.i.d. samples $G_i$ $(i=1,\dots,m)$ from $N(\vec{0},\tilde{\Sigma})$\;
  Generate $m$ i.i.d samples $\beta_i$ $(i=1,\dots,m)$ with density from Equation~\ref{eq:noisedist} with $\gamma=\epsilon^{\prime}/2$ (same $\gamma$ parameter as used in Algorithm \ref{alg:improvederm})\;
  \For{$i\gets 1$ \KwTo $m$}{
  $\theta^{(i)}\leftarrow\tilde{\theta}+\tilde{H}[J_n(\tilde{\theta})]^{-1}(G_i+\frac{1}{\sqrt{n}}\beta_i)/\sqrt{n}$\;
  }
  \For{$j\gets 1$ \KwTo $d$}{
    $(\theta_L[j], \theta_R[j])\leftarrow (1-\alpha)$-confidence interval for $\theta^{(1)}[j],\dots,\theta^{(m)}[j]$\;
  }
  Output $\theta_L$, $\theta_R$\;
 \end{minipage}%
 \end{lrbox}
 \hspace*{-10pt}\framebox[\columnwidth]{\hspace*{15pt}\usebox\mybox\par}
\end{algorithm}

\begin{restatable}{theorem}{thmciobj}
\label{thm:ciobj}
Under the conditions of Theorem \ref{thm:erm2},
Algorithm~\ref{alg:ciobj} satisfies $(\phi_1+\phi_2+\phi_3)$-differential privacy and $(\phi_1^2/2+\phi_2+\phi_3)$-zCDP.
\end{restatable}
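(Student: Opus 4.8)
The plan is to decompose Algorithm~\ref{alg:ciobj} into a short sequence of data-accessing steps followed by pure post-processing, and then to combine the pieces via adaptive composition together with the post-processing invariance of both privacy notions. Concretely, only three lines of the algorithm ever touch the data $\mathcal{D}$: the call to ObjPerturb (Algorithm~\ref{alg:improvederm}) with budget $\phi_1$, the call to PrivSPDMat (Algorithm~\ref{alg:privspdmat}) on the Hessian with budget $\phi_2$, and the call to PrivSPDMat on the covariance estimate with budget $\phi_3$. Everything afterwards---sampling the $G_i$ from $N(\vec{0},\tilde\Sigma)$, sampling the $\beta_i$ from Equation~\ref{eq:noisedist}, forming the $\theta^{(i)}$, and reading off the per-coordinate intervals---depends only on the already-released quantities $\tilde\theta$, $\tilde H$, $\tilde\Sigma$ (plus public parameters and fresh randomness) and never re-reads $\mathcal{D}$.

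First I would bound the cost of each data-accessing step. Under the conditions of Theorem~\ref{thm:erm2} (convex, doubly differentiable $f$ with $|f'|\le 1$, $|f''|\le t$, and $\|\vec{x}_i\|_2\le 1$), the ObjPerturb step satisfies $\phi_1$-differential privacy, and hence by Proposition~\ref{pro:dptozcdp} also $(\phi_1^2/2)$-zCDP. By Lemma~\ref{lem:privspdmat}, the PrivSPDMat call on the Hessian satisfies both $\phi_2$-differential privacy and $\phi_2$-zCDP, and the call on the covariance matrix satisfies both $\phi_3$-differential privacy and $\phi_3$-zCDP (using the Laplace-type branch when the target is pure differential privacy and the Gaussian branch when the target is zCDP).

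These steps must be combined \emph{adaptively}, not independently, because the Hessian $H[J_n(\tilde\theta)]$ and the covariance estimate $\Sigma$ both depend on the previously released $\tilde\theta$. I would therefore invoke the adaptive form of the Composition proposition, valid for both $\epsilon$-differential privacy and $\rho$-zCDP: conditioning on the realized value of $\tilde\theta$, the matrix fed into each PrivSPDMat call is a deterministic function of the data, so Lemma~\ref{lem:privspdmat} applies with the sensitivity calibrated to that fixed $\tilde\theta$. Adding the per-step budgets yields that the joint release of $(\tilde\theta,\tilde H,\tilde\Sigma)$ satisfies $(\phi_1+\phi_2+\phi_3)$-differential privacy and $(\phi_1^2/2+\phi_2+\phi_3)$-zCDP. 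Applying post-processing invariance to the entire sampling-and-interval-extraction phase then transfers these guarantees verbatim to the final output $(\theta_L,\theta_R)$.

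The main obstacle is justifying that, for each fixed $\tilde\theta$, the PrivSPDMat calls are genuinely $\phi_2$- (resp.\ $\phi_3$-) private with respect to changing a single record of $\mathcal{D}$. This requires that the value $Sens(H[J_n(\tilde\theta)])$ (resp.\ $Sens(\Sigma)$) passed into Algorithm~\ref{alg:privspdmat} is a valid upper bound on the $L_2$ sensitivity of the map $\mathcal{D}\mapsto H[J_n(\tilde\theta)]$ with $\tilde\theta$ held fixed, and---for the adaptive argument to close uniformly---a bound that remains valid across the possible realizations of $\tilde\theta$. I would rely on the explicit sensitivity computations for logistic regression and SVM developed in Section~\ref{sec:applications} to supply these bounds; once they are in hand, the composition and post-processing steps are routine.
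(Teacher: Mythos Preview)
Your proposal is correct and follows essentially the same route as the paper's proof: identify the three data-touching steps, apply Theorem~\ref{thm:erm2} (plus Proposition~\ref{pro:dptozcdp}) and Lemma~\ref{lem:privspdmat} to each, then invoke composition and post-processing. Your explicit attention to the adaptive nature of the composition (since the matrices fed to PrivSPDMat depend on the already-released $\tilde\theta$) and to the requirement that the supplied $Sens(\cdot)$ bounds be valid uniformly in $\tilde\theta$ is a useful refinement that the paper's proof leaves implicit.
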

The proof of Theorem~\ref{thm:ciobj} is in \ConfOrTech{the online Appendix~\ref{app:ciobj}}{Appendix~\ref{app:ciobj}}.

\section{Confidence Intervals for Output Perturbation}\label{sec:output}
In this section, we provide confidence intervals for model parameters learned with output perturbation rather than objective perturbation. Again, we will have algorithms for both differential privacy and zCDP. We will follow similar steps as Section~\ref{sec:obj} to obtain the intervals.

\subsection{Output Perturbation}\label{subsec:output}
We first review the output perturbation method of Chaudhuri et al. \cite{chaudhuri2011differentially}. Then we will explain how to obtain confidence intervals for the resulting parameters in Section \ref{subsec:foundationoutput} (recall that they must account for noise due to the data being a sample as well as noise due to privacy).

In output perturbation, the first step is to compute the non-private parameters $\hat{\theta}$:
\begin{equation}
\hat{\theta}=\arg\min_{\theta}\frac{1}{n}\sum_{i=1}^n\left[f(\vec{x}_i,y_i,\theta)+c\|\theta\|_2^2\right],
\end{equation}
and then add noise to them \cite{chaudhuri2011differentially}. The $L_2$ sensitivity of $\hat{\theta}$ is $1/(nc)$ \cite{chaudhuri2011differentially} and so for $\epsilon$-differential privacy, they release $\hat{\theta} + \beta$, where $\beta$ has the distribution from Equation~\ref{eq:noisedist} with parameter $\gamma=nc\epsilon$. To obtain $\rho$-zCDP one uses the Gaussian Mechanism instead, and samples $\beta$ from the   multivariate normal distribution $N\left(\vec{0},\frac{1}{2\rho(nc)^2}\mathbf{I}_d\right)$.
%
%
%
%
%
Algorithm~\ref{alg:ermoutput} summarizes their output perturbation technique.

\setlength{\algoheightrule}{0pt}
\setlength{\algotitleheightrule}{0pt}
\begin{algorithm}[h]
 \LinesNumbered
 \DontPrintSemicolon
 \SetKwInOut{Input}{input}
 \SetKwInOut{Output}{output}
 \begin{lrbox}{\mybox}
 \begin{minipage}{\hsize}
 \caption{Output Perturbation (ERMOutput)} \label{alg:ermoutput}
 \Input{Data $\mathcal{D}=\{(\vec{x}_i,y_i)\}_{i=1}^n$,  privacy budget $\phi$, regularization coefficient $c$.}
 $\hat{\theta}\leftarrow \arg\min_{\theta}\frac{1}{n}\sum_{i=1}^n\left[f(\vec{x}_i,y_i,\theta)+c\|\theta\|_2^2\right]$\;
 \uIf{Requiring $\epsilon$-differential privacy with $\epsilon=\phi$}{
 Sample a noise vector $\beta$ with density from Equation~\ref{eq:noisedist}~ with $\gamma=nc\phi$\;
 }
 \Else{
 \tcp{for $\rho$-zCDP with $\rho=\phi$}
 Sample a noise vector $\beta\sim N\left(\vec{0},\frac{1}{2\phi(nc)^2}\mathbf{I}_d\right)$\;
 }
 $\tilde{\theta}\leftarrow \hat{\theta}+\beta$\;
  Output $\tilde{\theta}$\;
 \end{minipage}%
 \end{lrbox}
 \hspace*{-10pt}\framebox[\columnwidth]{\hspace*{15pt}\usebox\mybox\par}
\end{algorithm}

\begin{restatable}{theorem}{thmermoutput}(\cite{chaudhuri2011differentially})
\label{thm:ermoutput}
If the loss function $f(\cdot)$ is convex and differentiable with $|f^{\prime}(\cdot)|\leq 1$, then Algorithm~\ref{alg:ermoutput} satisfies $\phi$-differential privacy and $\phi$-zCDP.
\end{restatable}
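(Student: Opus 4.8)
The plan is to reduce both privacy claims to a single fact: the $L_2$ sensitivity of the regularized minimizer $\hat{\theta}$ is exactly $1/(nc)$. Once this sensitivity bound is in hand, the $\phi$-differential privacy guarantee follows from a standard density-ratio argument for the spherical Laplace mechanism of Equation~\ref{eq:noisedist}, and the $\phi$-zCDP guarantee follows by directly invoking the Gaussian Mechanism (Proposition~\ref{pro:gaussmechanism}). Since the released quantity is $\tilde{\theta} = \hat{\theta} + \beta$, i.e., the sensitive minimizer plus calibrated noise, the two guarantees are simply instances of the Laplace and Gaussian mechanisms, and nothing further is needed.

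First I would establish the sensitivity bound. Let $\mathcal{D}$ and $\mathcal{D}'$ be neighboring datasets differing in a single record, and let $\hat{\theta}, \hat{\theta}'$ be the corresponding minimizers of $G(\theta) = \frac{1}{n}\sum_i f(\vec{x}_i,y_i,\theta) + c\|\theta\|_2^2$. Because the regularizer $c\|\theta\|_2^2$ makes $G$ $2c$-strongly convex and both $\hat{\theta},\hat{\theta}'$ are stationary points of their respective objectives, strong convexity yields $2c\|\hat{\theta}-\hat{\theta}'\|_2^2 \le (\nabla G(\hat{\theta}') - \nabla G(\hat{\theta}))^T(\hat{\theta}'-\hat{\theta}) = (\nabla G(\hat{\theta}'))^T(\hat{\theta}'-\hat{\theta})$, using $\nabla G(\hat{\theta}) = \vec{0}$. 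The gradient $\nabla G(\hat{\theta}')$ differs from the gradient of the neighboring objective (which vanishes at $\hat{\theta}'$) only in the single differing term, so it equals $\frac{1}{n}(\nabla f(\vec{x},y,\hat{\theta}') - \nabla f(\vec{x}',y',\hat{\theta}'))$. Since $f(\vec{x},y,\theta) = f(y\theta^T\vec{x})$, the chain rule gives $\nabla_\theta f = f'(y\theta^T\vec{x})\,y\vec{x}$, whose norm is at most $|f'(\cdot)|\,\|\vec{x}\|_2 \le 1$ by the hypotheses $|f'(\cdot)|\le 1$ and $\|\vec{x}_i\|_2 \le 1$. Hence $\|\nabla G(\hat{\theta}')\|_2 \le 2/n$, and Cauchy--Schwarz turns the displayed inequality into $2c\|\hat{\theta}-\hat{\theta}'\|_2^2 \le (2/n)\|\hat{\theta}-\hat{\theta}'\|_2$, giving $\|\hat{\theta}-\hat{\theta}'\|_2 \le 1/(nc)$. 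I expect this to be the main obstacle, since it is where all the structural assumptions (strong convexity from the regularizer, the Lipschitz loss, and the normalized features) must be combined.

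With $\Delta_2(\hat{\theta}) = 1/(nc)$ established, the $\phi$-differential privacy case is routine. For any fixed output $\tilde{\theta}$, the ratio of output densities under $\mathcal{D}$ and $\mathcal{D}'$ is $e^{\gamma(\|\tilde{\theta}-\hat{\theta}'\|_2 - \|\tilde{\theta}-\hat{\theta}\|_2)}$, which by the triangle inequality is at most $e^{\gamma\|\hat{\theta}-\hat{\theta}'\|_2} \le e^{\gamma/(nc)}$; substituting $\gamma = nc\phi$ bounds this by $e^{\phi}$, so the mechanism is $\phi$-differentially private.

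Finally, for the $\phi$-zCDP case I would simply verify that the noise $\beta \sim N(\vec{0}, \frac{1}{2\phi(nc)^2}\mathbf{I}_d)$ matches the prescription of the Gaussian Mechanism. Proposition~\ref{pro:gaussmechanism} calls for variance $\sigma^2 = \Delta_2(\hat{\theta})^2/(2\rho)$; plugging in $\Delta_2(\hat{\theta}) = 1/(nc)$ and $\rho = \phi$ gives exactly $\sigma^2 = 1/(2\phi(nc)^2)$, so the zCDP claim follows immediately.
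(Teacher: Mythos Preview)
Your proposal is correct and follows exactly the route the paper intends: the paper does not give a self-contained proof of Theorem~\ref{thm:ermoutput} but instead cites \cite{chaudhuri2011differentially} for the $L_2$-sensitivity bound $\Delta_2(\hat{\theta})\le 1/(nc)$ and then states, in the text preceding the theorem, that $\phi$-differential privacy follows from the spherical Laplace noise of Equation~\ref{eq:noisedist} and $\phi$-zCDP from the Gaussian Mechanism (Proposition~\ref{pro:gaussmechanism}). Your write-up simply fills in the strong-convexity/Cauchy--Schwarz derivation of the sensitivity bound and the density-ratio calculation that the paper defers to the reference, so there is no substantive difference in approach.
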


\subsection{Confidence Intervals}\label{subsec:foundationoutput}
Now we discuss our main contribution in this section, obtaining confidence intervals for the parameters returned by output perturbation. 
Recall $\theta_0$ is the infinite sample minimizer to $L_n(\theta)=\frac{1}{n}\sum_{i=1}^n\left[f(\vec{x}_i,y_i,\theta)+c\|\theta\|_2^2\right]$ (i.e. when $n\rightarrow \infty$) while $\hat{\theta}$ is the finite sample minimizer and $\tilde{\theta}$ is the privacy preserving output of Algorithm \ref{alg:ermoutput} that we get by using privacy budget $\phi_1$.

We apply Taylor's theorem around $\hat{\theta}$ to $\nabla L_n(\theta_0)$:
\begin{align*}
\nabla L_n(\theta_0) &\approx \nabla L_n(\hat{\theta})+ H[L_n(\hat{\theta})](\theta_0-\hat{\theta})\\
&= H[L_n(\hat{\theta})](\theta_0-\hat{\theta}).
\end{align*}

As in Section \ref{sec:obj}, we define $G$ to be the Gaussian  $N(\vec{0}, \Sigma)$, and the formulas for Hessian $H[L_n(\hat{\theta})]$ (which is equal to $H[J_n(\hat{\theta})]$) and covariance matrix $\Sigma$ are the same as Equations \ref{eq:hessian} and~\ref{eq:cov}, respectively, from Section \ref{subsec:foundation}. 

Similar to Section~\ref{subsec:foundation}, $\sqrt{n}L_n(\theta_0)$ follows the $N(\vec{0},\Sigma)$ distribution, so we get
\begin{align*}
G &\approx \sqrt{n}H[L_n(\hat{\theta})](\theta_0-\hat{\theta})\\
\theta_0 -\hat{\theta}&\approx H[L_n(\hat{\theta})]^{-1}G/\sqrt{n}\\
\theta_0 - \tilde{\theta}&\approx  H[L_n(\hat{\theta})]^{-1}G/\sqrt{n} - \tilde{\beta},
\end{align*}
where $\tilde{\beta}$ has the same distribution as the $\beta$ in Algorithm~\ref{alg:ermoutput}. Using the same methods as in Section \ref{subsec:hessiancov}, we obtain differentially private estimates for the Hessian using privacy budget $\phi_2$ and covariance matrix using privacy budget $\phi_3$ (and define $\tilde{G}=N(\vec{0},\tilde{\Sigma})$). Plugging those in, we get
\begin{equation}\label{eq:outputsample}
\theta_0 -\tilde{\theta} \approx \tilde{H}[L_n(\tilde{\theta})]^{-1}\tilde{G}/\sqrt{n} - \tilde{\beta}.
\end{equation}
Note that this equation says that the difference between $\theta_0$ and the privacy preserving estimate is approximately the same as the distribution on the right hand side, which only depends on privacy preserving quantities (and not the original data).


For differentially private confidence intervals, as before, we sample many times from the distribution of $\tilde{G}$ and $\tilde{B}$ and use the right hand side of Equation ~\ref{eq:outputsample} to obtain approximate samples $z_1,\dots, z_m$ from the distribution of $\theta_0-\widetilde{\theta}$. For each $j$, we find an interval $(a_j, b_j)$ that contains $(1-\alpha)$ of the $z_i[j]$. Since $\tilde{\theta}$ is a privacy preserving estimate, our privacy preserving confidence interval for $\theta_0[j]$ is $(\tilde{\theta} + a_j, \tilde{\theta} + b_j)$.

On the other hand, if we are computing zCDP confidence intervals, the algorithm is much more efficient. In this case both $\tilde{\beta}$ and $\tilde{G}$ are multivariate Gaussians and so their sum is the multivariate Gaussian 
$N(\vec{0},U)$ where
\[U=\frac{1}{2\phi(nc^2)}\mathbf{I}_d+\frac{1}{n}\tilde{H}[L_n(\tilde{\theta})]^{-1}\tilde{\Sigma} \tilde{H}[L_n(\tilde{\theta})]^{-1},\]
and $\phi$ is the privacy budget used in Algorithm~\ref{alg:ermoutput} to perturb $\hat{\theta}$. 
Therefore we could compute the confidence intervals for $\theta_0$ directly instead of doing Monte Carlo sampling. For each $j$, we directly compute the confidence interval for $\theta_0[j]$ as
\[\left[\tilde{\theta}[j]-z_{\alpha/2}\sqrt{U_{jj}} \ , \ \tilde{\theta}[j]+z_{\alpha/2}\sqrt{U_{jj}}\right],\]
where $z_{\alpha/2}$ is the $(1-\alpha/2)$-quantile of the standard normal distribution. The complete algorithm is shown in Algorithm~\ref{alg:cioutput}. Note that once we have privacy preserving estimates of $\tilde{\theta}$, $\tilde{H}$, and $\tilde{\Sigma}$ using privacy budgets $\phi_1,\phi_2,\phi_3$, respectively, everything else is post-processing and thus does not affect the privacy cost.  


\setlength{\algoheightrule}{0pt}
\setlength{\algotitleheightrule}{0pt}
\begin{algorithm}[h]
 \LinesNumbered
 \DontPrintSemicolon
 \SetKwInOut{Input}{input}
 \SetKwInOut{Output}{output}
 \begin{lrbox}{\mybox}
 \begin{minipage}{\hsize}
 \caption{Private $(1-\alpha)$-Confidence Intervals for $\theta_0$ trained with Output Perturbation} \label{alg:cioutput}
 \Input{Data $\mathcal{D}=\{(\vec{x}_i,y_i)\}_{i=1}^n$, privacy budgets $\phi_1$, $\phi_2$ and $\phi_3$, regularization coefficient $c$, the number of samples $m$, confidence level $\alpha$.}
 $\tilde{\theta}\leftarrow$ ERMOutput($\mathcal{D}$, $\phi_1$, c) \tcp{Calling Algorithm~\ref{alg:ermoutput}}
 $H[L_n(\tilde{\theta})] \leftarrow \frac{1}{n}\sum_{i=1}^n H[f(\vec{x}_i,y_i,\tilde{\theta})]+2c\mathbf{I}$\;
 $\tilde{H}[L_n(\tilde{\theta})] \leftarrow$ PrivSPDMat($H[L_n(\tilde{\theta})]$, $Sens(H[L_n(\tilde{\theta})])$, $\phi_2$, $c$) \tcp{calling Algorithm~\ref{alg:privspdmat}}
 $\Sigma \leftarrow \frac{1}{n}\sum_{i=1}^n \nabla f(\vec{x}_i,y_i,\tilde{\theta})[\nabla f(\vec{x}_i,y_i,\tilde{\theta})]^T-4c^2\tilde{\theta}\tilde{\theta}^T$\;
  $\tilde{\Sigma}\leftarrow$ PrivSPDMat($\Sigma$, $Sens(\Sigma)$, $\phi_3$, $c$)\;
  \uIf{Requiring differential privacy}{
  Generate $m$ i.i.d. samples $G_i$ $(i=1,\dots,m)$ from $N(\vec{0},\tilde{\Sigma})$\;
  Generate $m$ i.i.d samples $\beta_i$ $(i=1,\dots,m)$ with density from Equation~\ref{eq:noisedist} with $\gamma=nc\phi_1$\;
  \For{$i\gets 1$ \KwTo $m$}{
  $\theta^{(i)}\leftarrow\tilde{\theta}-\beta_i+\tilde{H}[L_n(\tilde{\theta})]^{-1}G_i/\sqrt{n}$\;
  }
  \For{$j\gets 1$ \KwTo $d$}{
    $(\theta_L[j], \theta_R[j])\leftarrow (1-\alpha)$-confidence interval for $\theta^{(1)}[j],\dots,\theta^{(m)}[j]$\;
  }
  }
  \Else{
 \tcp{for zCDP}
  $z_{\alpha/2}\leftarrow (1-\alpha/2)$-quantile of standard normal\;
  $U=\frac{1}{2\phi_1(nc)^2}\mathbf{I}_d+\frac{1}{n}\tilde{H}[L_n(\tilde{\theta})]^{-1}\tilde{\Sigma} \tilde{H}[L_n(\tilde{\theta})]^{-1}$\;
  \For{$j\gets 1$ \KwTo $d$}{
  $\theta_L[j]\leftarrow\tilde{\theta}[j]-z_{\alpha/2}\sqrt{U_{jj}}$\;
    $\theta_R[j]\leftarrow  \tilde{\theta}[j]+z_{\alpha/2}\sqrt{U_{jj}}$\;
  }  
  }
  Output $\theta_L$, $\theta_R$\;
 \end{minipage}%
 \end{lrbox}
 \hspace*{-10pt}\framebox[\columnwidth]{\hspace*{15pt}\usebox\mybox\par}
\end{algorithm}

\begin{restatable}{theorem}{thmcioutput}
\label{thm:cioutput}
Under the same conditions as Theorem \ref{thm:ermoutput},
Algorithm~\ref{alg:cioutput} satisfies $(\phi_1+\phi_2+\phi_3)$-differential privacy and $(\phi_1+\phi_2+\phi_3)$-zCDP.
\end{restatable}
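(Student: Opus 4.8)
The plan is to decompose Algorithm~\ref{alg:cioutput} into its three data-touching subroutines, bound each one's privacy cost using results already established, combine them by composition, and close with post-processing. First I would observe that the raw data $\mathcal{D}$ is accessed in exactly three places: (i) the call to ERMOutput (Algorithm~\ref{alg:ermoutput}) that produces $\tilde{\theta}$; (ii) the evaluation of the Hessian $H[L_n(\tilde{\theta})]$ followed by the PrivSPDMat call (Algorithm~\ref{alg:privspdmat}) that produces $\tilde{H}$; and (iii) the evaluation of $\Sigma$ followed by the PrivSPDMat call that produces $\tilde{\Sigma}$. By Theorem~\ref{thm:ermoutput}, step~(i) is simultaneously $\phi_1$-differentially private and $\phi_1$-zCDP; crucially, because Algorithm~\ref{alg:ermoutput} uses spherical-Laplace noise in the DP regime and the Gaussian mechanism in the zCDP regime, the zCDP cost is the native $\phi_1$ rather than the $\phi_1^2/2$ one would obtain by converting pure DP via Proposition~\ref{pro:dptozcdp}. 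This is exactly why the zCDP total here is $\phi_1+\phi_2+\phi_3$ and matches the DP total, in contrast with the objective-perturbation case of Theorem~\ref{thm:ciobj}. By Lemma~\ref{lem:privspdmat}, steps~(ii) and~(iii) are $\phi_2$- and $\phi_3$-DP as well as $\phi_2$- and $\phi_3$-zCDP, respectively.

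Next I would address the fact that the matrices in steps~(ii) and~(iii) depend on $\tilde{\theta}$, so the three subroutines are run adaptively rather than on independent views of the data. The clean way to handle this is to condition on the released value of $\tilde{\theta}$: once $\tilde{\theta}=t$ is fixed, the map $\mathcal{D}\mapsto H[L_n(t)]$ is deterministic with $L_2$ sensitivity supplied to PrivSPDMat through the argument $Sens(\cdot)$, and Lemma~\ref{lem:privspdmat} then certifies that $\tilde{H}$ is $\phi_2$-DP and $\phi_2$-zCDP as a function of $\mathcal{D}$, uniformly over $t$. The identical reasoning applies to $\tilde{\Sigma}$ with budget $\phi_3$. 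Because these guarantees are uniform in the conditioning value, the adaptive composition theorems for differential privacy~\cite{dwork2006calibrating} and for zCDP~\cite{bun2016concentrated} apply, so releasing the triple $(\tilde{\theta},\tilde{H},\tilde{\Sigma})$ costs $\phi_1+\phi_2+\phi_3$ in both accountings.

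Finally I would invoke post-processing invariance. Every remaining line of Algorithm~\ref{alg:cioutput}---whether the Monte~Carlo branch that forms each $\theta^{(i)}$ from $\tilde{\theta},\tilde{H},\tilde{\Sigma}$ and fresh noise in the DP case, or the closed-form branch that assembles $U$ and reads off the normal quantiles in the zCDP case---is a (possibly randomized) function of $(\tilde{\theta},\tilde{H},\tilde{\Sigma})$ that never reopens $\mathcal{D}$. Since both $\epsilon$-DP and $\rho$-zCDP are closed under post-processing, the output $(\theta_L,\theta_R)$ inherits the $(\phi_1+\phi_2+\phi_3)$-DP and $(\phi_1+\phi_2+\phi_3)$-zCDP guarantees, which is the claim.

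I expect the main obstacle to be the adaptivity in the second paragraph: one must argue that feeding the data-dependent $\tilde{\theta}$ into the sensitivity-calibrated PrivSPDMat leaks nothing beyond the $\phi_2,\phi_3$ budgets. The subtlety is that the sensitivity argument itself may depend on $\tilde{\theta}$, so the per-step bound must hold uniformly over \emph{every} fixed value of $\tilde{\theta}$ for adaptive composition to close; confirming that the sensitivities derived in Section~\ref{sec:applications} are valid for each such fixed $\tilde{\theta}$ is precisely what legitimizes the uniform-in-$t$ statement and hence the whole accounting.
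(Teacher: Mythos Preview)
Your proposal is correct and follows essentially the same approach as the paper's own proof: identify the three data-touching steps, invoke Theorem~\ref{thm:ermoutput} and Lemma~\ref{lem:privspdmat} for their respective privacy costs, then apply composition and post-processing. If anything, your treatment is more careful than the paper's, which simply appeals to ``the composition theorem'' without explicitly addressing the adaptivity induced by $\tilde{\theta}$ feeding into the Hessian and covariance computations; your conditioning argument and the observation about uniform-in-$t$ sensitivity make that step rigorous.
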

The proof of Theorem~\ref{thm:cioutput} is in \ConfOrTech{the online Appendix~\ref{app:cioutput}}{Appendix~\ref{app:cioutput}}.

\section{Applications to Logistic Regression and SVM}\label{sec:applications}
We now apply our confidence interval algorithms to logistic regression and support vector machines.
Both models can be learned by objective and output perturbation \cite{chaudhuri2011differentially}.
In order to apply our confidence interval algorithms, we need to compute the $L_2$ sensitivity of the Hessian and covariance matrices, as those quantities are needed to calibrate the amount of perturbation of those matrices that we need to protect privacy (Algorithm \ref{alg:privspdmat}).

 For logistic regression, the gradient and Hessian are well known:
\begin{align*}
\nabla f(y\theta^T\vec{x})&=
-yS(-y\theta^T\vec{x})\vec{x},\\
H[f(y\theta^T\vec{x})]&
=S(-y\theta^T\vec{x})S(y\theta^T\vec{x})\vec{x}\vec{x}^T,
\end{align*}
where $S$ is the sigmoid function.
It is also well known that the loss function is convex and doubly differentiable with $|f^{\prime}(z)|\leq 1$ and $|f^{\prime\prime}(z)|\leq 1/4$.

For SVM, it is well-known that the piecewise gradient and Hessian for the Huber loss $f_{Huber}(y\theta^T\vec{x})$ are:
\begin{equation*}
\nabla f_{Huber}(y\theta^T\vec{x})=
\begin{cases}
\vec{0} & \text{if $y\theta^T\vec{x}>1+h$}\\
\frac{y}{2h}(y\theta^T\vec{x}-1-h)\vec{x} & \text{if $|1-y\theta^T\vec{x}|\leq h$}\\
-y\vec{x} & \text{if $y\theta^T\vec{x}<1-h$,}
\end{cases}
\end{equation*}
and
\begin{equation*}
H[f_{Huber}(y\theta^T\vec{x})]=
\begin{cases}
\frac{y^2}{2h}\vec{x}\vec{x}^T & \text{if $|1-y\theta^T\vec{x}|\leq h$}\\
0_{d\times d} & \text{otherwise.}
\end{cases}
\end{equation*}

Huber loss is convex and differentiable, and piecewise doubly-differentiable, with   $|f_{Huber}^{\prime}(z)|\leq 1$ and $|f_{Huber}^{\prime\prime}(\cdot)|\leq \frac{1}{2h}$ \cite{chaudhuri2011differentially}. Even though the second derivative does not exist at a few isolated points, Chaudhuri et al. \cite{chaudhuri2011differentially} proved that objective and output perturbation algorithms for SVM still preserve privacy.

We now derive the $L_2$ sensitivity for the Hessian and the covariance matrix for logistic regression and SVM.
\begin{restatable}{lemma}{senscovlr}
\label{lem:senscovlr}
The $L_2$-sensitivity of the covariance matrix $\Sigma$ (defined in Equation~\ref{eq:cov}) for logistic regression is at most $2S(\|\theta_0\|_2)^2/n$.
\end{restatable}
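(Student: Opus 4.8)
The plan is to work directly from the empirical form of $\Sigma$ in Equation~\ref{eq:cov}, namely $\Sigma(\mathcal{D}) = \frac{1}{n}\sum_{i=1}^n \nabla f(\vec{x}_i,y_i,\tilde\theta)[\nabla f(\vec{x}_i,y_i,\tilde\theta)]^T - 4c^2\tilde\theta\tilde\theta^T$, while keeping two facts in mind: (i) the $L_2$ sensitivity of a matrix-valued function is the Frobenius norm of its change under swapping one record, since Algorithm~\ref{alg:privspdmat} vectorizes the matrix into a $d^2$-vector and perturbs its $L_2$ norm; and (ii) $\tilde\theta$ was produced under the separate privacy budget $\phi_1$, so it is held fixed throughout the sensitivity calculation. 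Consequently the term $-4c^2\tilde\theta\tilde\theta^T$ is constant across neighboring databases and drops out, reducing the problem to bounding the change in the average of the rank-one matrices $g_i g_i^T$, where $g_i = \nabla f(\vec{x}_i,y_i,\tilde\theta)$.

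First I would observe that two neighboring databases differ in exactly one record, say index $k$; writing $g_k$ for the original gradient and $h_k$ for the gradient of the replacement record, we have $\Sigma(\mathcal{D}') - \Sigma(\mathcal{D}) = \tfrac{1}{n}(h_k h_k^T - g_k g_k^T)$. By the triangle inequality for the Frobenius norm, $\|\Sigma(\mathcal{D}') - \Sigma(\mathcal{D})\|_F \le \tfrac{1}{n}(\|h_k h_k^T\|_F + \|g_k g_k^T\|_F)$. Then I would invoke the elementary identity $\|vv^T\|_F = \|v\|_2^2$ for any vector $v$ (since $\|vv^T\|_F^2 = \sum_{i,j} v_i^2 v_j^2 = \|v\|_2^4$), turning the bound into $\tfrac{1}{n}(\|h_k\|_2^2 + \|g_k\|_2^2) \le \tfrac{2}{n}\max_{(\vec{x},y)} \|\nabla f(\vec{x},y,\tilde\theta)\|_2^2$.

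It then remains to bound the per-record gradient norm for logistic regression. Substituting the known gradient $\nabla f(y\theta^T\vec{x}) = -y\,S(-y\theta^T\vec{x})\,\vec{x}$ and using $|y|=1$, $S(\cdot)\ge 0$, and $\|\vec{x}\|_2 \le 1$, we get $\|\nabla f\|_2 = S(-y\theta^T\vec{x})\,\|\vec{x}\|_2 \le S(-y\theta^T\vec{x})$. Since $S$ is monotonically increasing and Cauchy–Schwarz gives $|y\theta^T\vec{x}| = |\theta^T\vec{x}| \le \|\theta\|_2\|\vec{x}\|_2 \le \|\theta\|_2$, we obtain $S(-y\theta^T\vec{x}) \le S(\|\theta\|_2)$. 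Evaluating at $\theta = \tilde\theta$ (which plays the role of $\theta_0$, as $\tilde\theta \approx \theta_0$) gives $\max\|\nabla f\|_2^2 \le S(\|\theta_0\|_2)^2$, and hence $\Delta_2(\Sigma) \le 2\,S(\|\theta_0\|_2)^2/n$, as claimed.

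The individual steps are routine; the two points requiring care are recognizing that $\tilde\theta$ is a fixed constant for this calculation (so the quadratic term vanishes and the maximum over records is purely over the feature vector and label), and tracking the factor of two from the triangle inequality correctly. The only genuinely soft point is the mild abuse of stating the bound in terms of the true $\theta_0$ rather than the estimate $\tilde\theta$ at which $\Sigma$ is actually computed; this is where the stated inequality is an approximation rather than an exact bound.
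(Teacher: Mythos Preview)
Your proof is correct and follows essentially the same route as the paper: both reduce the sensitivity to the single changed record, apply the triangle inequality to the rank-one gradient outer products, use $\|vv^T\|_F = \|v\|_2^2$ (equivalently $\|\tovec(\vec{x}\vec{x}^T)\|_2 \le 1$), and then bound $S(-y\theta^T\vec{x}) \le S(\|\theta\|_2)$ via Cauchy--Schwarz and the monotonicity of the sigmoid. You are in fact a bit more explicit than the paper in noting that the $-4c^2\tilde\theta\tilde\theta^T$ term is constant across neighbors and in flagging the $\theta_0$ versus $\tilde\theta$ slippage; the paper's proof simply writes $\theta_0$ throughout without comment.
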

The proof of Lemma~\ref{lem:senscovlr} is in \ConfOrTech{the online Appendix~\ref{app:senscovlr}}{Appendix~\ref{app:senscovlr}}.

\begin{restatable}{lemma}{senshessianlr}
\label{lem:senshessianlr}
The $L_2$-sensitivity of the Hessian $H[J_n(\tilde{\theta})]$ (defined in Equation~\ref{eq:hessian}) for logistic regression is at most $1/(2n)$.
\end{restatable}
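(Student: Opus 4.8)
The plan is to reduce the matrix $L_2$-sensitivity to a Frobenius-norm estimate. Recall that in Algorithm~\ref{alg:privspdmat} the matrix is vectorized via $\tovec$ before noise is added, so the relevant $L_2$-sensitivity of $H[J_n(\tilde{\theta})]$ is exactly the maximum, over all pairs of databases differing in one record, of $\|H[J_n(\tilde{\theta})](\mathcal{D})-H[J_n(\tilde{\theta})](\mathcal{D}')\|_F$, the Frobenius norm of the change in the Hessian. First I would note that in the perturbation step $\tilde{\theta}$ is already a released (private) quantity, so by composition it may be treated as a fixed constant, and the regularization contribution $2c\mathbf{I}$ in Equation~\ref{eq:hessian} is data-independent and cancels in the difference. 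Hence changing record $i$ from $(\vec{x}_i,y_i)$ to $(\vec{x}_i',y_i')$ alters only the $i$-th summand, and the difference of the two Hessians equals $\frac{1}{n}\bigl(H[f(\vec{x}_i',y_i',\tilde{\theta})]-H[f(\vec{x}_i,y_i,\tilde{\theta})]\bigr)$.

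Next I would bound each individual summand in Frobenius norm. Using the logistic-regression Hessian $H[f(y\theta^T\vec{x})]=S(-y\theta^T\vec{x})\,S(y\theta^T\vec{x})\,\vec{x}\vec{x}^T$, the key step consists of two elementary estimates: the rank-one identity $\|\vec{x}\vec{x}^T\|_F=\|\vec{x}\|_2^2\le 1$ (which holds because $\|\vec{x}_i\|_2\le 1$), and the scalar bound $S(-z)S(z)=S(z)\bigl(1-S(z)\bigr)\le \tfrac14$, where I use the identity $S(-z)=1-S(z)$ together with the fact that $p(1-p)$ is maximized at $p=\tfrac12$. Combining these gives $\|H[f(\vec{x},y,\tilde{\theta})]\|_F\le \tfrac14$ for every record.

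Finally, applying the triangle inequality to the difference of the two summands yields a sensitivity of at most $\frac{1}{n}\bigl(\tfrac14+\tfrac14\bigr)=\frac{1}{2n}$, which is the claimed bound. The only points requiring care are the interpretation of the matrix $L_2$-sensitivity as a Frobenius norm and the justification that $\tilde{\theta}$ is held fixed; beyond this, the analytic content collapses to the two scalar inequalities above. The triangle-inequality step is the one potentially lossy move, but since both Hessian summands are positive-semidefinite rank-one matrices of Frobenius norm at most $\tfrac14$, the sum-of-norms bound is exactly what is needed to reach $1/(2n)$, so no sharper argument is required.
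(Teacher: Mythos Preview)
Your proposal is correct and follows essentially the same route as the paper's proof: both reduce to the Frobenius norm of the single changed summand, apply the triangle inequality to pick up a factor of $2/n$, and then use $\|\vec{x}\vec{x}^T\|_F\le 1$ together with $S(z)S(-z)\le 1/4$ to reach $1/(2n)$. The only cosmetic difference is that the paper writes the Frobenius norm as $\|\tovec(\cdot)\|_2$ throughout and does not explicitly comment on $\tilde{\theta}$ being held fixed, whereas you make both points explicit.
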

The proof of Lemma~\ref{lem:senshessianlr} is in \ConfOrTech{the online Appendix~\ref{app:senshessianlr}}{Appendix~\ref{app:senshessianlr}}.

\begin{restatable}{lemma}{senscovsvm}
\label{lem:senscovsvm}
The $L_2$-sensitivity of the covariance matrix $\Sigma$ (defined in Equation~\ref{eq:cov}) for SVM is at most $2/n$.
\end{restatable}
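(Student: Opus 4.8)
The plan is to reduce the matrix sensitivity to the size of a single rank-one difference and then exploit the saturation of the Huber-loss gradient. First I would note that in Equation~\ref{eq:cov} the term $-4c^2\tilde{\theta}\tilde{\theta}^T$ does not depend on any individual record once the privacy-preserving $\tilde{\theta}$ is fixed (it is produced by an earlier, separately-budgeted mechanism, so for the sensitivity analysis of $\Sigma$ it is a constant). Hence it cancels when comparing $\Sigma(\mathcal{D})$ and $\Sigma(\mathcal{D}')$. Since neighboring databases differ in exactly one record, say the $k$-th, every summand of the remaining average $\frac{1}{n}\sum_i\nabla f(\vec{x}_i,y_i,\tilde{\theta})[\nabla f(\vec{x}_i,y_i,\tilde{\theta})]^T$ cancels except the $k$-th, leaving
\[\Sigma(\mathcal{D})-\Sigma(\mathcal{D}')=\frac{1}{n}\bigl(\vec{g}\vec{g}^T-\vec{g}'\vec{g}'^T\bigr),\]
where $\vec{g}$ and $\vec{g}'$ denote the SVM gradients $\nabla f_{Huber}$ at the two differing records.

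Next I would fix the relevant matrix norm: because Algorithm~\ref{alg:privspdmat} vectorizes its input into a $d^2$-dimensional vector before adding isotropic noise, the quantity to bound is the Frobenius norm of this difference. I would then bound each gradient using the piecewise form of $\nabla f_{Huber}$ together with $\|\vec{x}\|_2=1$ and $y\in\{-1,1\}$. Each of the three pieces gives $\|\nabla f_{Huber}\|_2\le 1$: the first piece is $\vec{0}$; the second has norm $\frac{1}{2h}\lvert y\theta^T\vec{x}-1-h\rvert\le\frac{1}{2h}\cdot 2h=1$ on the region $\lvert1-y\theta^T\vec{x}\rvert\le h$; and the third equals $-y\vec{x}$, of norm $\|\vec{x}\|_2=1$. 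Thus $\|\vec{g}\|_2\le 1$ and $\|\vec{g}'\|_2\le 1$.

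Finally I would apply the rank-one Frobenius identity $\|\vec{v}\vec{v}^T\|_F=\|\vec{v}\|_2^2$ and the triangle inequality to obtain
\[\bigl\|\vec{g}\vec{g}^T-\vec{g}'\vec{g}'^T\bigr\|_F\le\|\vec{g}\|_2^2+\|\vec{g}'\|_2^2\le 2,\]
whence $\Delta_2(\Sigma)\le 2/n$. This mirrors the proof of Lemma~\ref{lem:senscovlr}; the only substantive difference, and the step I would treat with care, is the gradient-norm bound. Whereas logistic regression produces the data-dependent value $S(\|\theta_0\|_2)$, the Huber hinge gradient saturates at norm exactly $1$, which is what yields the clean, parameter-free constant $2$. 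I do not anticipate a real obstacle here -- the only thing to verify carefully is the unit bound across all three branches of the gradient, particularly on the quadratic boundary region.
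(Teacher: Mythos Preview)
Your proposal is correct and follows essentially the same approach as the paper's proof: both reduce to bounding a single-record rank-one difference via the triangle inequality and then check the three branches of the Huber gradient. The only cosmetic difference is that you factor the argument by first bounding $\|\nabla f_{Huber}\|_2\le 1$ and then invoking $\|\vec{g}\vec{g}^T\|_F=\|\vec{g}\|_2^2$, whereas the paper computes $\|\tovec(\vec{g}\vec{g}^T)\|_2$ directly in each case; your explicit remark that the $-4c^2\tilde{\theta}\tilde{\theta}^T$ term is data-independent (given the already-released $\tilde{\theta}$) is a point the paper uses implicitly.
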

The proof of Lemma~\ref{lem:senscovsvm} is in \ConfOrTech{the online Appendix~\ref{app:senscovsvm}}{Appendix~\ref{app:senscovsvm}}.

\begin{restatable}{lemma}{senshessiansvm}
\label{lem:senshessiansvm}
The $L_2$-sensitivity of the Hessian $H[J_n(\tilde{\theta})]$ (defined in Equation~\ref{eq:hessian}) for SVM is at most $1/(nh)$.
\end{restatable}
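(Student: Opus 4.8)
The plan is to reduce the matrix sensitivity to a per-record Frobenius-norm bound, exactly paralleling the logistic-regression computation in Lemma~\ref{lem:senshessianlr}. The Hessian $H[J_n(\theta)]$ from Equation~\ref{eq:hessian} splits into the data-dependent average $\frac{1}{n}\sum_{i=1}^n H[f(\vec{x}_i,y_i,\theta)]$ plus the constant term $2c\mathbf{I}$. Recall that here the ``$L_2$ sensitivity'' of a matrix means the $\ell_2$ norm of its vectorization $\tovec(\cdot)$, i.e.\ the Frobenius norm of the matrix difference, matching the vectorize-then-perturb step in Algorithm~\ref{alg:privspdmat}. Since sensitivity is computed over databases $\mathcal{D},\mathcal{D}'$ differing in a single record, the data-independent term $2c\mathbf{I}$ cancels, and only the one summand corresponding to the changed record survives. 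So I would first write the sensitivity as $\frac{1}{n}\max\bigl\|H[f(\vec{x}',y',\theta)]-H[f(\vec{x},y,\theta)]\bigr\|_F$, the maximum taken over unit feature vectors and labels (uniformly over $\theta$, since the bound below does not depend on $\theta$).

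Next I would bound the Frobenius norm of each individual per-record Hessian. From the stated SVM Hessian, $H[f_{Huber}(y\theta^T\vec{x})]$ equals $\frac{y^2}{2h}\vec{x}\vec{x}^T$ on the active region $|1-y\theta^T\vec{x}|\leq h$ and is $0_{d\times d}$ otherwise. Using $y\in\{-1,1\}$ so that $y^2=1$, together with the normalization $\|\vec{x}\|_2\leq 1$, the key identity is $\|\vec{x}\vec{x}^T\|_F=\|\vec{x}\|_2^2\leq 1$, which follows from $\|\vec{x}\vec{x}^T\|_F^2=\operatorname{tr}(\vec{x}\vec{x}^T\vec{x}\vec{x}^T)=\|\vec{x}\|_2^4$. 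Hence each per-record SVM Hessian has Frobenius norm at most $\frac{1}{2h}$.

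Finally, the triangle inequality gives $\bigl\|H[f(\vec{x}',y',\theta)]-H[f(\vec{x},y,\theta)]\bigr\|_F\leq \frac{1}{2h}+\frac{1}{2h}=\frac{1}{h}$, and dividing by $n$ yields the claimed bound $\frac{1}{nh}$.

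I do not expect any genuine obstacle: this is a routine sensitivity calculation. The only points requiring care are (i) keeping track of which norm ``$L_2$ sensitivity'' refers to for matrices, namely the Frobenius norm induced by $\tovec$ as used in Algorithm~\ref{alg:privspdmat}, and (ii) the measure-zero set of points where the Huber Hessian is technically undefined, which may be assigned either one-sided value without affecting the bound, just as Chaudhuri et al.\ handle the nondifferentiability of the SVM objective. Note that the triangle-inequality step is slightly loose---the true worst case, attained when the two active feature vectors are orthogonal, is $\frac{1}{\sqrt{2}\,nh}$---but $\frac{1}{nh}$ is the clean upper bound the lemma asserts.
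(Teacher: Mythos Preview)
Your proposal is correct and follows essentially the same route as the paper: cancel the $2c\mathbf{I}$ term, reduce to a single changed record, bound each per-record Huber Hessian in Frobenius norm by $\tfrac{1}{2h}$ via $y^2=1$ and $\|\vec{x}\vec{x}^T\|_F=\|\vec{x}\|_2^2\le 1$, and finish with the triangle inequality. Your write-up is in fact a bit cleaner than the paper's case split, and your side remarks about the Frobenius interpretation of the $L_2$ sensitivity and the slight looseness of the triangle-inequality step are accurate but not needed for the lemma.
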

The proof of Lemma~\ref{lem:senshessiansvm} is in \ConfOrTech{the online Appendix~\ref{app:senshessiansvm}}{Appendix~\ref{app:senshessiansvm}}.

\section{Experiments}\label{sec:experiments}
We run experiments on several real datasets: Adult and KDDCUP99 data sets from \cite{Lichman:2013}, the Banking data set \cite{moro2014data}, the IPUMS-US \cite{ipums:us} dataset and the IPUMS-BR \cite{ipums:br} dataset. Adult \cite{Lichman:2013} is a dataset extracted from the 1994 Census database and contains 30,162 records on demographic information. A common task based on it is predicting whether annual income exceeds $\$50K$. KDDCUP99 \cite{Lichman:2013} is the dataset used for the Third International Knowledge Discovery and Data Mining Tools Competition which contains 4,898,431 records. It contains network traffic data simulated in a military network environment and the goal is to distinguish network attacks. Banking \cite{moro2014data} contains 45,211 records on the direct marketing phone calls of a Portuguese banking institution, and is used for predicting whether the client will subscribe a term deposit. US \cite{ipums:us} and BR \cite{ipums:br} are from IPUMS that provides census and survey data from around the world integrated across time and space. Users can freely choose the data samples and the variables to be used to create data extracts. In the paper, we use the versions from \cite{zhang2013privgene} where US has 39,928 records and BR has 38,000 records. The targets for both of them are predicting whether personal income exceeds some thresholds.

All the datasets contain both numerical and categorical attributes. As was done in \cite{chaudhuri2011differentially,zhang2013privgene}, following common practice for regression problems, we binarize each categorical attribute so that an attribute with cardinality $k$ becomes $k$ binary attributes. We then standardize each attribute so its maximum attribute value becomes 1 and then normalize each record to ensure its $L_2$ norm is upper bounded by 1. As for the target column, it is mapped to either -1 or 1. 
\techreport{After pre-processing, the dimensionality of each dataset is given in Table~\ref{tab:data}.}
To experiment with the sample size and dimensionality, we may extract sub-datasets from those datasets by first randomly permuting the dataset and then taking the first $n_1$ samples and/or the first $d_1$ features from it.
We also add a column of ones as the constant feature to each final dataset, and normalize each record again to bound the $L_2$ norm by 1. But the dimensionality $d$ reported with the experimental results is the one before adding the constant feature.

\techreport{
\begin{table}
\centering
\caption{Real Datasets Summary}
\label{tab:data}
\begin{tabular}{|c|cc|}
\hline
Dataset&$n$&$d$\\
\hline
Adult&30,162&37\\
KDDCUP99&4,898,431&90\\
Banking&45,211&34\\
IPUMS-US&39,928&57\\
IPUMS-BR&38,000&53\\
\hline
\end{tabular}
\end{table}
}

\subsection{Measures}\label{subsec:measures}
The quality of confidence intervals is evaluated using two complementary measures, coverage percentage and length. Coverage percentage is the fraction of times they cover the true parameters, so a putative 95\% confidence interval should cover the true parameter at least 95\% of the time. However, an infinitely long confidence interval can also cover the true parameter at least 95\% of the time, so the goal is to evaluate how short the confidence intervals are. We compare the length to a lower bound we call the \emph{variability interval} -- no confidence interval can be shorter than the variability interval. We describe how we measure coverage percentage and compute the variability intervals next.

\vspace{0.1cm}
\noindent\textbf{Coverage percentage}. For each dataset $\mathcal{D}$, we treat its empirical distribution as the true distribution and the non-private model parameters learned on the data as $\theta_0$. To simulate the effects of sampling, we create multiple ``sampled'' datasets $\mathcal{D}_1,\dots, \mathcal{D}_k$ by sampling with replacement from the original dataset $\mathcal{D}$. Each such dataset $\mathcal{D}_i$ is called a \emph{bootstrap replicate} and there are $k=1,000$ of them. To estimate coverage, for each $\mathcal{D}_i$ we use our algorithm to compute the privacy-preserving confidence interval $\left(\theta_L^{(i)}[j],\theta_R^{(i)}[j]\right)$ for each coordinate $j$ of $\theta_0$. The coverage percentage for a parameter $\theta_0[j]$ is then the fraction of the privacy-preserving confidence intervals that contain $\theta_0[j]$. The overall coverage is then the average coverage over all parameters:
\[\frac{\sum_{i=1}^k\sum_{j=1}^{d^{\prime}}\mathbf{1}_{\theta_L^{(i)}[j]\leq \theta_0[j]\leq \theta_R^{(i)}[j]}}{kd^{\prime}},\]
where $\mathbf{1}$ is the indicator function and $d^{\prime}$ is the dimensionality of $\theta_0$.

\vspace{0.1cm}
\noindent\textbf{Variability Intervals VI}. The variability interval directly measures the actual variation in parameter estimate due to sampling noise and due to the algorithm that estimates the parameters (e.g., output perturbation or objective perturbation). This is possible to obtain in controlled experiments. On the other hand, confidence intervals are an estimate (not a direct measurement) of this variability. We obtain variability intervals as follows:

For each dataset $\mathcal{D}$, we treat its empirical distribution as the true distribution and the non-private model parameters learned on the data as $\theta_0$. To simulate the effects of sampling, we create multiple ``sampled'' datasets $\mathcal{D}_1,\dots, \mathcal{D}_m$ by sampling with replacement from the original dataset $\mathcal{D}$. Each such dataset $\mathcal{D}_i$ is called a \emph{bootstrap replicate} and there are $m=10,000$ of them. This simulates variability due to sampling. On each $\mathcal{D}_i$ we run the privacy-preserving ERM algorithm (either output or objective perturbation) to get the estimate  $\tilde{\theta}^{(i)}$. The variability in these $\tilde{\theta}^{(i)}$ is thus solely due to sampling and privacy noise used to create the parameter estimates (with privacy budget $\phi_1$) -- in other words, it is not affected by the $\phi_2$ and $\phi_3$ that are used in our confidence interval algorithms.

For $1\leq j\leq d$, let $\tilde{\theta}_L[j]$ and $\tilde{\theta}_R[j]$ be the $\alpha/2$-quantile and $(1-\alpha/2)$-quantile of $\tilde{\theta}^{(1)}[j],\cdots$, $\tilde{\theta}^{(m)}[j]$, respectively. Then the $1-\alpha$ variability interval for coordinate $j$ is $(\tilde{\theta}_L[j],\tilde{\theta}_R[j])$. 

Clearly, any $1-\alpha$ confidence interval must therefore be at least as long as the $1-\alpha$ variability interval and so the quality of a confidence interval is measured as how long it is compared to the variability interval. Thus we plot average length of confidence intervals vs. average length of variability intervals.

Throughout our experiments, we use $\alpha=0.05$, $k=1,000$, $m=10,000$ (recall $m$ is the number of post-processing samples used in Algorithms \ref{alg:ciobj} and \ref{alg:cioutput} to estimate confidence intervals). For simplicity, in the figures, we use \textbf{DP} for differential privacy, \textbf{zCDP} for zero-concentrated differential privacy, \textbf{CI} for confidence interval, \textbf{VI} for variability interval, \textbf{obj} for ERM with objective perturbation, \textbf{output} for ERM with output perturbation, \textbf{LR} for logistic regression and \textbf{SVM} for support vector machines.

In our experimental results, we will report the privacy parameters $\epsilon$ for differential privacy and $\rho$ for zCDP. To compare differentially private and zCDP algorithms on the same plot, we set $\rho=\frac{\epsilon^2}{2}$. This is the closest possible apples-to-apples comparison, as any algorithm satisfying $\epsilon$-differential privacy satisfies $\rho=\frac{\epsilon^2}{2}$ zCDP \cite{bun2016concentrated}.

\subsection{Coverage Percentage Experiments}\label{subsec:validity}
We first check the coverage percentage of the private confidence intervals computed from our algorithms for the applications of logistic regression and SVM.
With $\alpha=0.05$, the expected coverage is about 0.95.
\ConfOrTech{In Tables \ref{tab:kddcup-n} through~\ref{tab:us-eps2}}{In Tables \ref{tab:kddcup-n} through~\ref{tab:br-eps3}}, we present the coverage percentage for the private confidence intervals computed from each of our algorithms with varying parameters.
That is, we test for both differential privacy and zCDP and we evaluate how good the intervals are for parameters obtained through objective perturbation and output perturbation. 
We can see that the coverage from our experiments are all close to $0.95$, which means the private confidence intervals generated from our methods are valid.

\begin{table*}[h!]
  \centering
  \caption{Coverage Percentage for KDDCUP99 with $d=10$, $\epsilon_1=0.5$, $\rho_1=0.125$, $\epsilon_2=\epsilon_3=0.25$, $\rho_2=\rho_3=0.03125$, $c=0.001$, $h=1.0$.}
  \label{tab:kddcup-n}
  \begin{tabular}{|c|c|c|c|c|c|c|c|c|c|c|c|}
    \hline
    \multicolumn{3}{|c|}{}\multirow{2}{*}{}&\multicolumn{9}{c|}{$n$}\\ \cline{4-12}
    \multicolumn{3}{|c|}{}&50,000&100,000&150,000&200,000&250,000&300,000&350,000&400,000&450,000\\ \cline{4-12}
    \hline
    \multirow{4}{*}{DP}&\multirow{2}{*}{obj}&LR&0.9680 &0.9794 &0.9838 &0.9880 &0.9921 &0.9920 &0.9934 &0.9943 &0.9951\\
    &&SVM&0.9545 &0.9738 &0.9760 &0.9865 &0.9859 &0.9893 &0.9906 &0.9915 &0.9938\\ \cline{2-12}
    &\multirow{2}{*}{output}&LR&0.9695 &0.9758 &0.9775 &0.9793 &0.9825 &0.9839 &0.9880 &0.9888 &0.9894\\
    &&SVM&0.9710 &0.9717 &0.9771 &0.9780 &0.9795 &0.9805 &0.9824 &0.9850 &0.9867\\
    \hline
    \multirow{4}{*}{zCDP}&\multirow{2}{*}{obj}&LR&0.9659 &0.9765 &0.9822 &0.9870 &0.9921 &0.9914 &0.9938 &0.9948 &0.9962\\
    &&SVM&0.9614 &0.9715 &0.9791 &0.9845 &0.9900 &0.9889 &0.9917 &0.9920 &0.9925\\ \cline{2-12}
    &\multirow{2}{*}{output}&LR&0.9915 &0.9961 &0.9960 &0.9975 &0.9975 &0.9975 &0.9985 &0.9982 &0.9984\\
    &&SVM&0.9915 &0.9936 &0.9936 &0.9948 &0.9955 &0.9949 &0.9953 &0.9953 &0.9948\\
    \hline
  \end{tabular}
\end{table*}

\begin{table*}[h!]
  \centering
  \caption{Coverage Percentage for Adult with $n=30,162$, $\epsilon_1=0.5$, $\rho_1=0.125$, $\epsilon_2=\epsilon_3=0.25$, $\rho_2=\rho_3=0.03125$, $c=0.001$, $h=1.0$.}
  \label{tab:adult-d}
  \begin{tabular}{|c|c|c|c|c|c|c|c|c|c|c|c|c|}
    \hline
    \multicolumn{3}{|c|}{}\multirow{2}{*}{}&\multicolumn{10}{c|}{$d$}\\ \cline{4-13}
    \multicolumn{3}{|c|}{}&1&2&3&4&5&6&7&8&9&10\\ \cline{4-13}
    \hline
    \multirow{4}{*}{DP}&\multirow{2}{*}{obj}&LR&0.9575 &0.9523 &0.9593 &0.9632 &0.9642 &0.9681 &0.9657 &0.9654 &0.9708 &0.9713\\
    &&SVM&0.9515 &0.9490 &0.9630 &0.9634 &0.9628 &0.9667 &0.9655 &0.9627 &0.9690 &0.9724\\ \cline{2-13}
    &\multirow{2}{*}{output}&LR&0.9500 &0.9577 &0.9545 &0.9598 &0.9603 &0.9629 &0.9590 &0.9559 &0.9653 &0.9658\\
    &&SVM&0.9545 &0.9550 &0.9547 &0.9590 &0.9525 &0.9610 &0.9593 &0.9648 &0.9615 &0.9605\\
    \hline
    \multirow{4}{*}{zCDP}&\multirow{2}{*}{obj}&LR&0.9645 &0.9580 &0.9513 &0.9572 &0.9592 &0.9594 &0.9584 &0.9556 &0.9577 &0.9631\\
    &&SVM&0.9580 &0.9480 &0.9473 &0.9600 &0.9462 &0.9603 &0.9594 &0.9547 &0.9542 &0.9576\\ \cline{2-13}
    &\multirow{2}{*}{output}&LR&0.9615 &0.9560 &0.9583 &0.9646 &0.9643 &0.9631 &0.9691 &0.9693 &0.9744 &0.9715\\
    &&SVM&0.9565 &0.9510 &0.9523 &0.9600 &0.9558 &0.9541 &0.9647 &0.9659 &0.9654 &0.9650\\
    \hline
  \end{tabular}
\end{table*}

\begin{table*}[h!]
  \centering
  \caption{Coverage Percentage for Banking with $n=45,211$, $d=10$, $\epsilon_2=\epsilon_3=0.25$, $\rho_2=\rho_3=0.03125$, $c=0.001$, $h=1.0$.}
  \label{tab:banking-eps}
  \begin{tabular}{|c|c|c|c|c|c|c|c|c|c|c|}
    \hline
    \multicolumn{3}{|c|}{}\multirow{2}{*}{}&\multicolumn{8}{c|}{$\epsilon_1\quad$ ($\rho_1=\epsilon_1^2/2$)}\\ \cline{4-11}
    \multicolumn{3}{|c|}{}&0.3&0.4&0.5&0.6&0.7&0.8&0.9&1.0\\ \cline{4-11}
    \hline
    \multirow{4}{*}{DP}&\multirow{2}{*}{obj}&LR&0.9568 &0.9646 &0.9760 &0.9752 &0.9805 &0.9846 &0.9868 &0.9850\\
    &&SVM&0.9743 &0.9798 &0.9835 &0.9871 &0.9898 &0.9921 &0.9923 &0.9932\\ \cline{2-11}
    &\multirow{2}{*}{output}&LR&0.9533 &0.9637 &0.9645 &0.9728 &0.9745 &0.9751 &0.9799 &0.9821\\
    &&SVM&0.9599 &0.9591 &0.9634 &0.9657 &0.9705 &0.9727 &0.9755 &0.9792\\
    \hline
    \multirow{4}{*}{zCDP}&\multirow{2}{*}{obj}&LR&0.9546 &0.9515 &0.9581 &0.9635 &0.9643 &0.9657 &0.9715 &0.9729\\
    &&SVM&0.9550 &0.9525 &0.9601 &0.9610 &0.9577 &0.9654 &0.9656 &0.9695\\ \cline{2-11}
    &\multirow{2}{*}{output}&LR&0.9697 &0.9716 &0.9733 &0.9770 &0.9781 &0.9791 &0.9780 &0.9803\\
    &&SVM&0.9615 &0.9603 &0.9667 &0.9697 &0.9695 &0.9731 &0.9721 &0.9754\\
    \hline
  \end{tabular}
\end{table*}

\begin{table*}[h!]
  \centering
  \caption{Coverage Percentage for US with $n=39,928$, $d=10$, $\epsilon_1=0.5$,
    $\rho_1=0.125$, $\epsilon_3=0.25$, $\rho_3=0.03125$, $c=0.001$, $h=1.0$.}
  \label{tab:us-eps2}
  \begin{tabular}{|c|c|c|c|c|c|c|c|c|c|c|c|c|}
    \hline
    \multicolumn{3}{|c|}{}\multirow{2}{*}{}&\multicolumn{10}{c|}{$\epsilon_2\quad$ ($\rho_2=\epsilon_2^2/2$)}\\ \cline{4-13}
    \multicolumn{3}{|c|}{}&0.1&0.2&0.3&0.4&0.5&0.6&0.7&0.8&0.9&1.0\\ \cline{4-13}
    \hline
    \multirow{4}{*}{DP}&\multirow{2}{*}{obj}&LR&0.9751 &0.9756 &0.9740 &0.9731 &0.9715 &0.9705 &0.9692 &0.9717 &0.9690 &0.9717\\
    &&SVM&0.9720 &0.9732 &0.9725 &0.9678 &0.9739 &0.9674 &0.9643 &0.9653 &0.9662 &0.9649\\ \cline{2-13}
    &\multirow{2}{*}{output}&LR&0.9596 &0.9624 &0.9607 &0.9613 &0.9580 &0.9619 &0.9572 &0.9593 &0.9570 &0.9578\\
    &&SVM&0.9685 &0.9641 &0.9608 &0.9618 &0.9611 &0.9567 &0.9597 &0.9555 &0.9593 &0.9574\\
    \hline
    \multirow{4}{*}{zCDP}&\multirow{2}{*}{obj}&LR&0.9546 &0.9535 &0.9535 &0.9558 &0.9495 &0.9516 &0.9554 &0.9549 &0.9551 &0.9527\\
    &&SVM&0.9547 &0.9543 &0.9497 &0.9479 &0.9505 &0.9461 &0.9487 &0.9505 &0.9484 &0.9502\\ \cline{2-13}
    &\multirow{2}{*}{output}&LR&0.9626 &0.9577 &0.9612 &0.9626 &0.9585 &0.9576 &0.9606 &0.9595 &0.9620 &0.9595\\
    &&SVM&0.9588 &0.9546 &0.9524 &0.9467 &0.9512 &0.9490 &0.9489 &0.9535 &0.9505 &0.9499\\
    \hline
  \end{tabular}
\end{table*}

\techreport{
\begin{table*}
  \centering
  \caption{Coverage Percentage for BR with $n=38,000$, $d=10$, $\epsilon_1=0.5$,
    $\rho_1=0.125$, $\epsilon_2=0.25$, $\rho_2=0.03125$, $c=0.001$, $h=1.0$.}
    \label{tab:br-eps3}
  \begin{tabular}{|c|c|c|c|c|c|c|c|c|c|c|c|c|}
    \hline
    \multicolumn{3}{|c|}{}\multirow{2}{*}{}&\multicolumn{10}{c|}{$\epsilon_3\quad$ ($\rho_3=\epsilon_3^2/2$)}\\ \cline{4-13}
    \multicolumn{3}{|c|}{}&0.1&0.2&0.3&0.4&0.5&0.6&0.7&0.8&0.9&1.0\\ \cline{4-13}
    \hline
    \multirow{4}{*}{DP}&\multirow{2}{*}{obj}&LR&0.9829 &0.9722 &0.9682 &0.9620 &0.9595 &0.9499 &0.9602 &0.9540 &0.9578 &0.9561\\
    &&SVM&0.9815 &0.9679 &0.9618 &0.9629 &0.9635 &0.9603 &0.9592 &0.9567 &0.9617 &0.9545\\ \cline{2-13}
    &\multirow{2}{*}{output}&LR&0.9705 &0.9663 &0.9619 &0.9574 &0.9574 &0.9593 &0.9557 &0.9589 &0.9580 &0.9619\\
    &&SVM&0.9735 &0.9660 &0.9616 &0.9638 &0.9634 &0.9588 &0.9593 &0.9607 &0.9592 &0.9572\\
    \hline
    \multirow{4}{*}{zCDP}&\multirow{2}{*}{obj}&LR&0.9629 &0.9595 &0.9618 &0.9604 &0.9585 &0.9645 &0.9568 &0.9601 &0.9622 &0.9608\\
    &&SVM&0.9554 &0.9555 &0.9577 &0.9590 &0.9575 &0.9561 &0.9575 &0.9586 &0.9555 &0.9551\\ \cline{2-13}
    &\multirow{2}{*}{output}&LR&0.9757 &0.9735 &0.9715 &0.9746 &0.9730 &0.9720 &0.9745 &0.9733 &0.9737 &0.9722\\
    &&SVM&0.9695 &0.9682 &0.9650 &0.9677 &0.9672 &0.9689 &0.9695 &0.9673 &0.9650 &0.9685\\
    \hline
  \end{tabular}
\end{table*}
}

\begin{figure*}[h!]
\centering
\subfloat[KDDCUP99, LR, $d=10$]{
\includegraphics[width=0.33\textwidth]{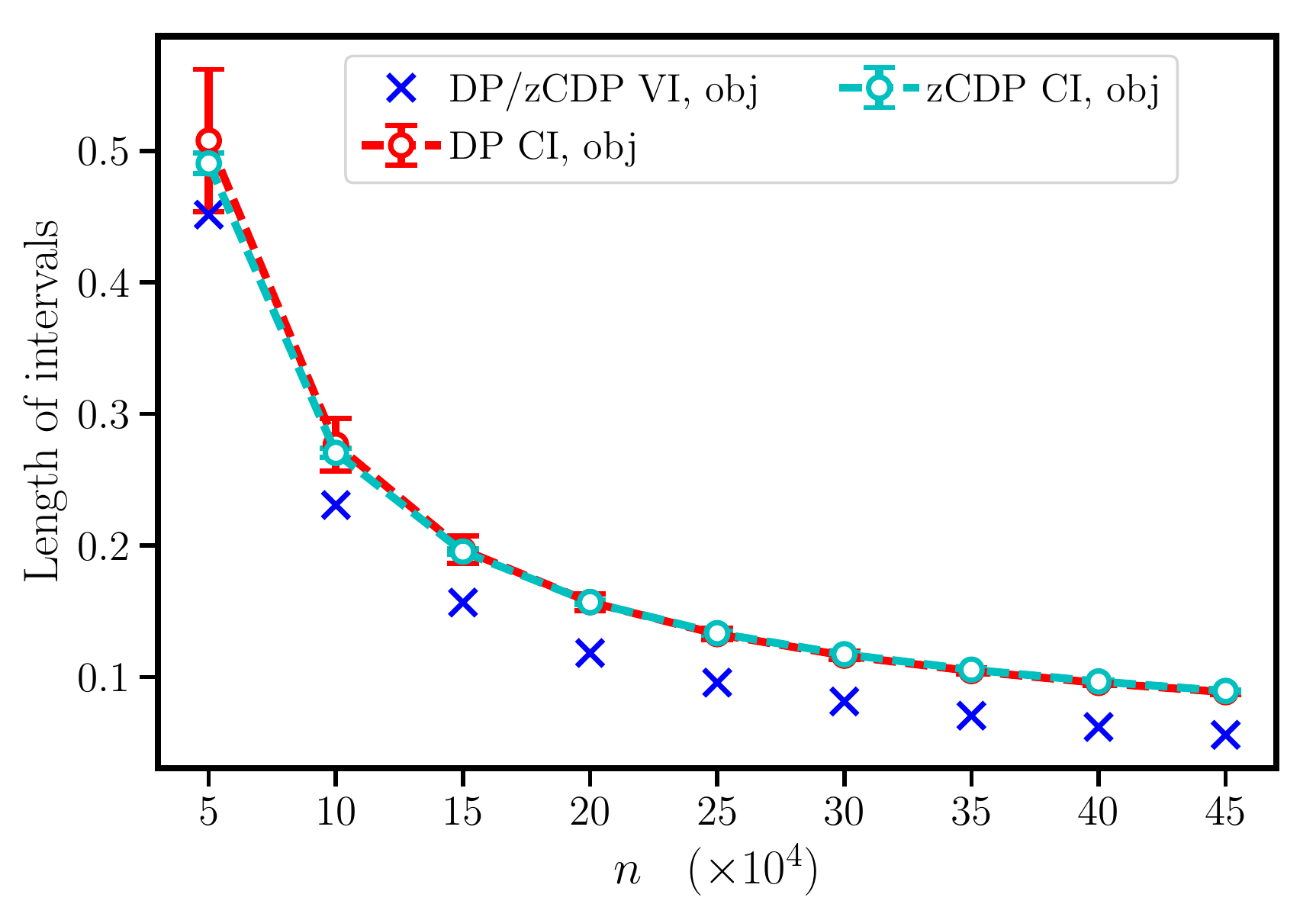}
}%
\hskip -8pt
\subfloat[Adult, LR, $n=30,162$]{
\includegraphics[width=0.33\textwidth]{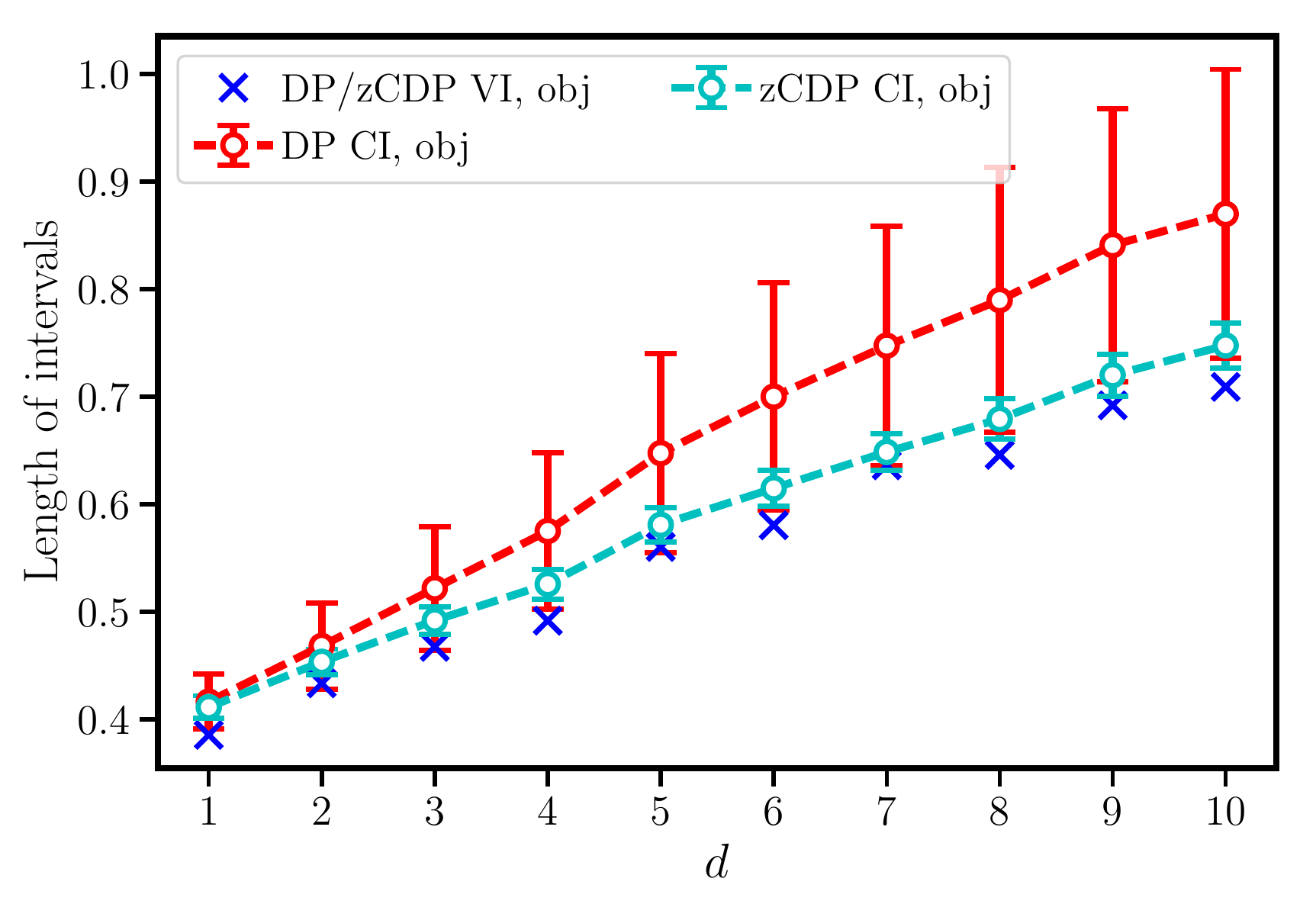}
}%
\hskip -8pt
\subfloat[Adult, SVM, $n=30,162$]{
\includegraphics[width=0.33\textwidth]{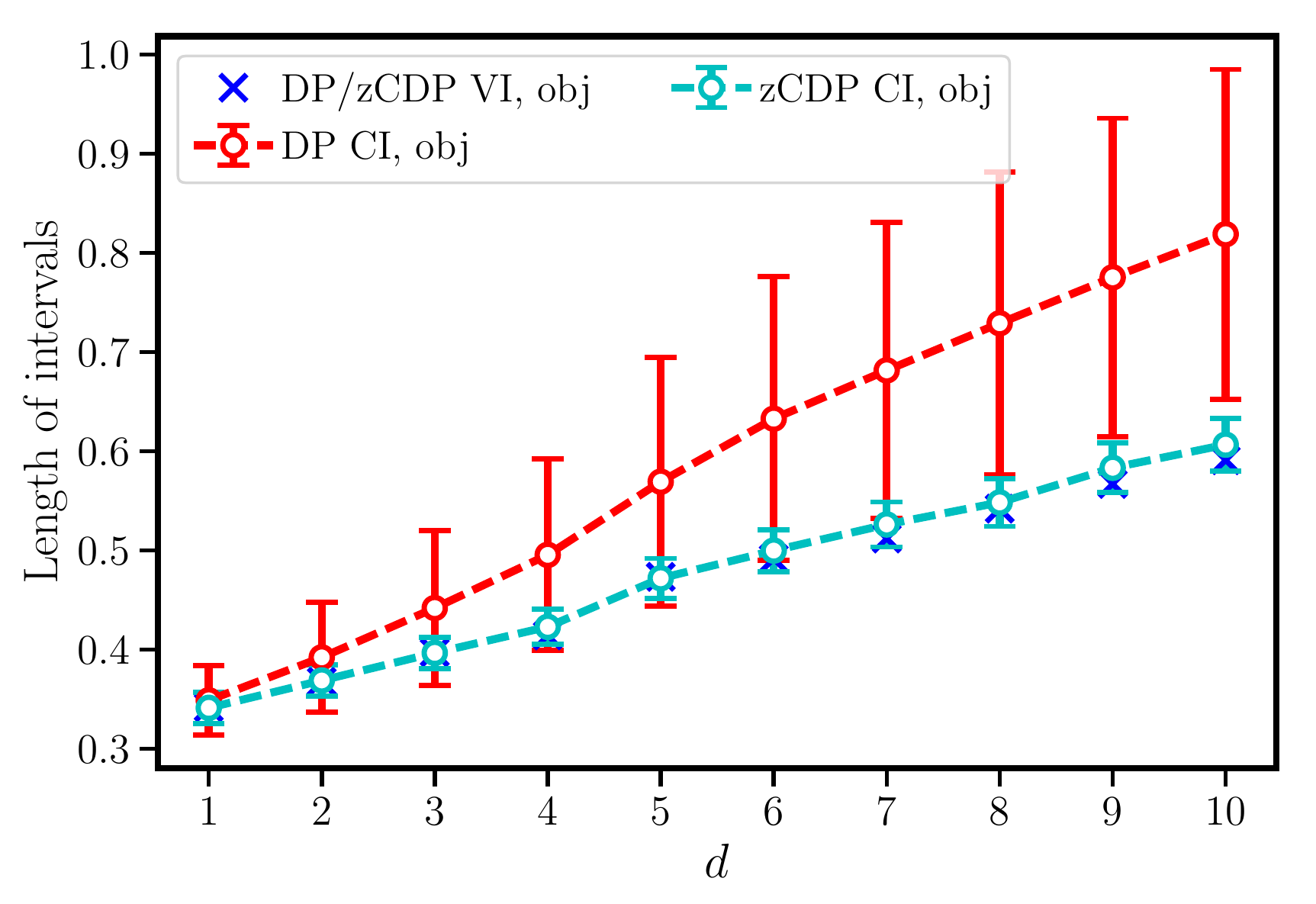}
}%
\hskip -8pt
\subfloat[Banking, LR, $\rho_1=\epsilon_1^2/2$]{
\includegraphics[width=0.33\textwidth]{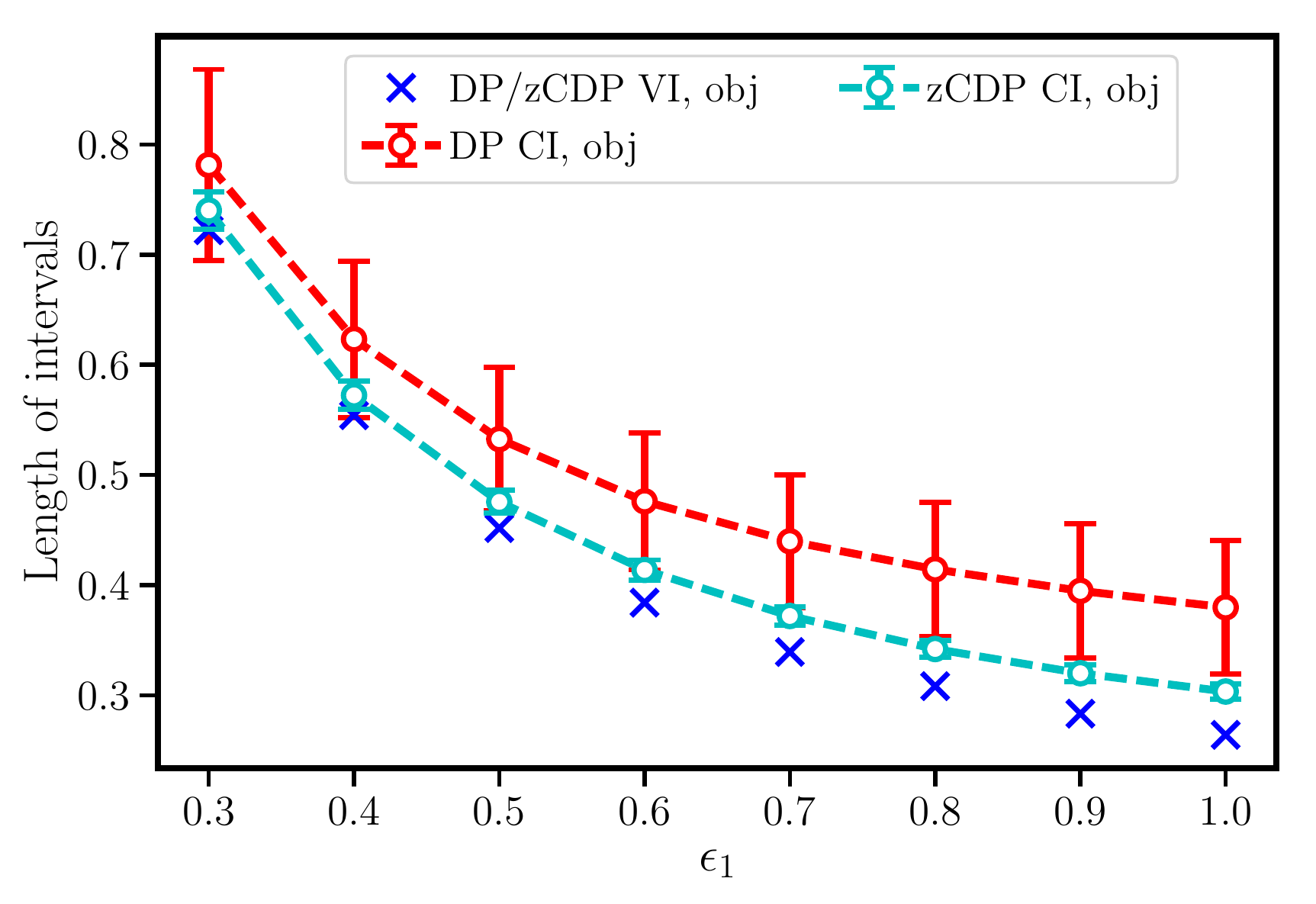}
}%
\hskip -8pt
\subfloat[Banking, SVM, $\rho_1=\epsilon_1^2/2$]{
\includegraphics[width=0.33\textwidth]{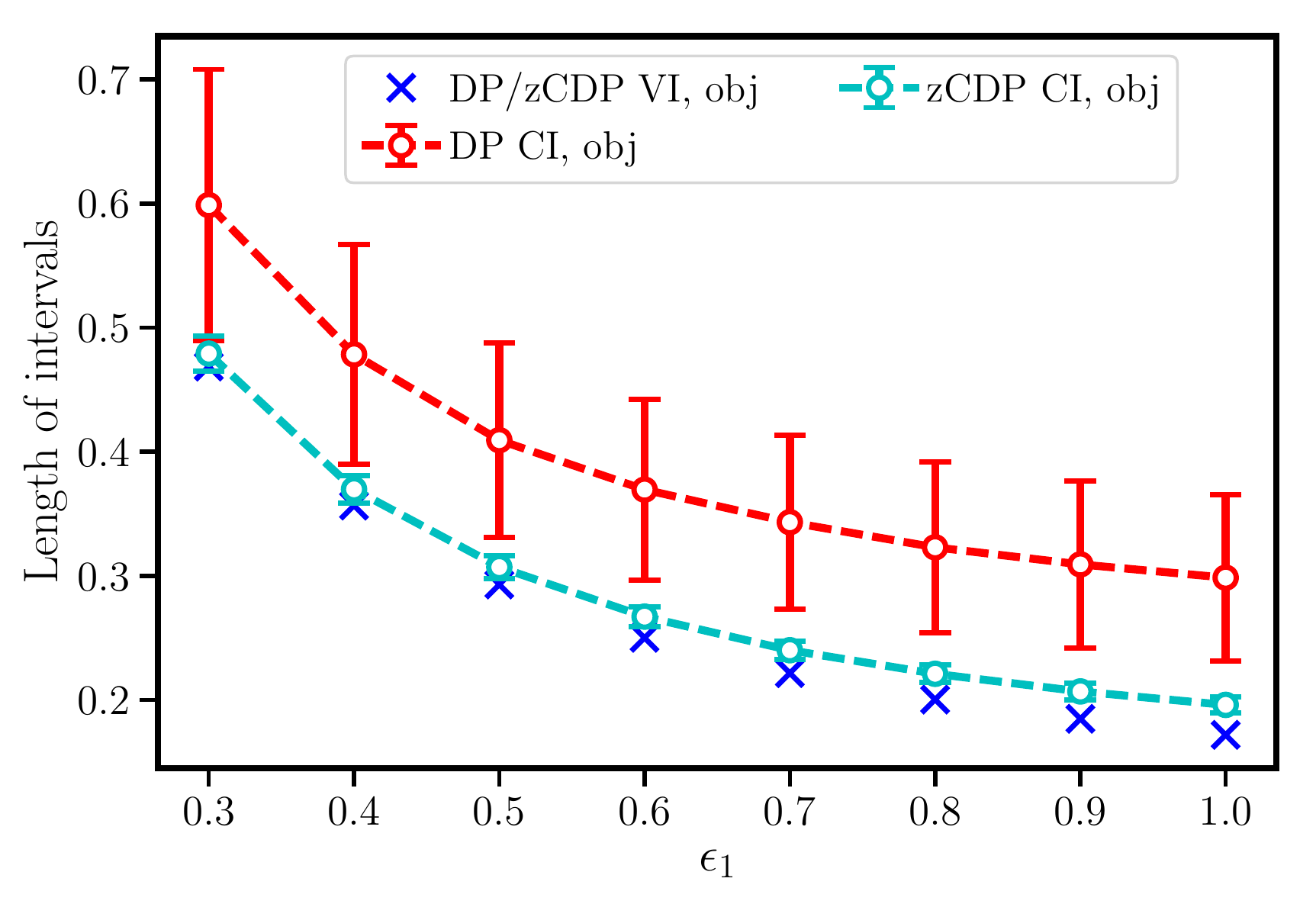}
}%
\hskip -8pt
\subfloat[BR, LR, $\rho_1=\epsilon_1^2/2$]{
\includegraphics[width=0.33\textwidth]{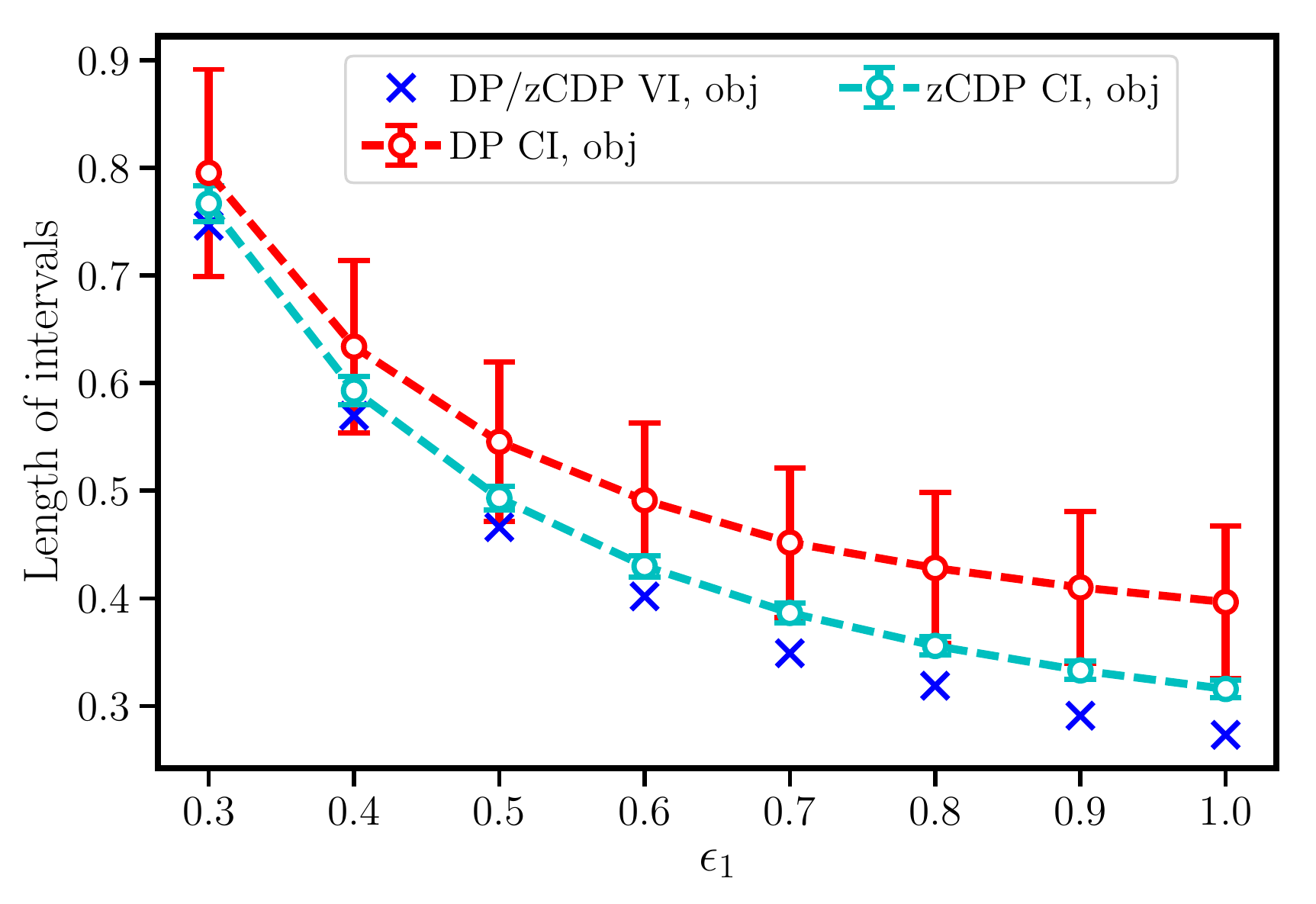}
}%
\hskip -8pt
\subfloat[US, LR, $\rho_2=\epsilon_2^2/2$]{
\includegraphics[width=0.33\textwidth]{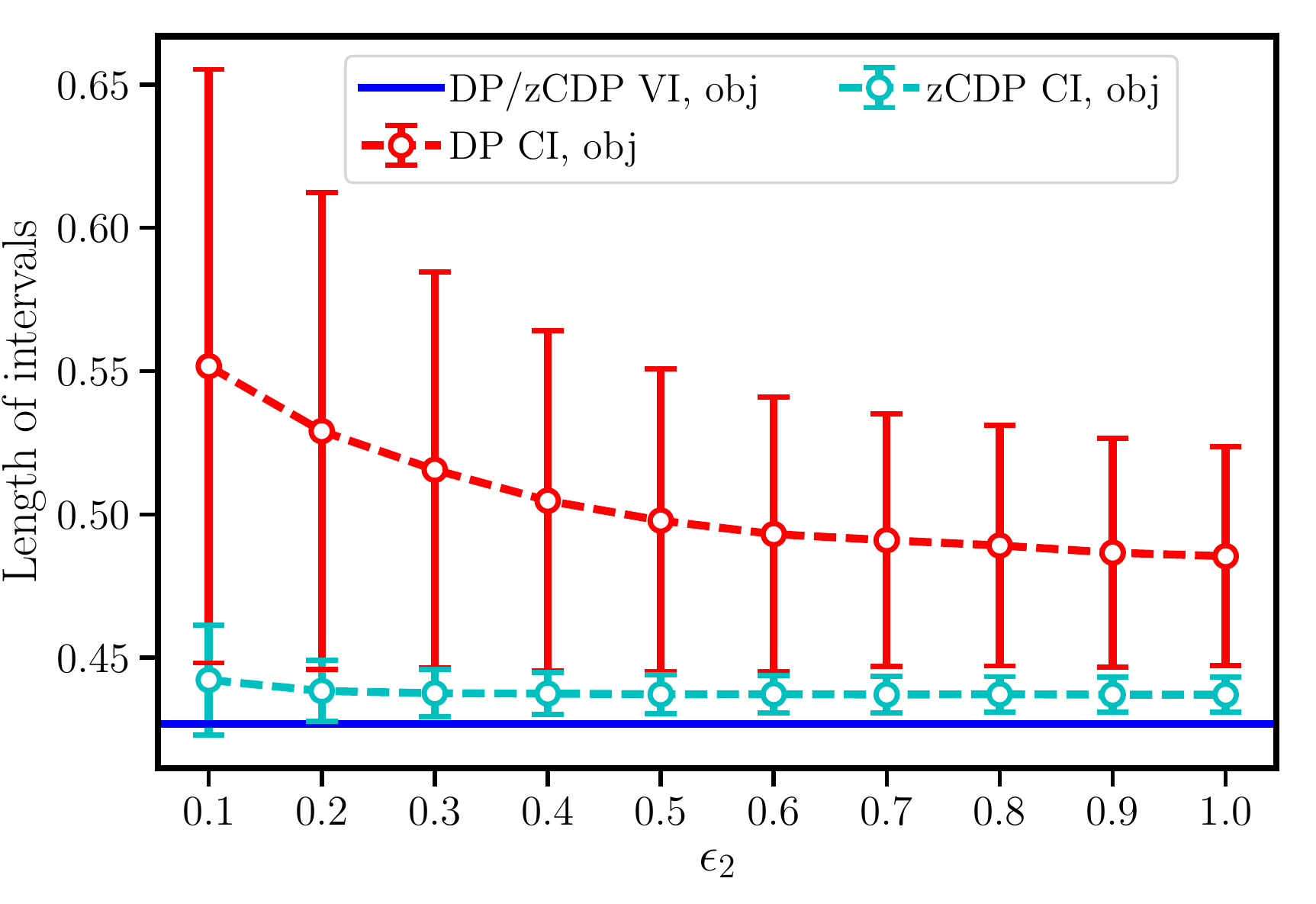}
}%
\hskip -8pt
\subfloat[US, SVM, $\rho_2=\epsilon_2^2/2$]{
\includegraphics[width=0.33\textwidth]{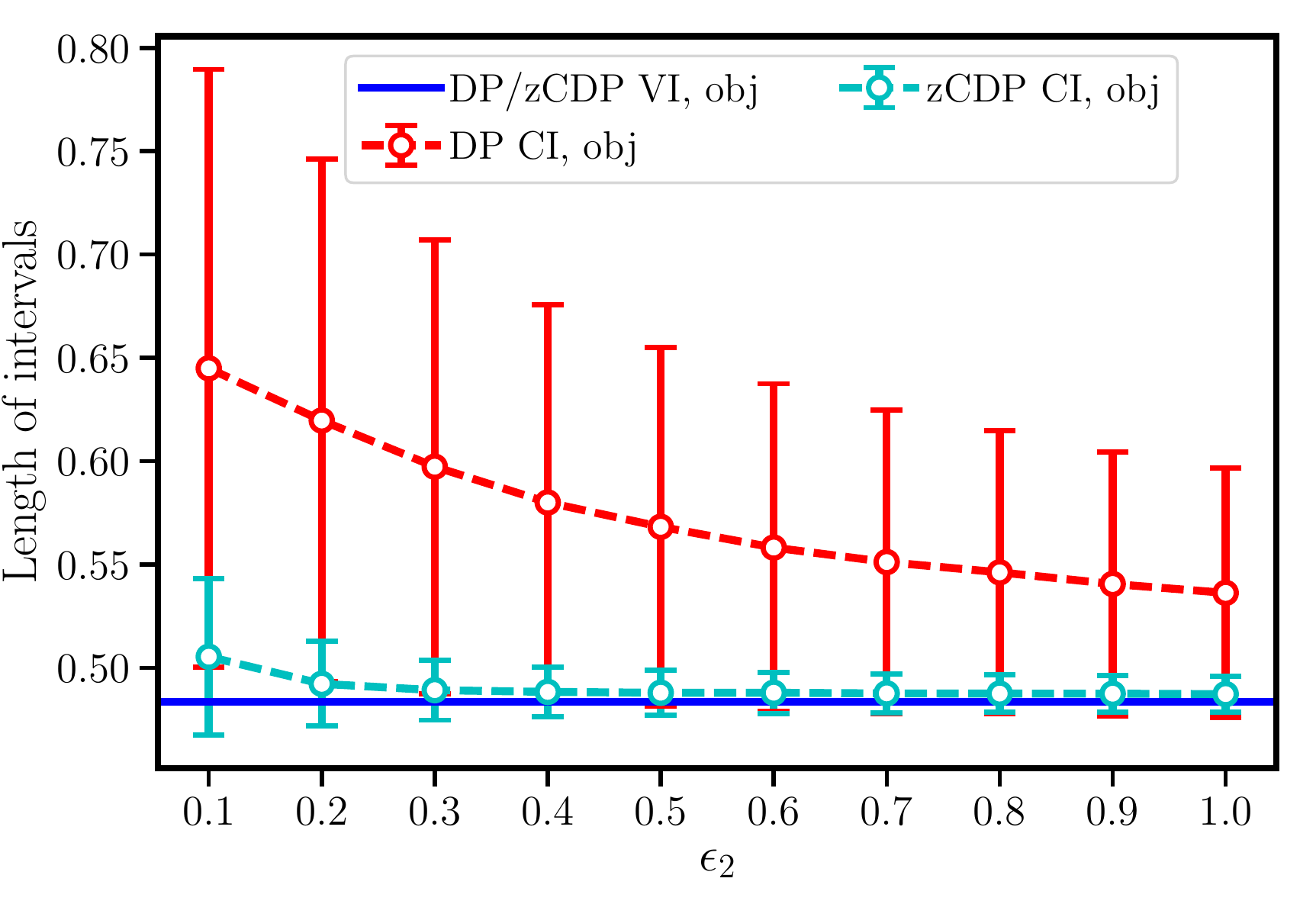}
}%
\hskip -8pt
\subfloat[BR, SVM, $\rho_3=\epsilon_3^2/2$]{
\includegraphics[width=0.33\textwidth]{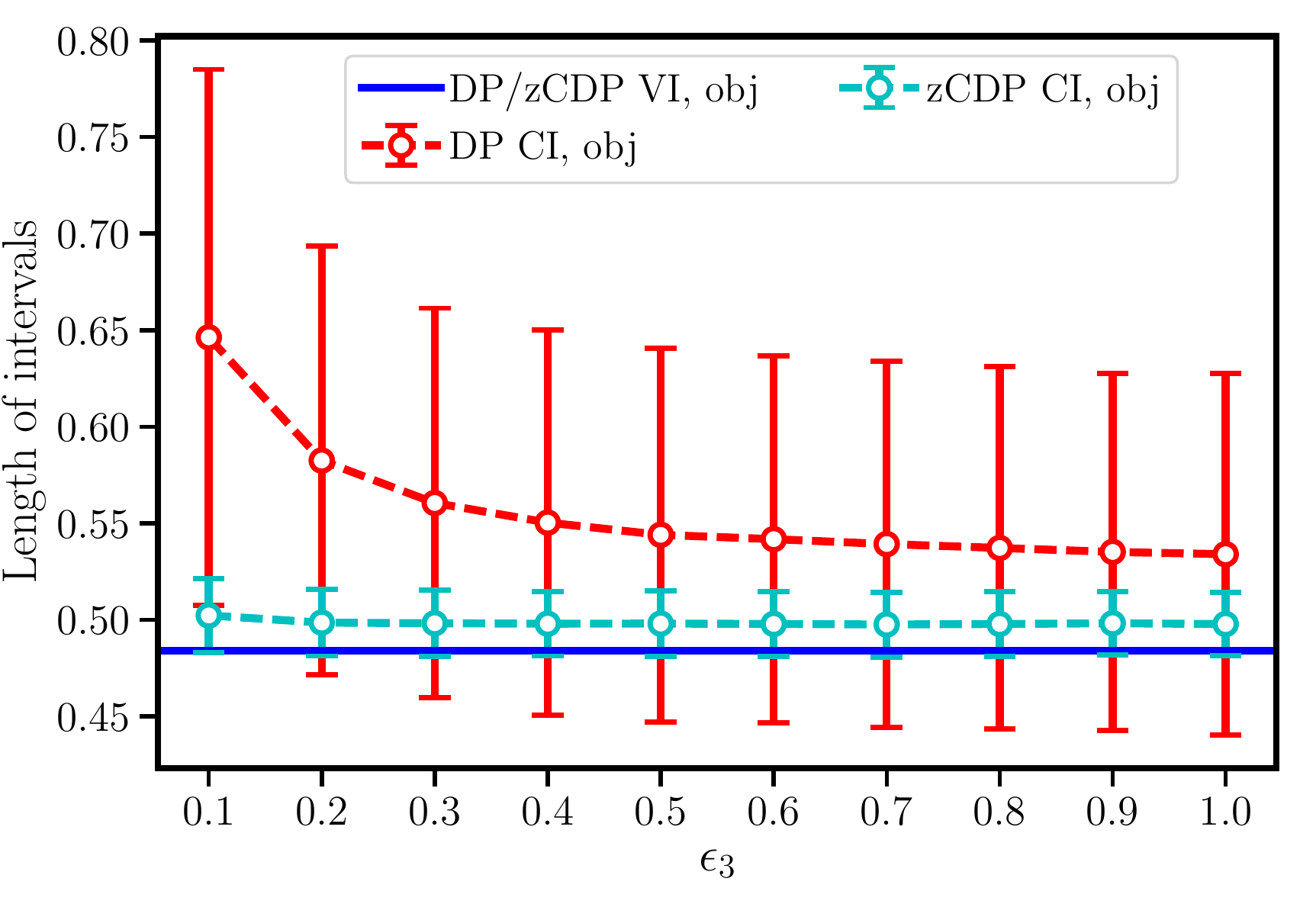}
}%
\vskip -8pt
\caption{Length of intervals for DP vs zCDP with objective perturbation ERM. Error bars correspond to one standard deviation for CI. (a)-(c): $\epsilon_1=0.5$, $\rho_1=0.125$, $\epsilon_2=\epsilon_3=0.25$, $\rho_2=\rho_3=0.03125$; (d)-(f): $n=45,211$ for Banking, $n=38,000$ for BR, $d=10$, $\epsilon_2=\epsilon_3=0.25$, $\rho_2=\rho_3=0.03125$; (g)-(h): $n=39,928$, $d=10$, $\epsilon_1=0.5$, $\rho_1=0.125$, $\epsilon_3=0.25$, $\rho_3=0.03125$; (i): $n=38,000$, $d=10$, $\epsilon_1=0.5$, $\rho_1=0.125$, $\epsilon_2=0.25$, $\rho_2=0.03125$. Common parameters: $c=0.001$, $h=1.0$.}
\label{fig:obj}
\end{figure*}

\subsection{Comparison among the Private Confidence Intervals}\label{subsec:comparison}

We test the performance of the private confidence intervals on the applications of logistic regression and SVM. We will compare the length of our confidence intervals to that of the variability intervals.

We will directly compare different configurations, like intervals using differential privacy vs zCDP, intervals for coefficients obtained through objective perturbation vs. intervals obtained through output perturbation. In all of those cases, we vary various parameters, like the sample size, dimensionality and privacy parameters.
When we are varying the dimensionality $d$, we report the length of the intervals from the first coordinate (even though we estimate all of them). This is so that we are comparing equivalent parameters that have the same meaning. For example it makes no sense to compare the interval for the first parameter when $d=1$ to the average of the first two parameters when $d=2$.

When dimensionality is not being varied, we report the average length of the intervals across all coordinates.

\subsubsection{DP vs zCDP with Objective Perturbation ERM}

In Figure~\ref{fig:obj}, we experiment with the different privacy definitions for parameters obtained with objective perturbation. Figure~\ref{fig:obj}a shows results for varying sample size on the KDDCUP99 dataset. Differential privacy and zCDP performs similarly except when $n=50,000$ where zCDP confidence intervals are a little bit shorter in length. For the rest of the graphs in Figure~\ref{fig:obj} where we vary parameters other than the sample size, zCDP outperforms differential privacy by both average value and variance for length of confidence intervals. This is because  zCDP is a relaxation of differential privacy. Moreover, the zCDP confidence intervals perform very well by showing closeness in length to the variability intervals.

\ConfOrTech{
\begin{figure*}[h!]
\centering
\subfloat[KDDCUP99, SVM, $d=10$]{
\includegraphics[width=0.33\textwidth]{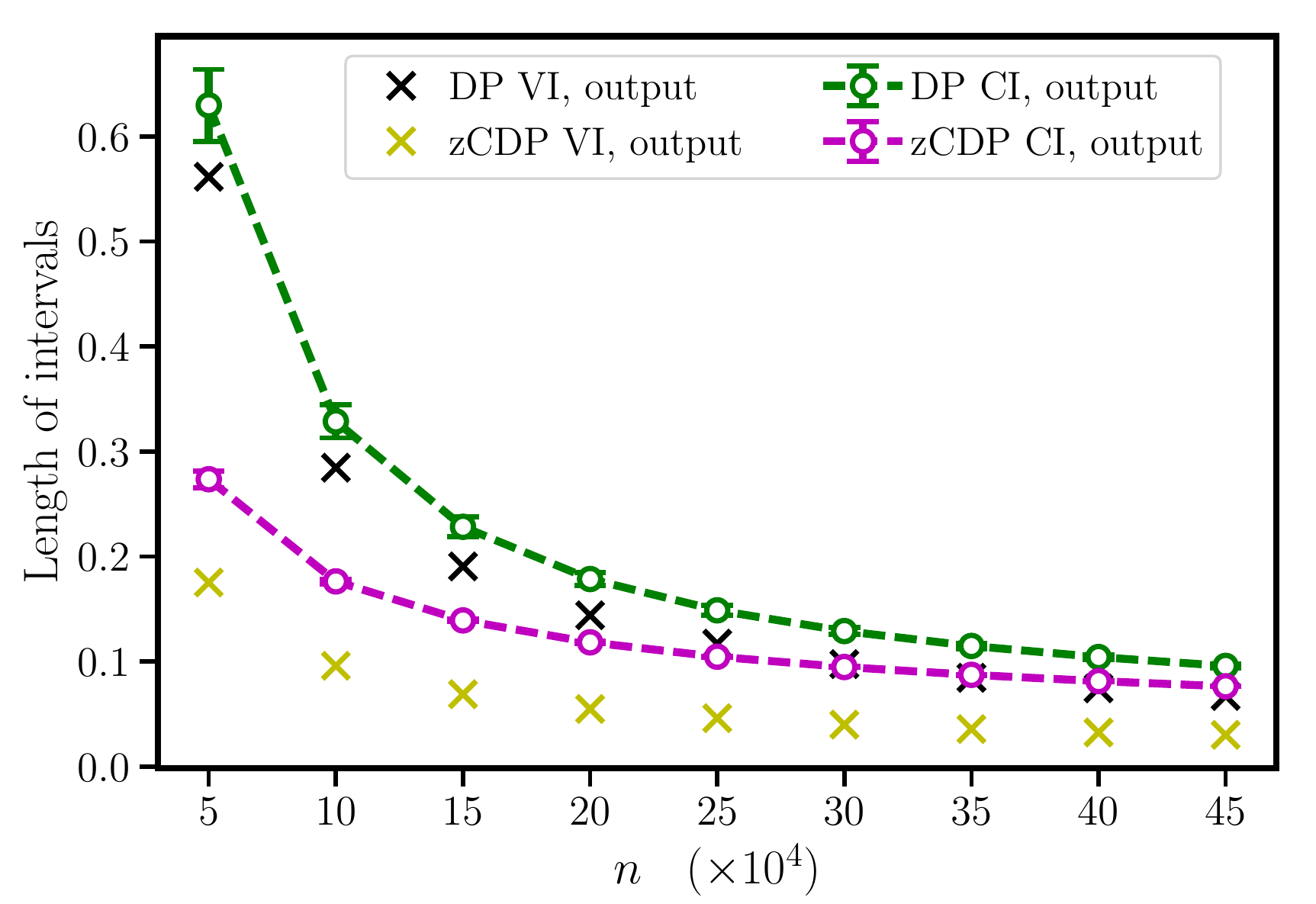}
}%
\hskip -8pt
\subfloat[Banking, LR, $n=45,211$]{
\includegraphics[width=0.33\textwidth]{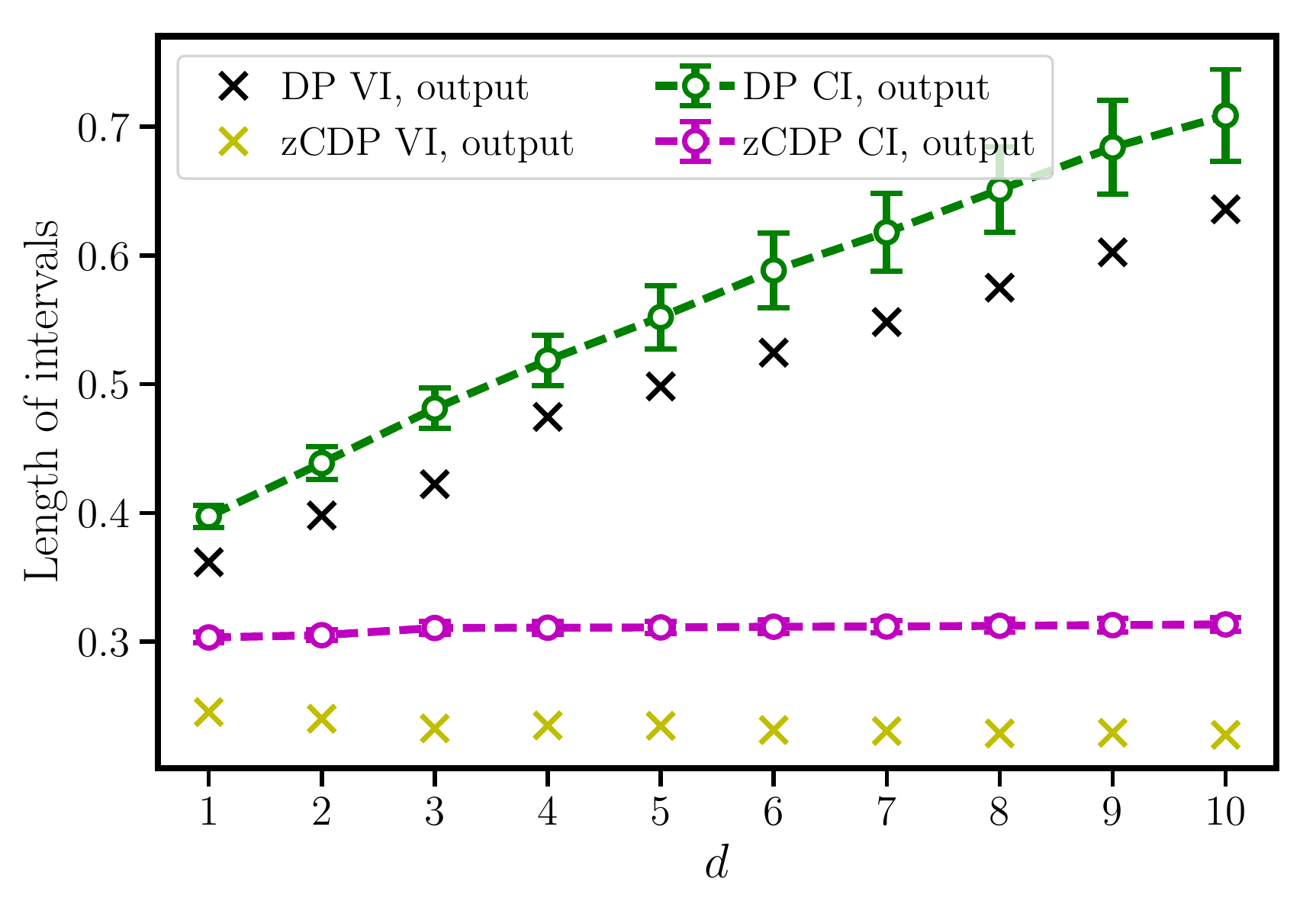}
}%
\hskip -8pt
\subfloat[Adult, LR, $\rho_1=\epsilon_1^2/2$]{
\includegraphics[width=0.33\textwidth]{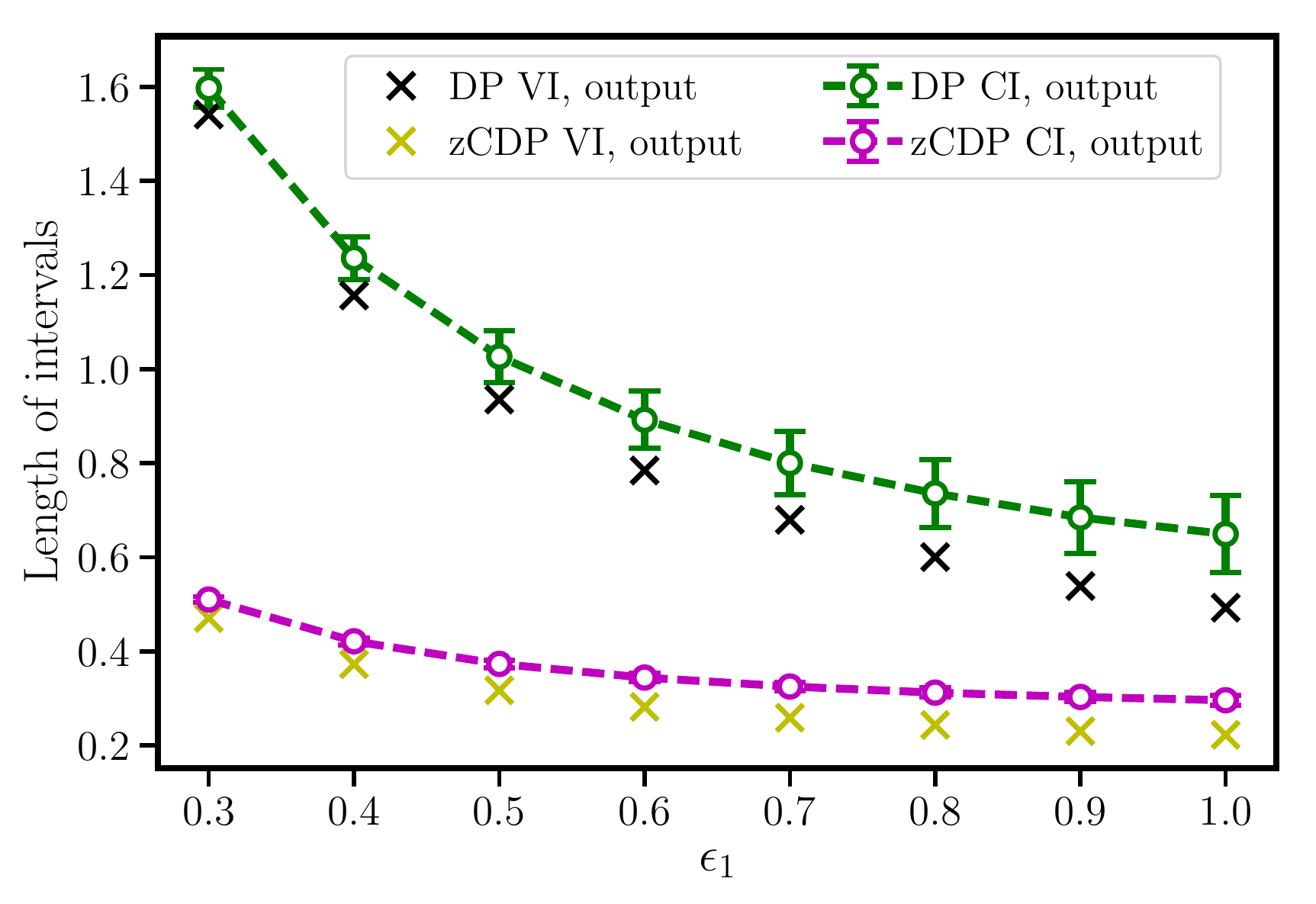}
}%
\hskip -8pt
\subfloat[US, LR, $\rho_1=\epsilon_1^2/2$]{
\includegraphics[width=0.33\textwidth]{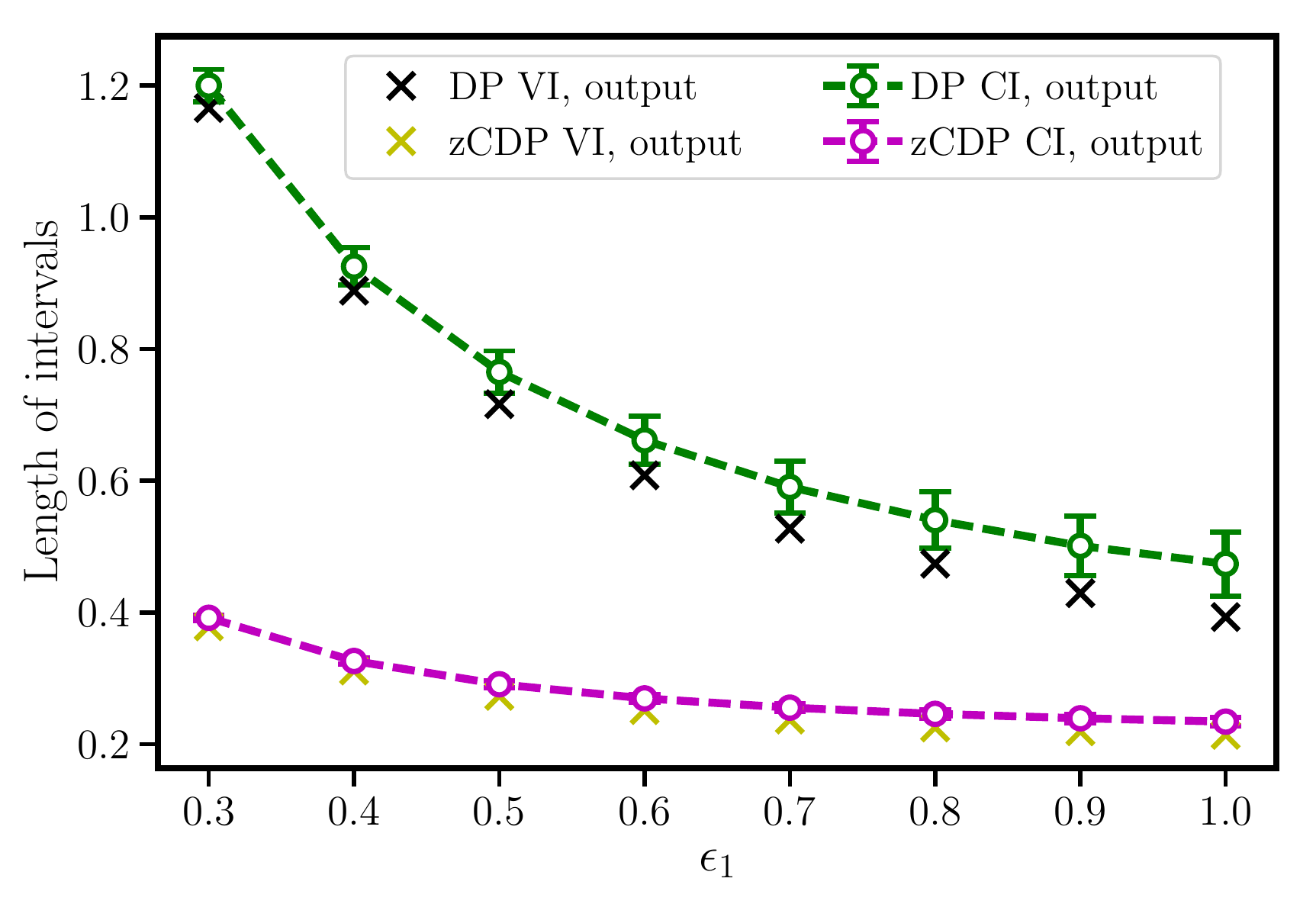}
}%
\hskip -8pt
\subfloat[BR, SVM, $\rho_2=\epsilon_2^2/2$]{
\includegraphics[width=0.33\textwidth]{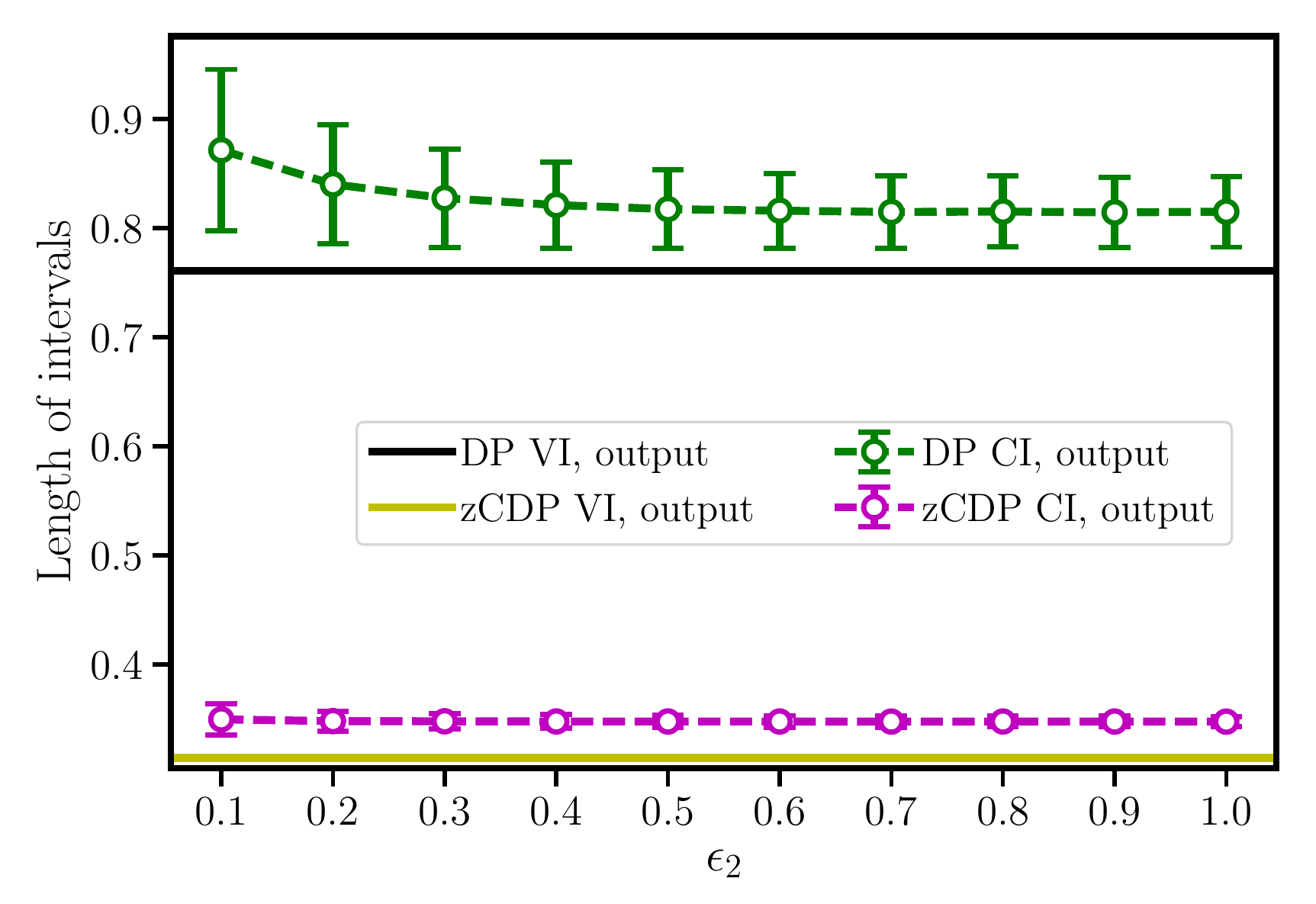}
}%
\hskip -8pt
\subfloat[BR, LR, $\rho_3=\epsilon_3^2/2$]{
\includegraphics[width=0.33\textwidth]{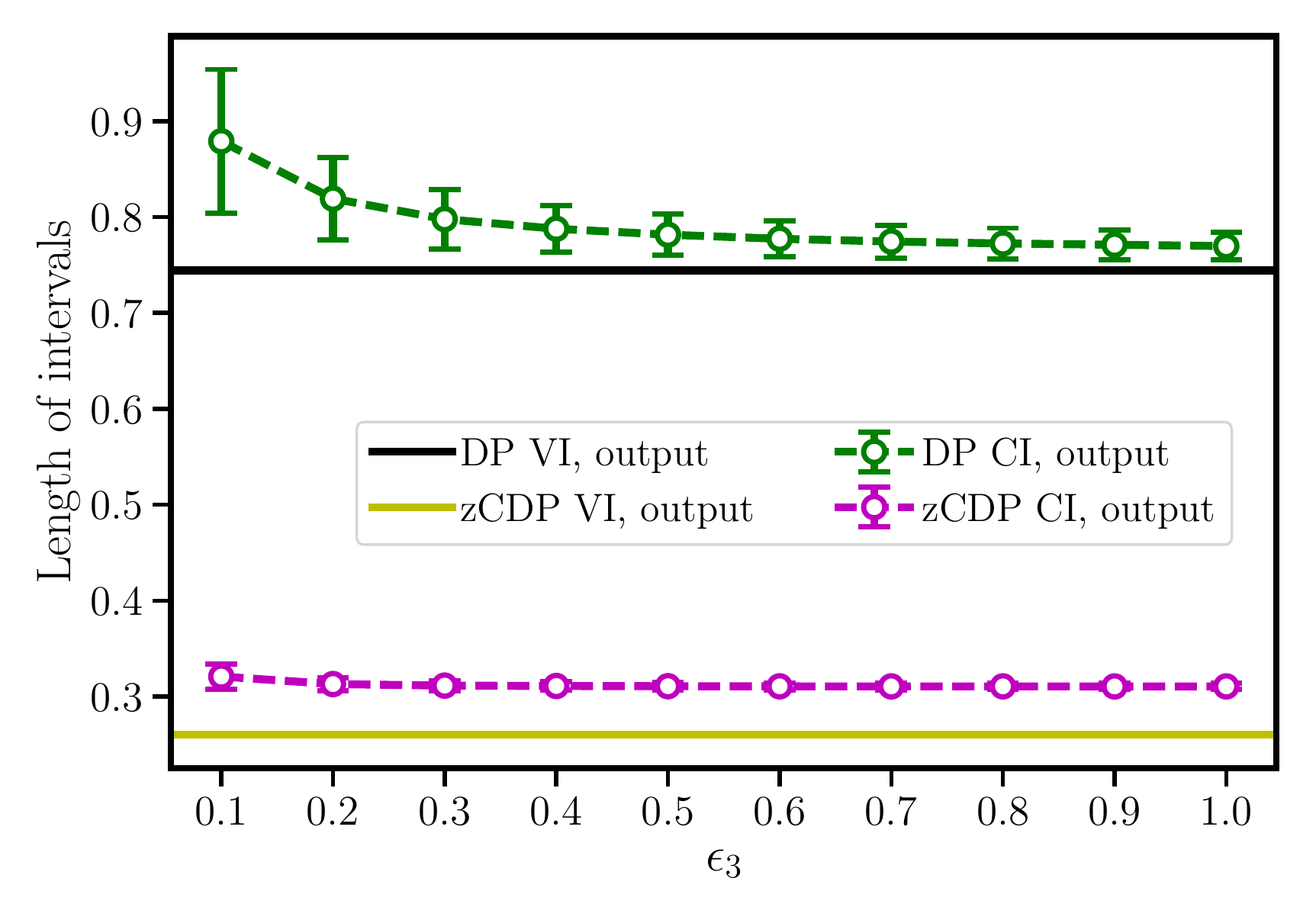}
}%
\vskip -8pt
\caption{Length of intervals for DP vs zCDP with output perturbation ERM. Error bars correspond to one standard deviation for CI. (a)-(b): $\epsilon_1=0.707$, $\rho_1=0.25$, $\epsilon_2=\epsilon_3=0.25$, $\rho_2=\rho_3=0.03125$; (c)-(d): $n=30,162$ for Adult, $n=39,928$ for US, $d=10$, $\epsilon_2=\epsilon_3=0.25$, $\rho_2=\rho_3=0.03125$; (e)-(f): $n=38,000$, $d=10$, $\epsilon_1=0.707$, $\rho_1=0.25$, $\epsilon_3=0.25$ and $\rho_3=0.03125$ for (e), $\epsilon_2=0.25$ and $\rho_2=0.03125$ for (f). Common parameters: $c=0.001$, $h=1.0$.}
\label{fig:output}
\end{figure*}
}
{
\begin{figure*}[h!]
\centering
\subfloat[KDDCUP99, LR, $d=10$]{
\includegraphics[width=0.33\textwidth]{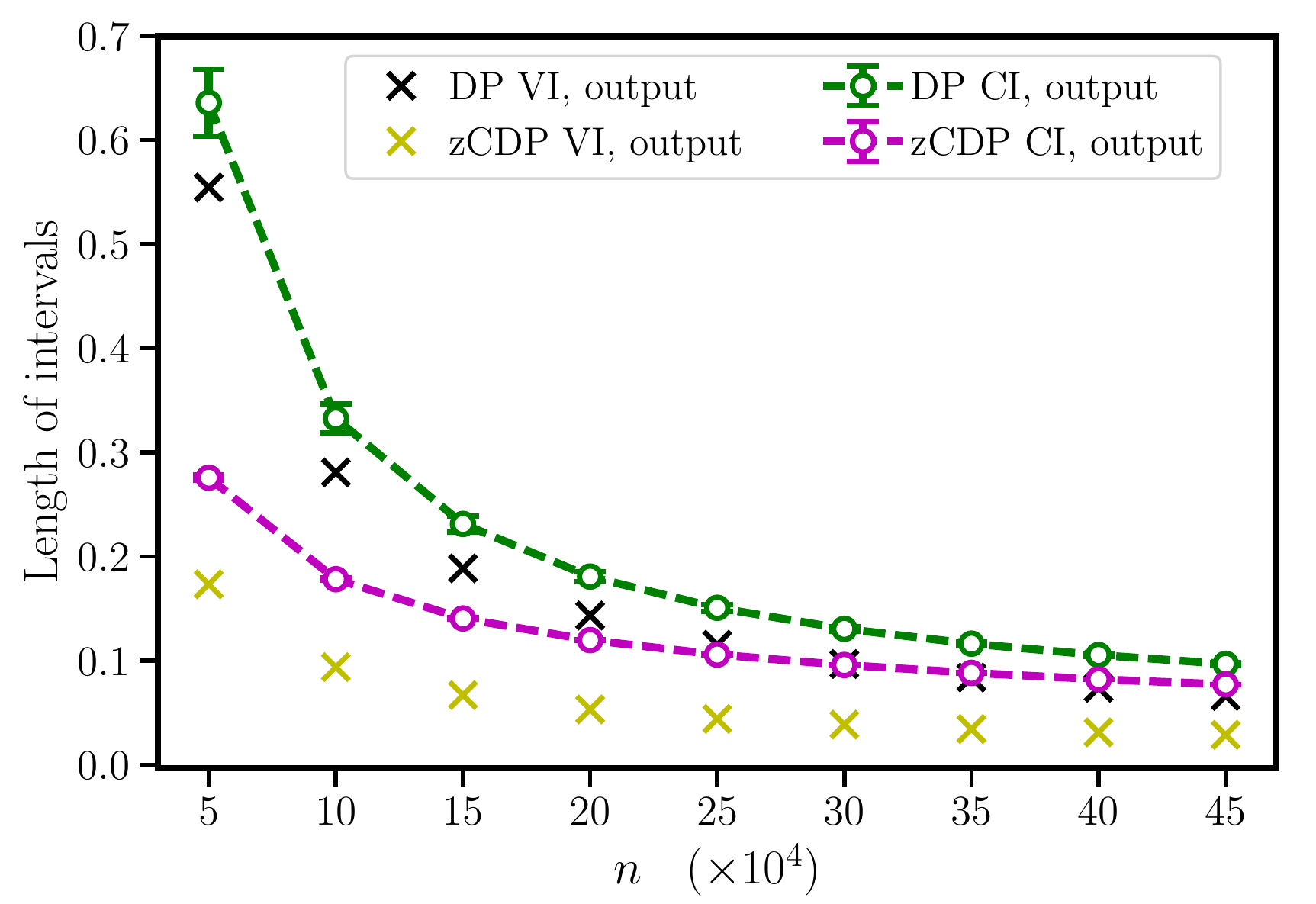}
}%
\hskip -8pt
\subfloat[KDDCUP99, SVM, $d=10$]{
\includegraphics[width=0.33\textwidth]{figs/kddcup-output-svm-n-d10-c0_001}
}%
\hskip -8pt
\subfloat[Banking, LR, $n=45,211$]{
\includegraphics[width=0.33\textwidth]{figs/banking-output-lr-n45211-d-c0_001}
}%
\hskip -8pt
\subfloat[Adult, SVM, $n=30,162$]{
\includegraphics[width=0.33\textwidth]{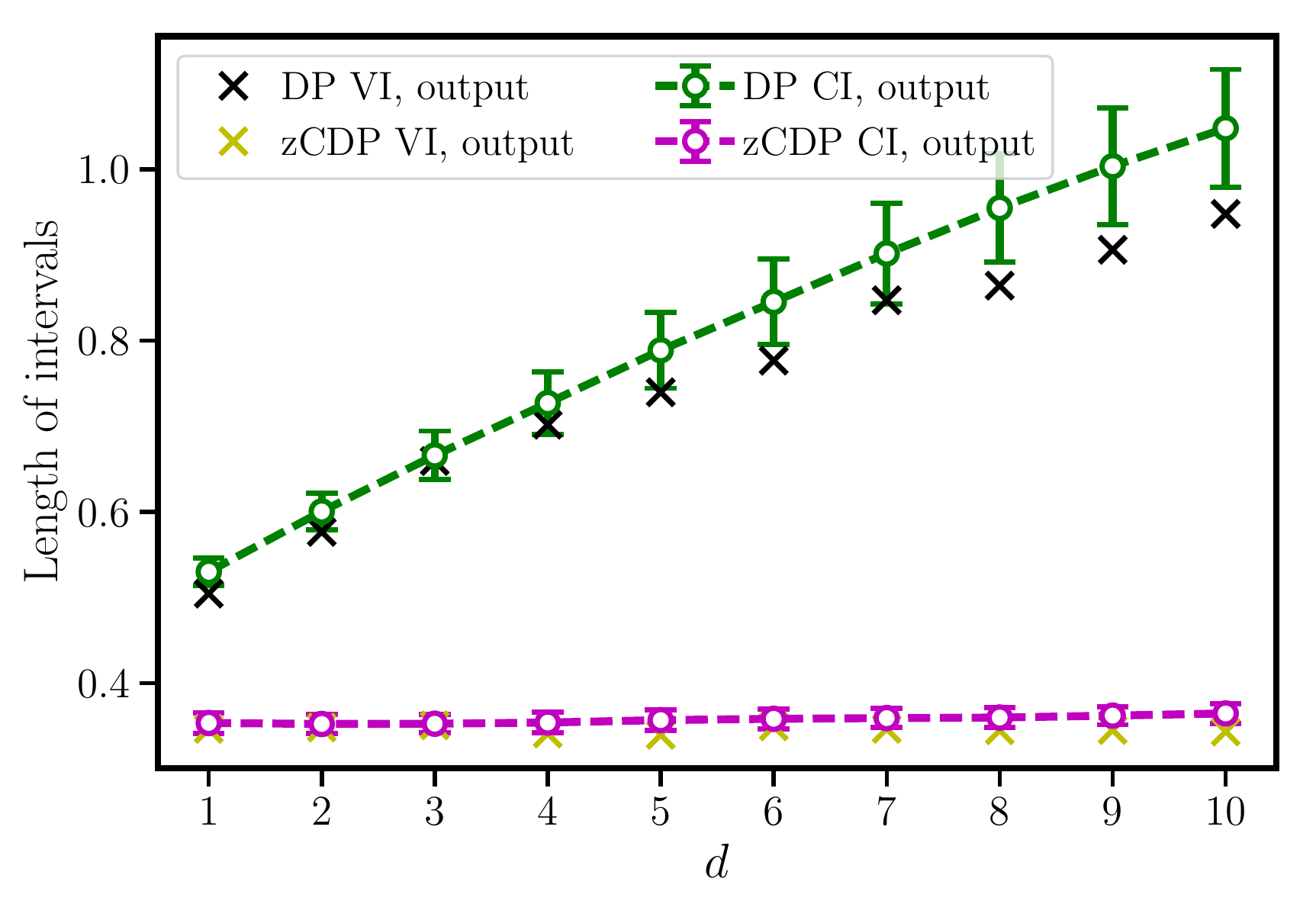}
}%
\hskip -8pt
\subfloat[Adult, LR, $\rho_1=\epsilon_1^2/2$]{
\includegraphics[width=0.33\textwidth]{figs/adult-output-lr-n30162-d10-c0_001-budget1}
}%
\hskip -8pt
\subfloat[US, LR, $\rho_1=\epsilon_1^2/2$]{
\includegraphics[width=0.33\textwidth]{figs/us-output-lr-n39928-d10-c0_001-budget1}
}%
\hskip -8pt
\subfloat[BR, SVM, $\rho_2=\epsilon_2^2/2$]{
\includegraphics[width=0.33\textwidth]{figs/br-output-svm-n38000-d10-c0_001-budget2}
}%
\hskip -8pt
\subfloat[BR, LR, $\rho_3=\epsilon_3^2/2$]{
\includegraphics[width=0.33\textwidth]{figs/br-output-lr-n38000-d10-c0_001-budget3}
}%
\hskip -8pt
\subfloat[Banking, SVM, $\rho_3=\epsilon_3^2/2$]{
\includegraphics[width=0.33\textwidth]{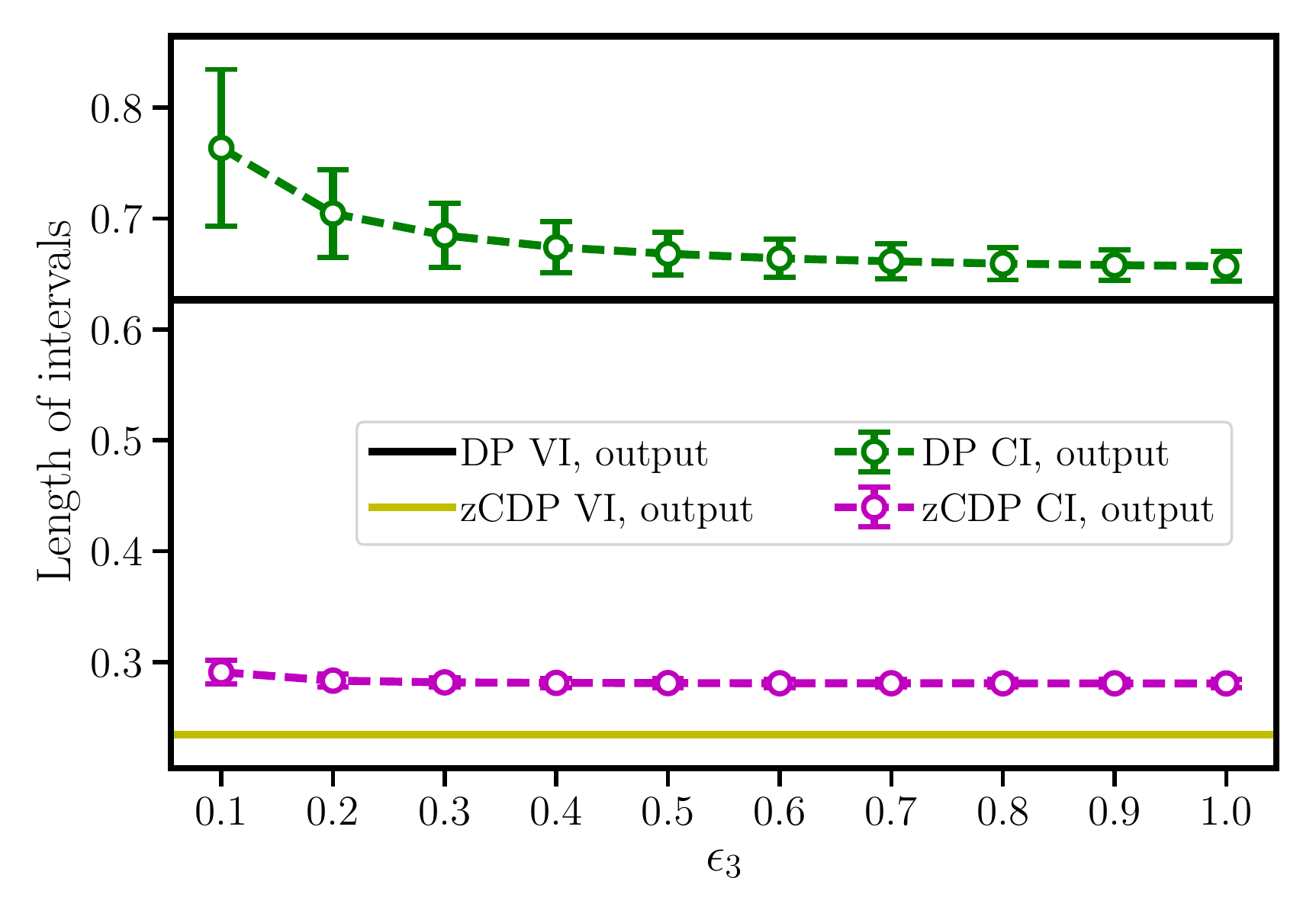}
}%
\vskip -8pt
\caption{Length of intervals for DP vs zCDP with output perturbation ERM. Error bars correspond to one standard deviation for CI. (a)-(d): $\epsilon_1=0.5$, $\rho_1=0.125$, $\epsilon_2=\epsilon_3=0.25$, $\rho_2=\rho_3=0.03125$; (e)-(f): $n=30,162$ for Adult, $n=39,928$ for US, $d=10$, $\epsilon_2=\epsilon_3=0.25$, $\rho_2=\rho_3=0.03125$; (g)-(i): $n=38,000$ for BR, $n=45,211$ for Banking, $d=10$, $\epsilon_1=0.5$, $\rho_1=0.125$, $\epsilon_3=0.25$ and $\rho_3=0.03125$ for (g), $\epsilon_2=0.25$ and $\rho_2=0.03125$ for (h) and (i). Common parameters: $c=0.001$, $h=1.0$.}
\label{fig:output}
\end{figure*}
}

\ConfOrTech{
\begin{figure*}[h!]
\centering
\subfloat[KDDCUP99, LR, $d=10$]{
\includegraphics[width=0.33\textwidth]{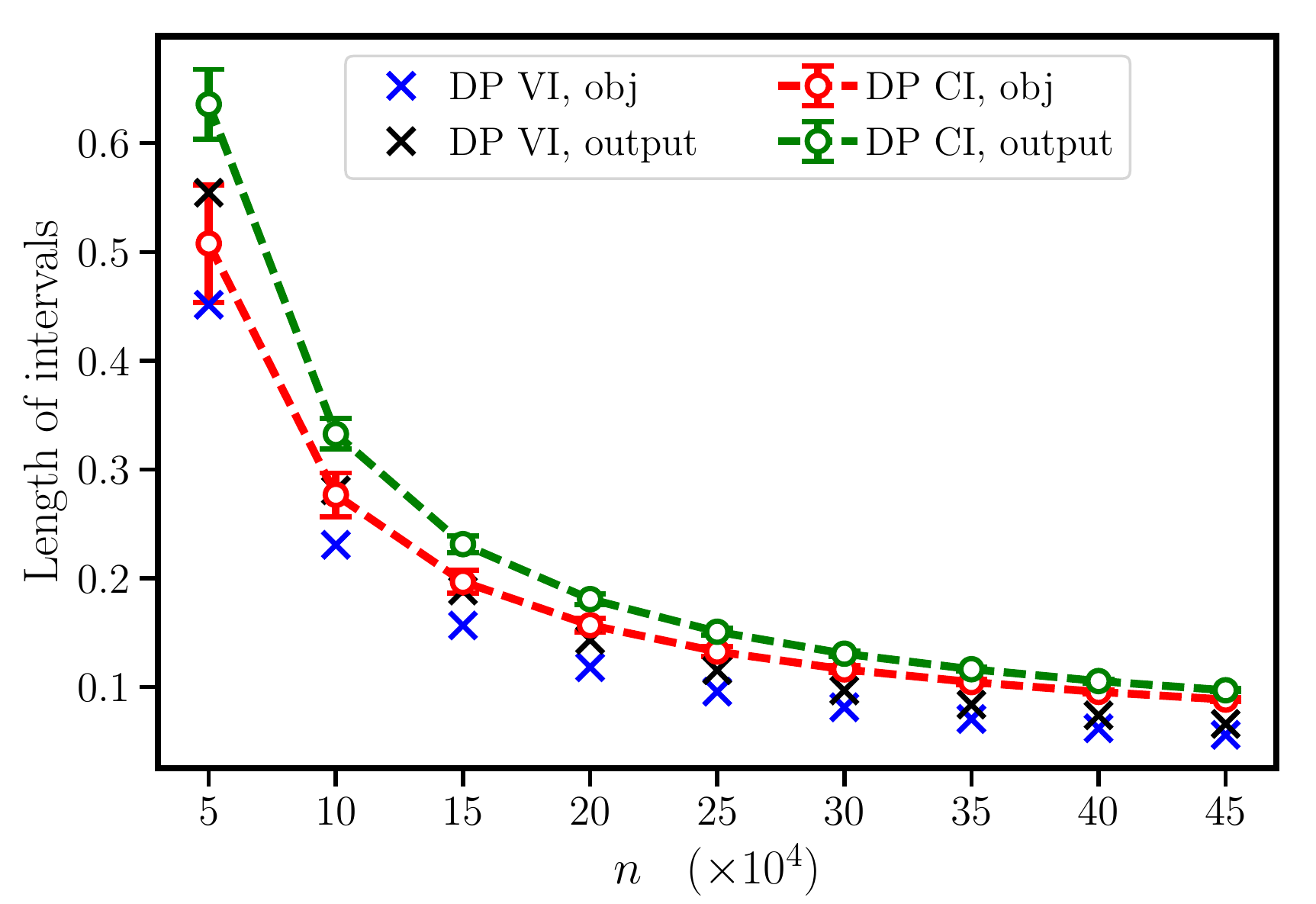}
}%
\hskip -8pt
\subfloat[US, SVM, $n=39,928$]{
\includegraphics[width=0.33\textwidth]{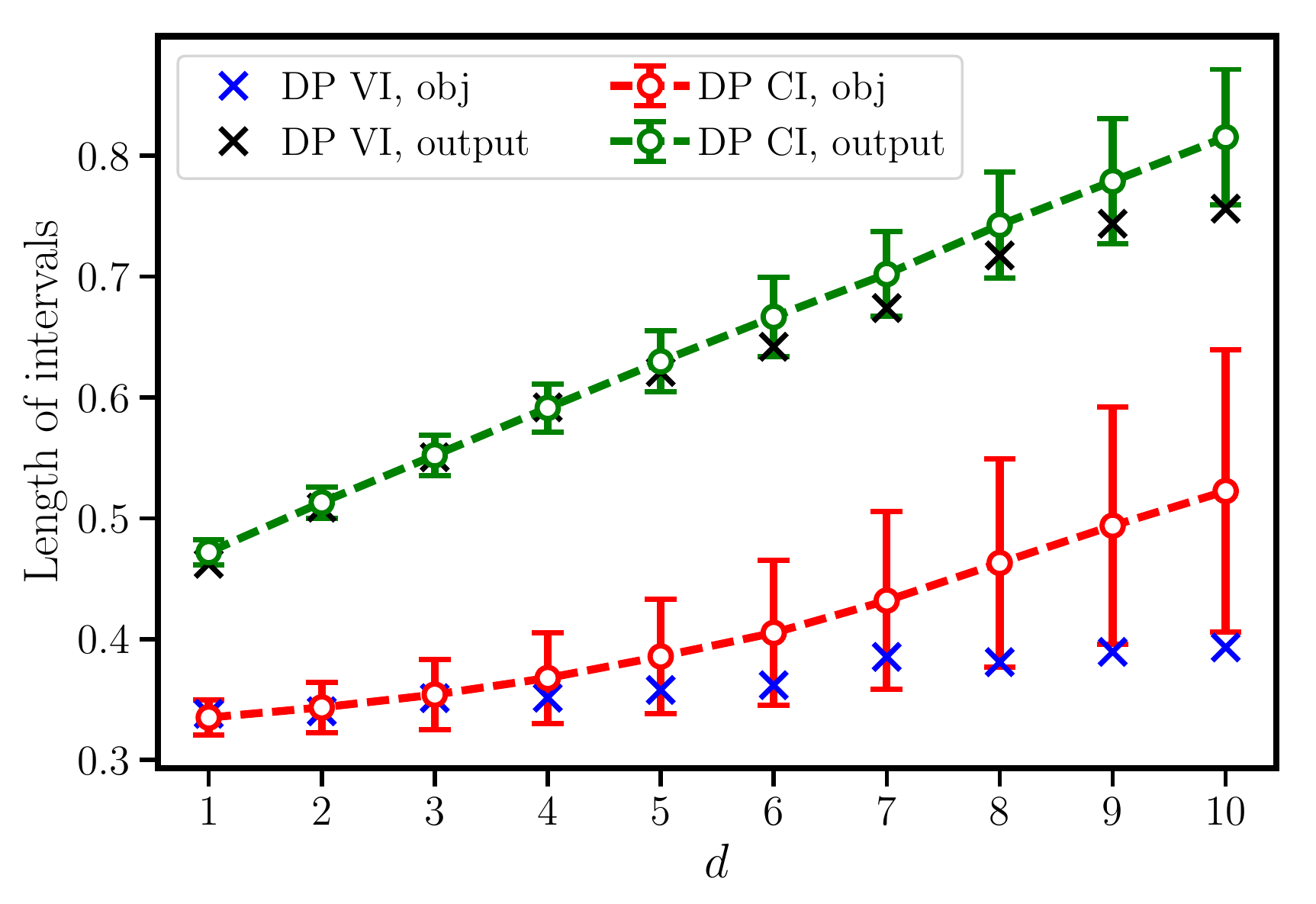}
}%
\hskip -8pt
\subfloat[BR, LR, $n=38,000$]{
\includegraphics[width=0.33\textwidth]{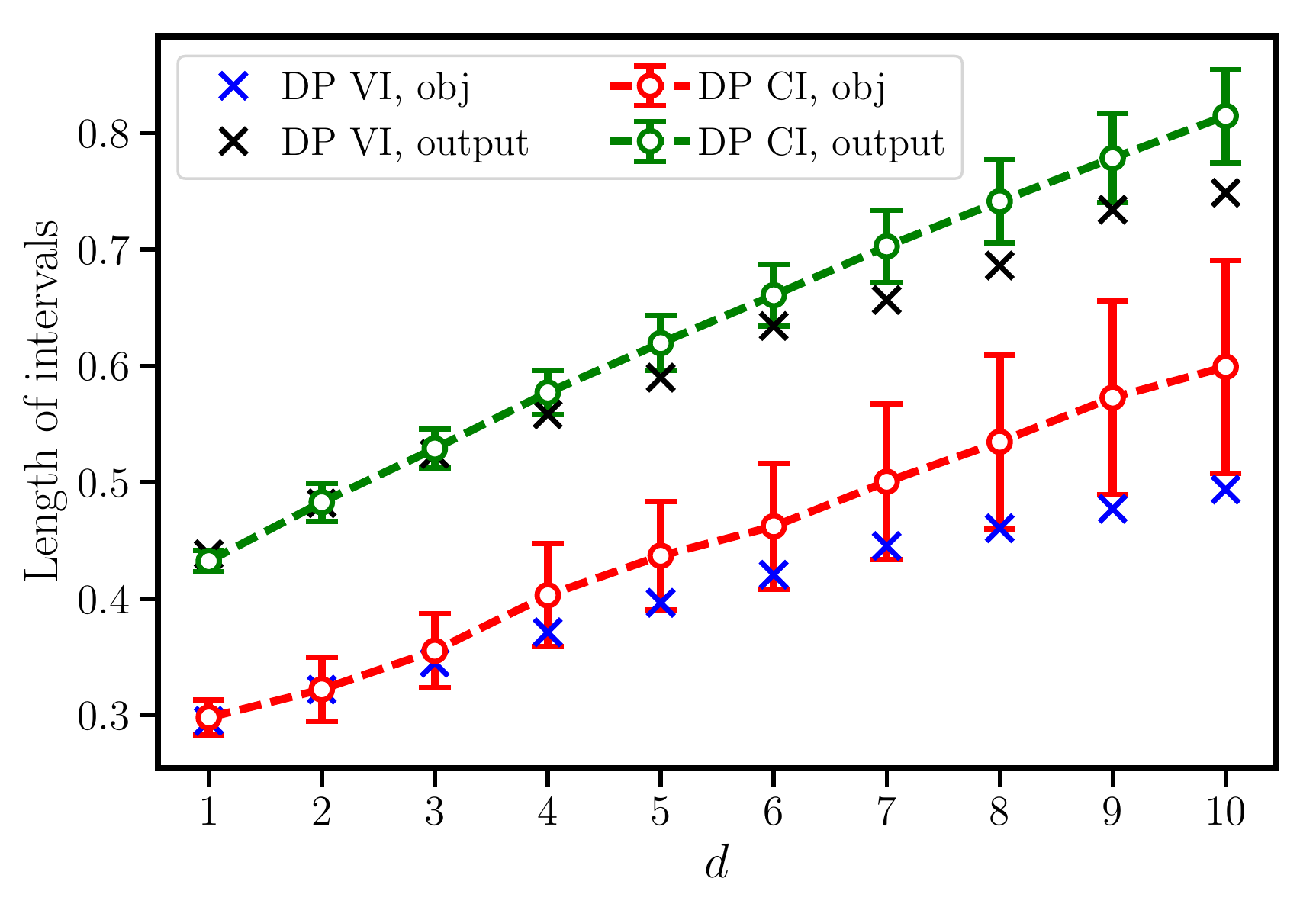}
}%
\hskip -8pt
\subfloat[Adult, LR, $\rho_1=\epsilon_1^2/2$]{
\includegraphics[width=0.33\textwidth]{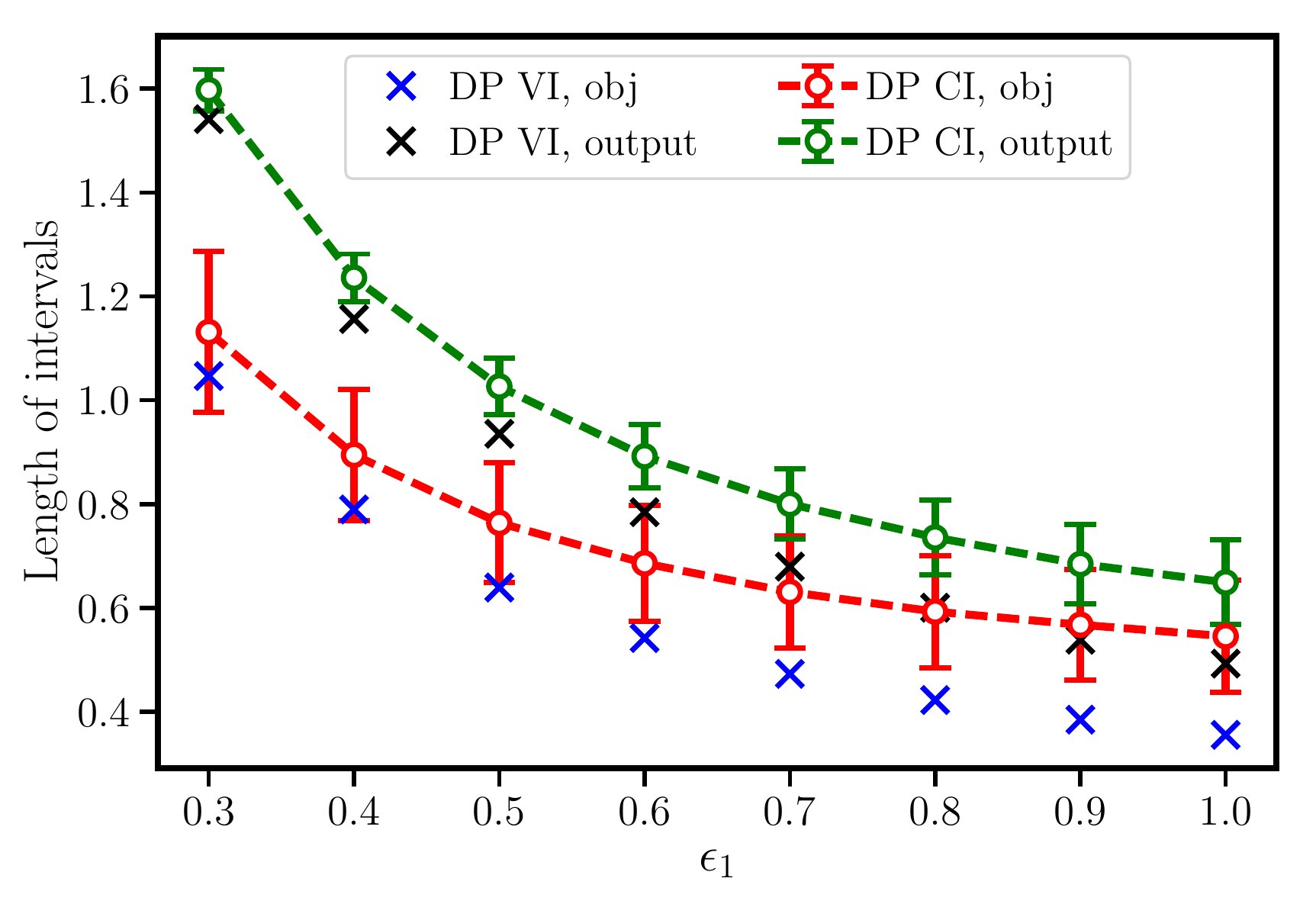}
}%
\hskip -8pt
\subfloat[Banking, SVM, $\rho_2=\epsilon_2^2/2$]{
\includegraphics[width=0.33\textwidth]{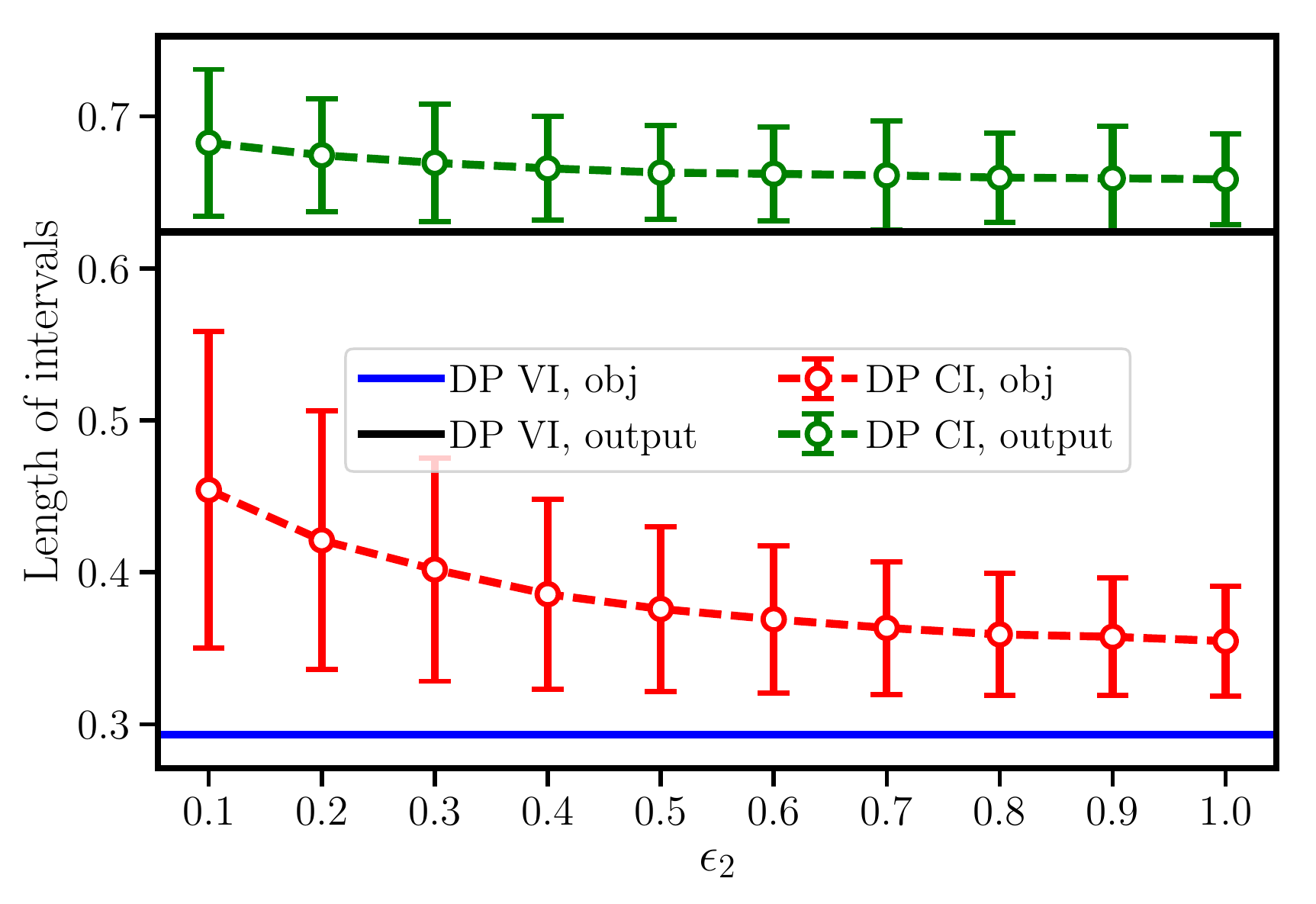}
}%
\hskip -8pt
\subfloat[Banking, LR, $\rho_3=\epsilon_3^2/2$]{
\includegraphics[width=0.33\textwidth]{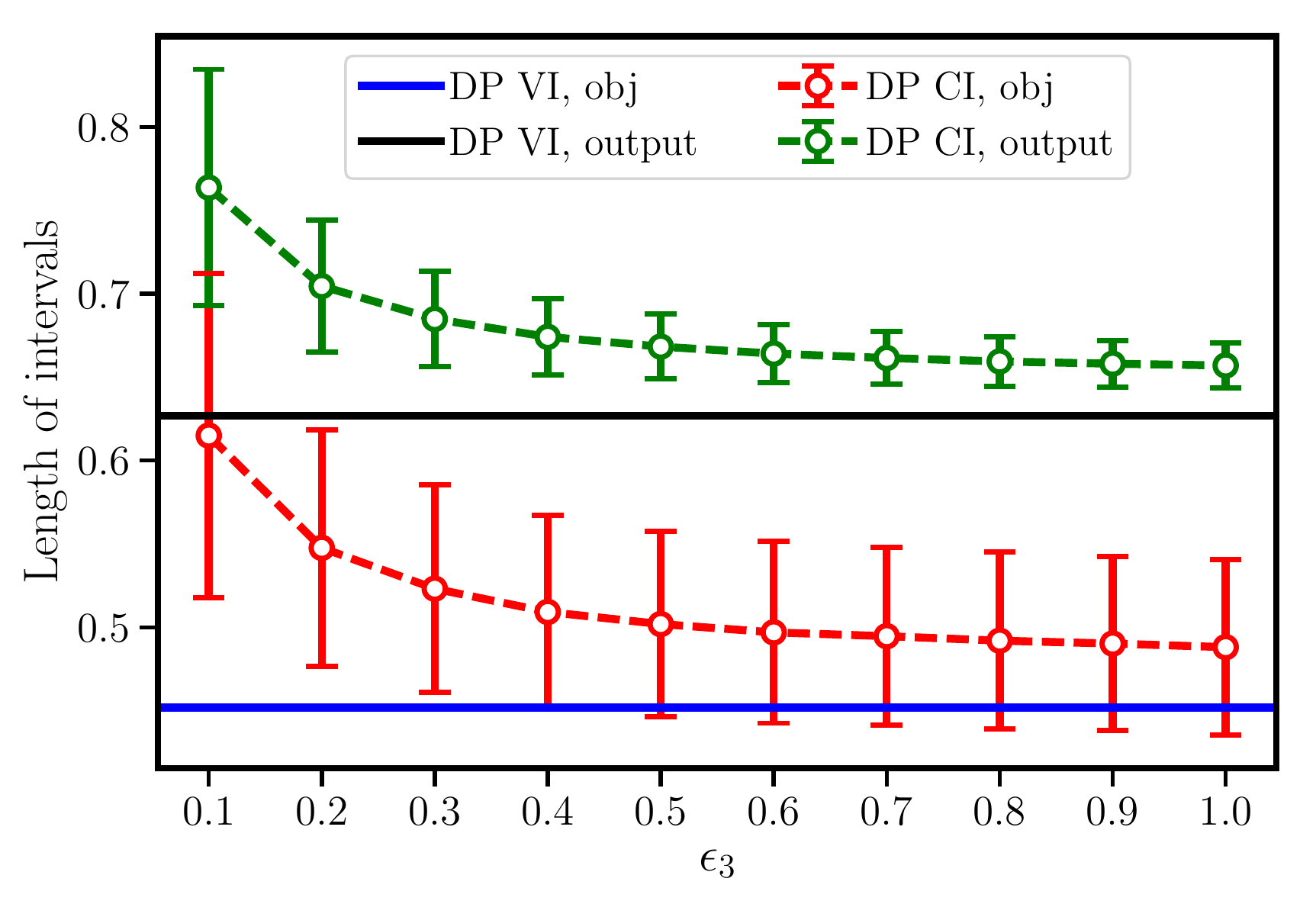}
}%
\vskip -8pt
\caption{Length of intervals for objective vs output perturbation with DP. Error bars correspond to one standard deviation for CI. (a)-(c): $\epsilon_1=0.707$, $\rho_1=0.25$, $\epsilon_2=\epsilon_3=0.25$, $\rho_2=\rho_3=0.03125$; (d): $n=30,162$, $d=10$, $\epsilon_2=\epsilon_3=0.25$, $\rho_2=\rho_3=0.03125$; (e)-(f): $n=45,211$, $d=10$, $\epsilon_1=0.707$, $\rho_1=0.25$, $\epsilon_3=0.25$ and $\rho_3=0.03125$ for (e), $\epsilon_2=0.25$ and $\rho_2=0.03125$ for (f). Common parameters: $c=0.001$, $h=1.0$.}
\label{fig:dp}
\end{figure*}
}
{
\begin{figure*}[h!]
\centering
\subfloat[KDDCUP99, LR, $d=10$]{
\includegraphics[width=0.33\textwidth]{figs/kddcup-dp-lr-n-d10-c0_001}
}%
\hskip -8pt
\subfloat[KDDCUP99, SVM, $d=10$]{
\includegraphics[width=0.33\textwidth]{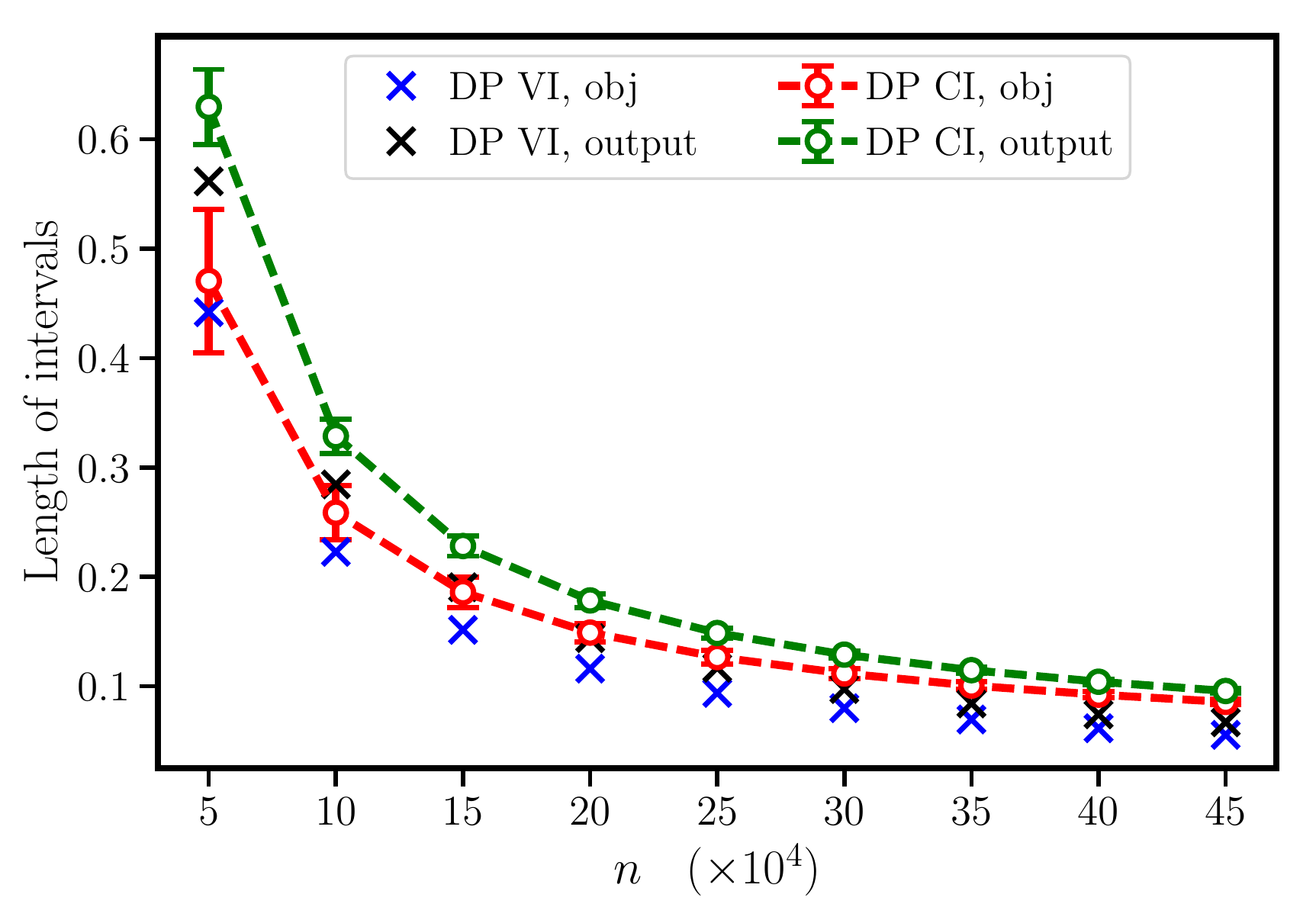}
}%
\hskip -8pt
\subfloat[US, SVM, $n=39,928$]{
\includegraphics[width=0.33\textwidth]{figs/us-dp-svm-n39928-d-c0_001}
}%
\hskip -8pt
\subfloat[BR, LR, $n=38,000$]{
\includegraphics[width=0.33\textwidth]{figs/br-dp-lr-n38000-d-c0_001}
}%
\hskip -8pt
\subfloat[Adult, LR, $\rho_1=\epsilon_1^2/2$]{
\includegraphics[width=0.33\textwidth]{figs/adult-dp-lr-n30162-d10-c0_001-budget1}
}%
\hskip -8pt
\subfloat[US, SVM, $\rho_1=\epsilon_1^2/2$]{
\includegraphics[width=0.33\textwidth]{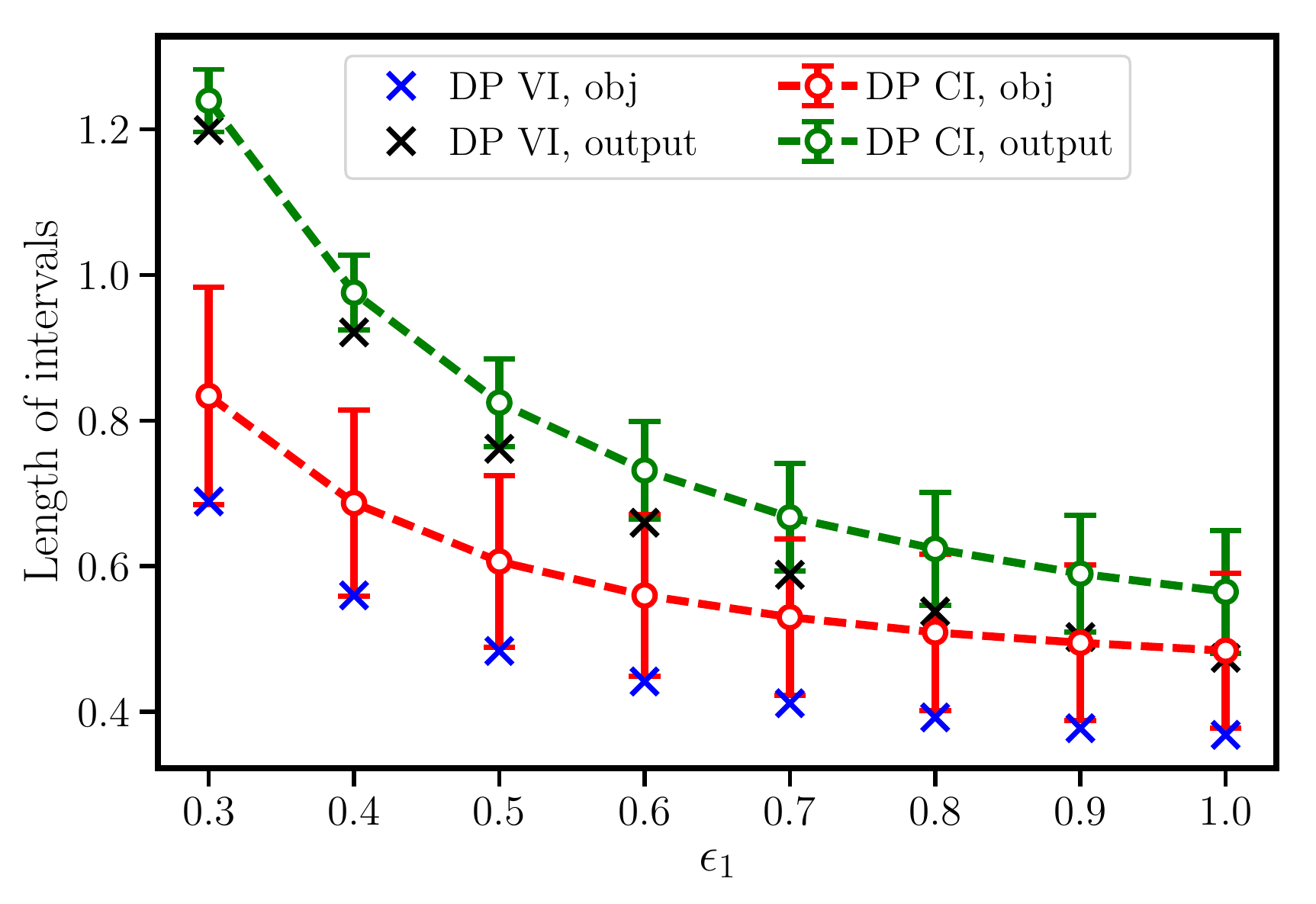}
}%
\hskip -8pt
\subfloat[BR, LR, $\rho_2=\epsilon_2^2/2$]{
\includegraphics[width=0.33\textwidth]{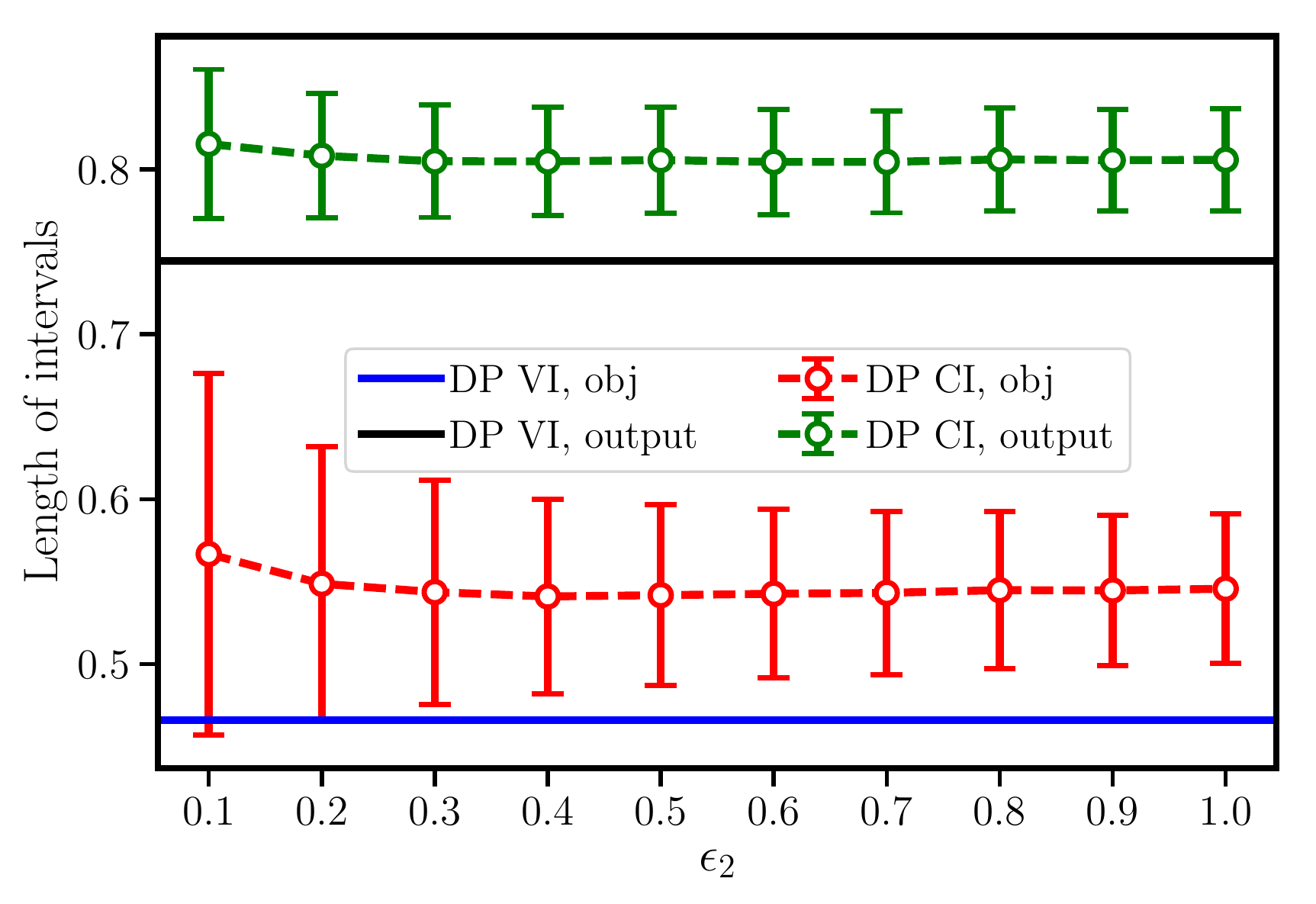}
}%
\hskip -8pt
\subfloat[Banking, SVM, $\rho_2=\epsilon_2^2/2$]{
\includegraphics[width=0.33\textwidth]{figs/banking-dp-svm-n45211-d10-c0_001-budget2}
}%
\hskip -8pt
\subfloat[Banking, LR, $\rho_3=\epsilon_3^2/2$]{
\includegraphics[width=0.33\textwidth]{figs/banking-dp-lr-n45211-d10-c0_001-budget3}
}%
\vskip -8pt
\caption{Length of intervals for objective vs output perturbation with DP. Error bars correspond to one standard deviation for CI. (a)-(d): $\epsilon_1=0.5$, $\rho_1=0.125$, $\epsilon_2=\epsilon_3=0.25$, $\rho_2=\rho_3=0.03125$; (e)-(f): $n=30,162$ for Adult, $n=39,928$ for US, $d=10$, $\epsilon_2=\epsilon_3=0.25$, $\rho_2=\rho_3=0.03125$; (g)-(i): $n=38,000$ for BR, $n=45,211$ for Banking, $d=10$, $\epsilon_1=0.5$, $\rho_1=0.125$, $\epsilon_3=0.25$ and $\rho_3=0.03125$ for (g) and (h), $\epsilon_2=0.25$ and $\rho_2=0.03125$ for (i). Common parameters: $c=0.001$, $h=1.0$.}
\label{fig:dp}
\end{figure*}
}

\subsubsection{DP vs zCDP with Output Perturbation ERM}

In Figure~\ref{fig:output}, we experiment with the different privacy definitions for parameters obtained  with output perturbation. From the figure, we see zCDP outperforms differential privacy with shorter intervals in length as well as smaller variance. The length of each confidence interval is either very close to or only a bit longer than their corresponding variability interval. One interesting phenomenon we notice is that the length of both intervals changes very slowly when we vary the dimensionality for the output perturbation technique with zCDP. This is due to the smaller tails of the Gaussian distribution. But when we get a large amount of data (e.g., when $n=450,000$ from \ConfOrTech{Figure~\ref{fig:output}a}{Figures \ref{fig:output}a and~\ref{fig:output}b}), the data overwhelms the noise and reduces the difference between the settings.

\subsubsection{Objective vs Output Perturbation with DP}

In Figure~\ref{fig:dp}, we use differential privacy to compare the confidence intervals that are achievable for models trained with objective perturbation against output perturbation. The figure shows that objective perturbation outperforms output perturbation in the length of intervals but also yields slightly larger variances in interval length.
Again, when we have enough samples (e.g., when $n=450,000$ in \ConfOrTech{Figure~\ref{fig:dp}a}{Figures \ref{fig:dp}a and~\ref{fig:dp}b}), the difference in performance due to different perturbation techniques is almost negligible.

\begin{figure*}[h!]
\centering
\subfloat[KDDCUP99, LR, $d=10$]{
\includegraphics[width=0.33\textwidth]{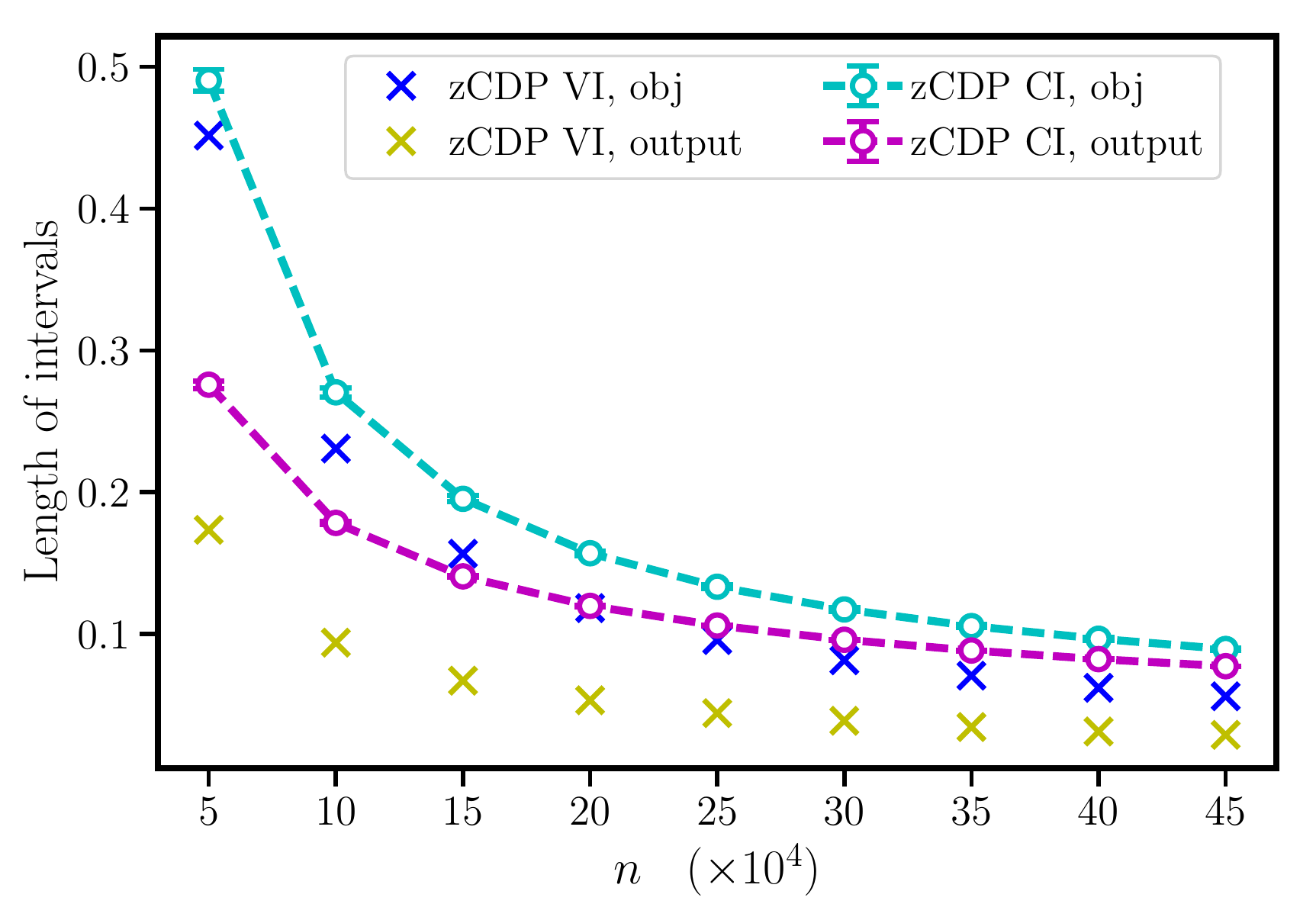}
}%
\hskip -8pt
\subfloat[KDDCUP99, SVM, $d=10$]{
\includegraphics[width=0.33\textwidth]{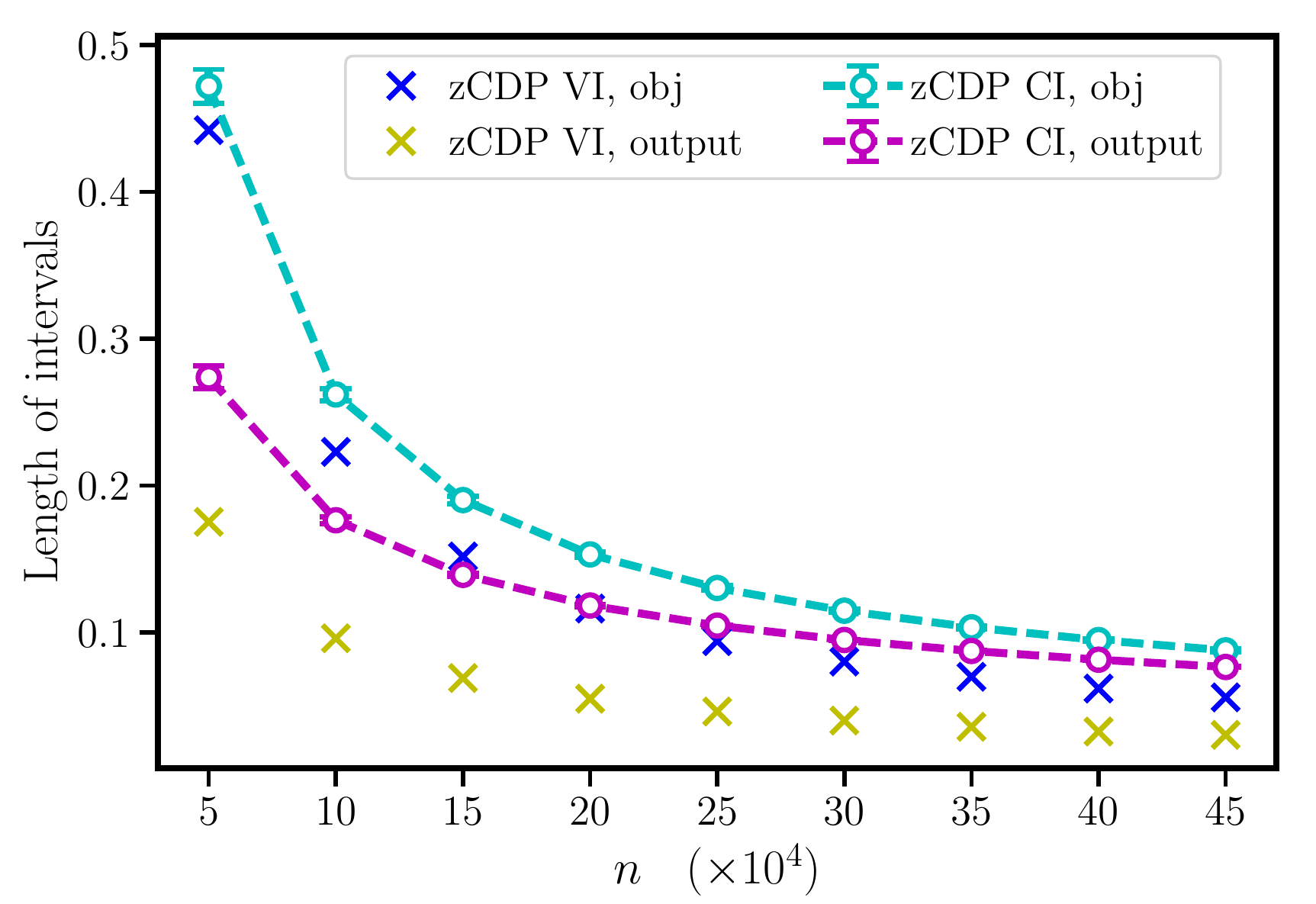}
}%
\hskip -8pt
\subfloat[BR, LR, $n=38,000$]{
\includegraphics[width=0.33\textwidth]{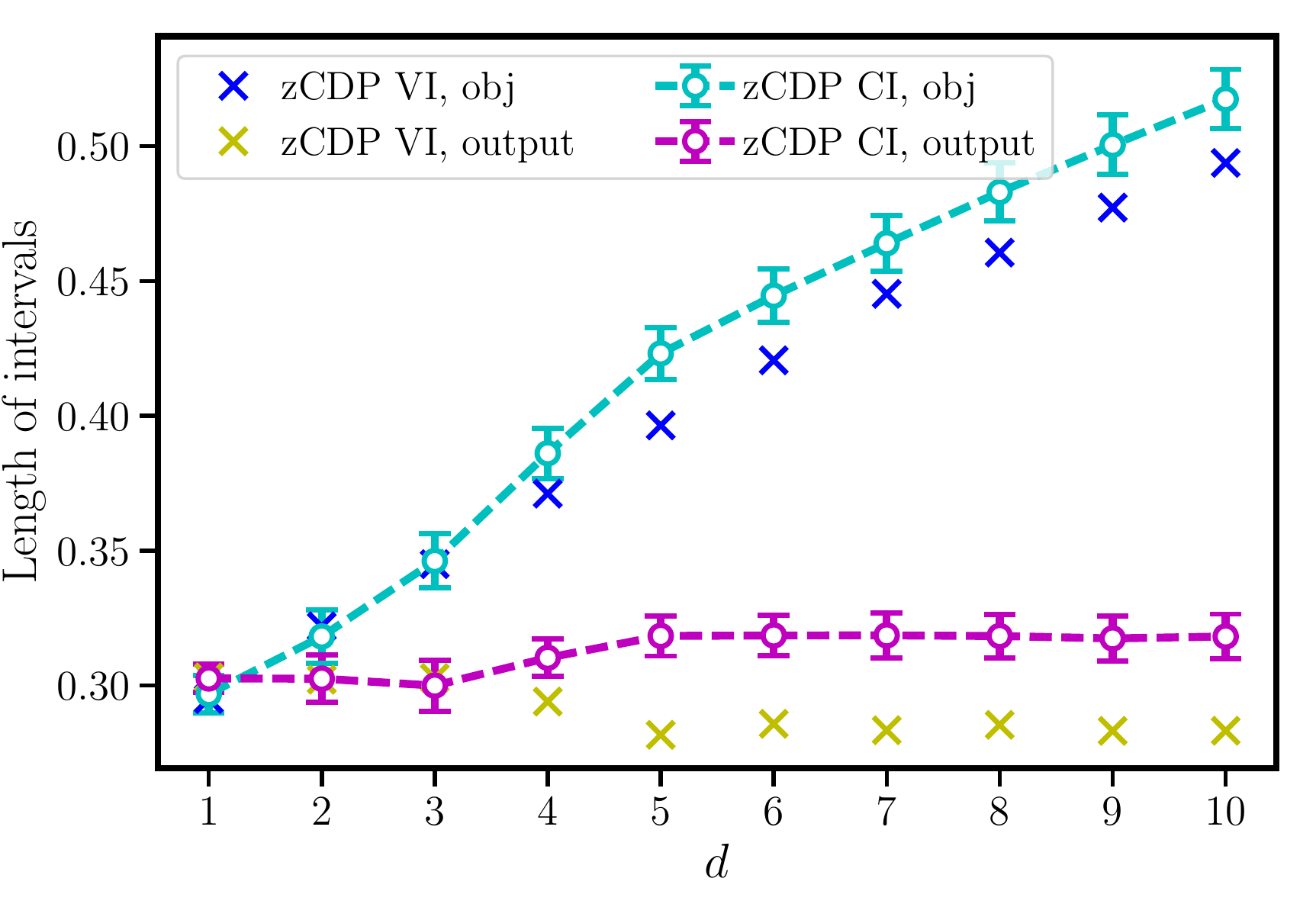}
}%
\hskip -8pt
\subfloat[US, LR, $n=39,928$]{
\includegraphics[width=0.33\textwidth]{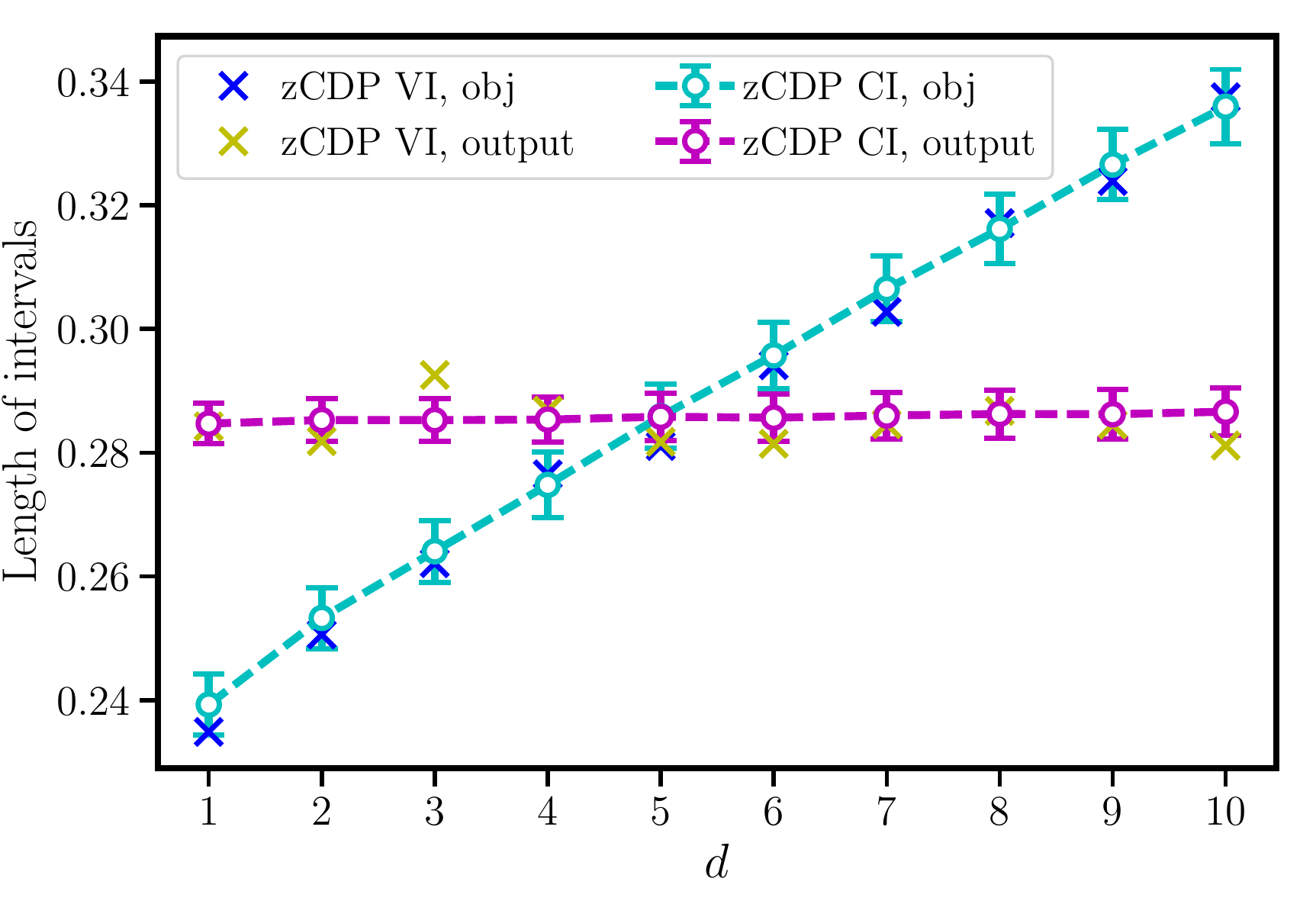}
}%
\hskip -8pt
\subfloat[Banking, LR, $\rho_1=\epsilon_1^2/2$]{
\includegraphics[width=0.33\textwidth]{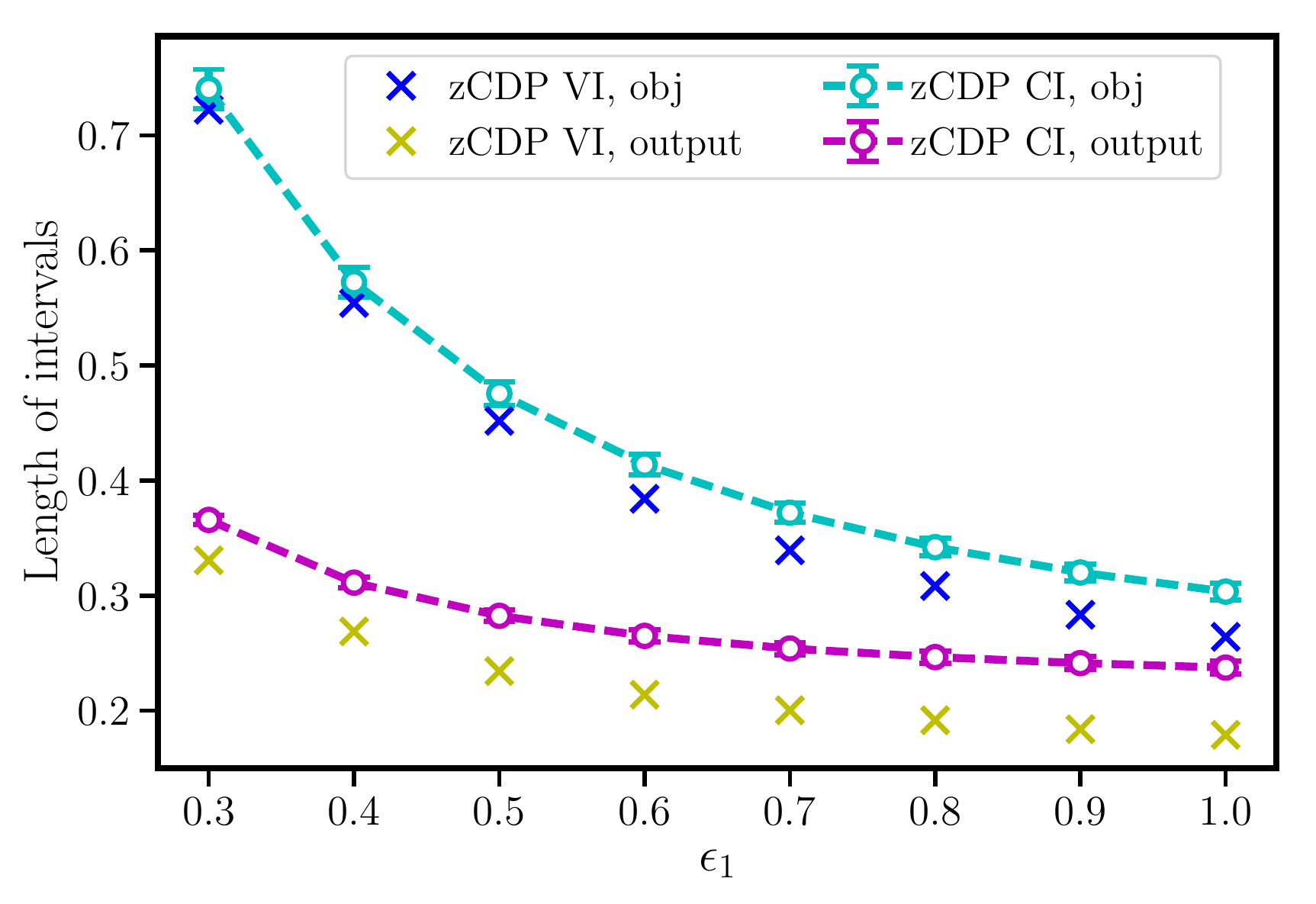}
}%
\hskip -8pt
\subfloat[Banking, SVM, $\rho_1=\epsilon_1^2/2$]{
\includegraphics[width=0.33\textwidth]{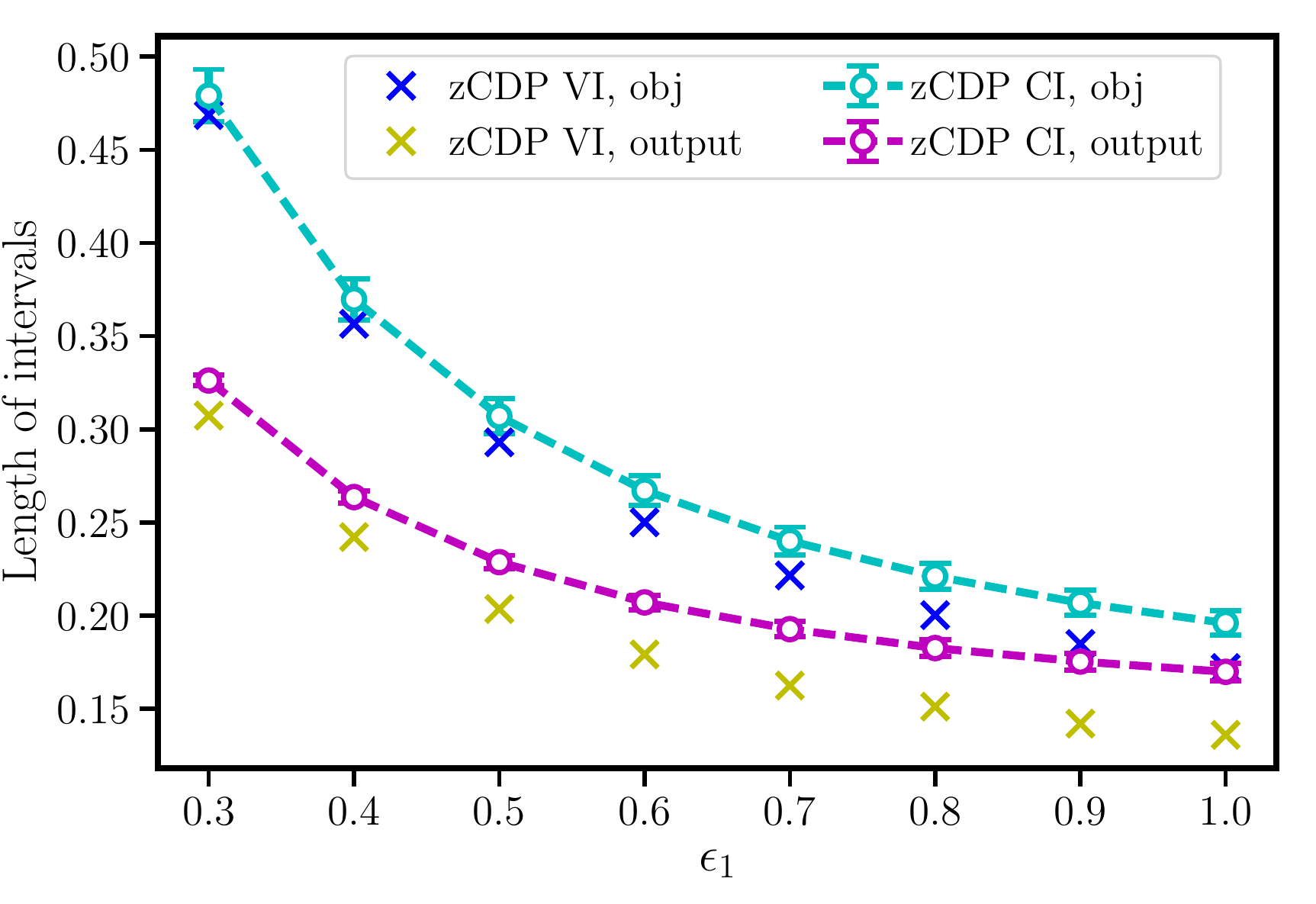}
}%
\hskip -8pt
\subfloat[Adult, SVM, $\rho_2=\epsilon_2^2/2$]{
\includegraphics[width=0.33\textwidth]{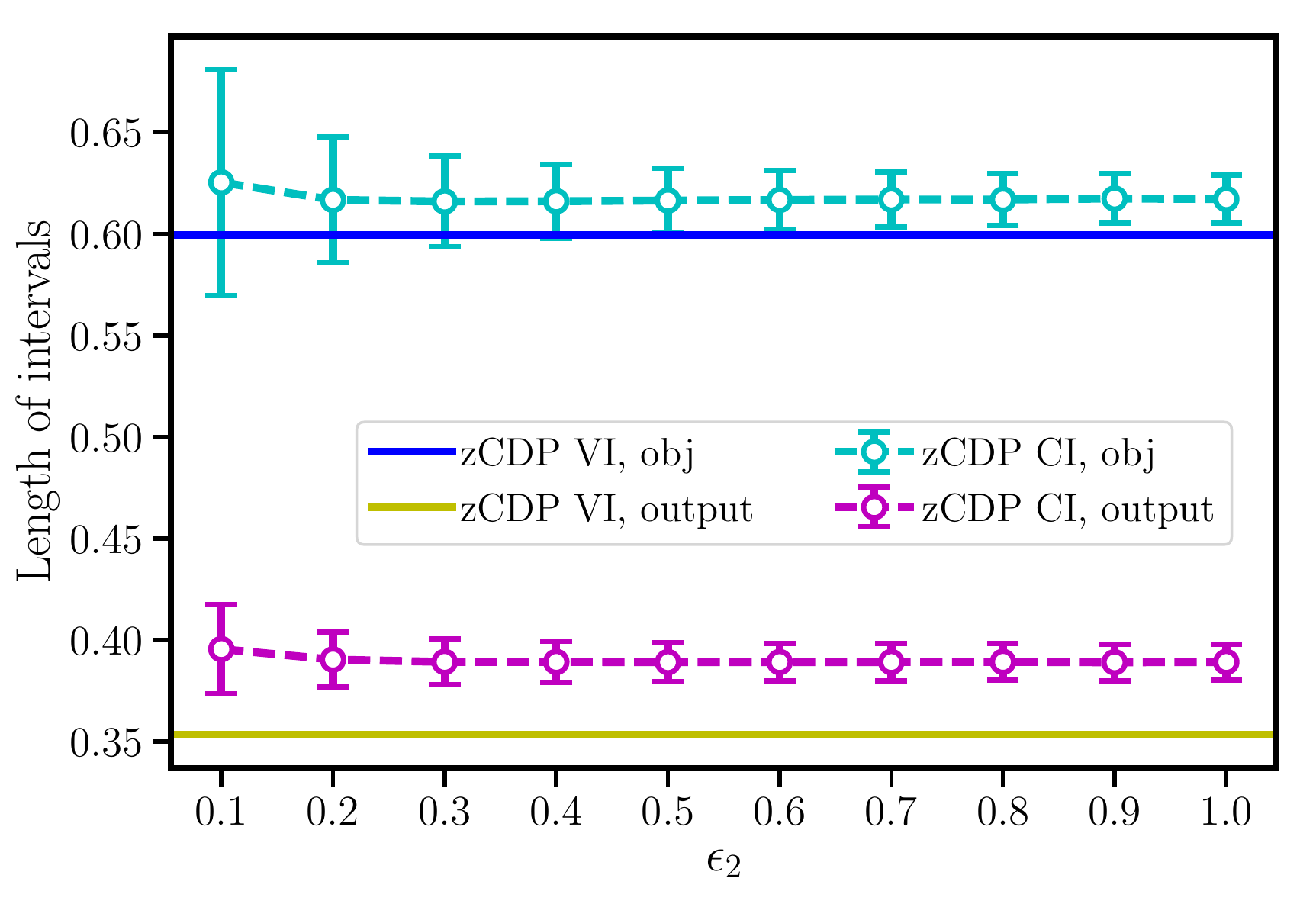}
}%
\hskip -8pt
\subfloat[Adult, LR, $\rho_3=\epsilon_3^2/2$]{
\includegraphics[width=0.33\textwidth]{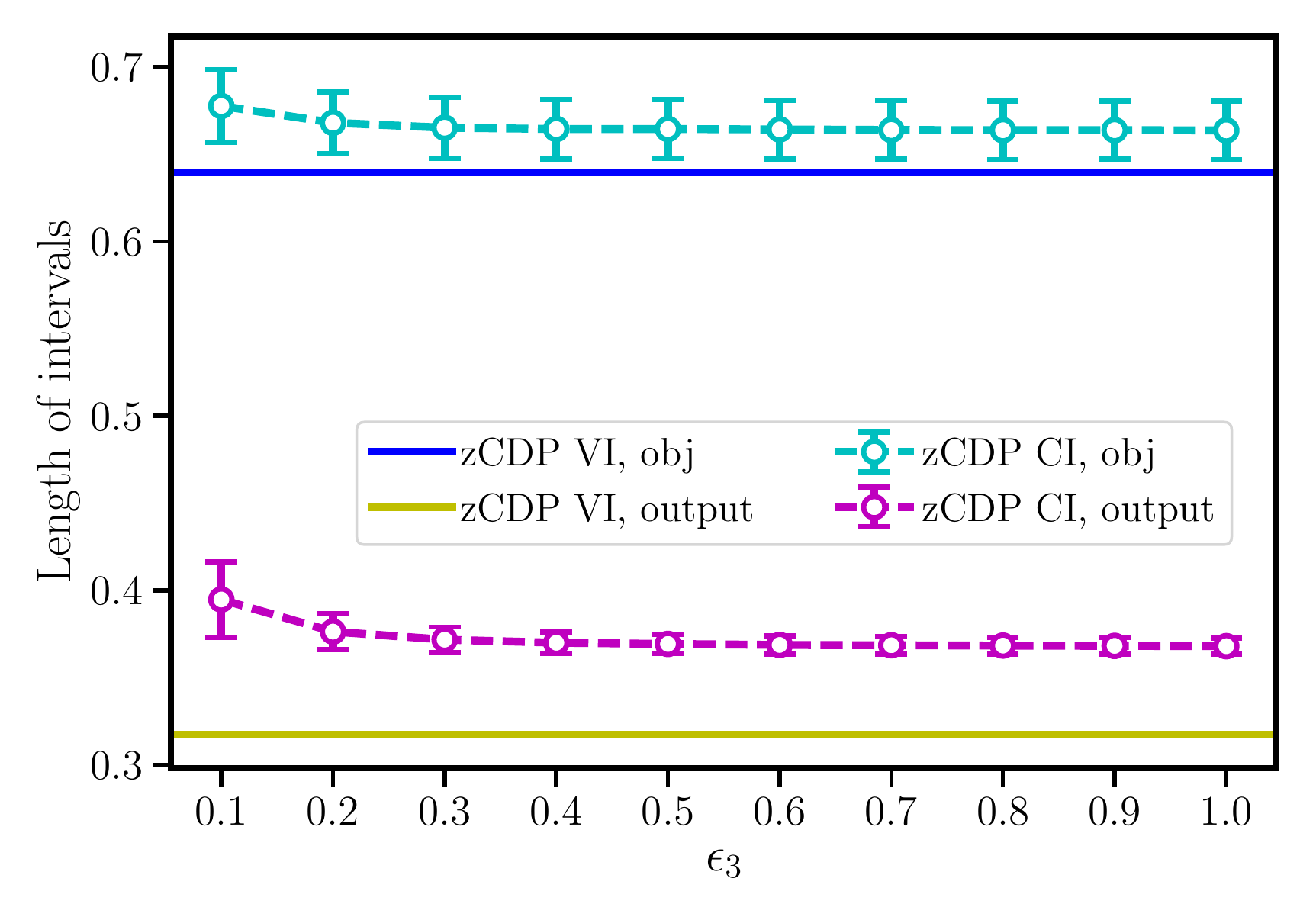}
}%
\hskip -8pt
\subfloat[US, SVM, $\rho_3=\epsilon_3^2/2$]{
\includegraphics[width=0.33\textwidth]{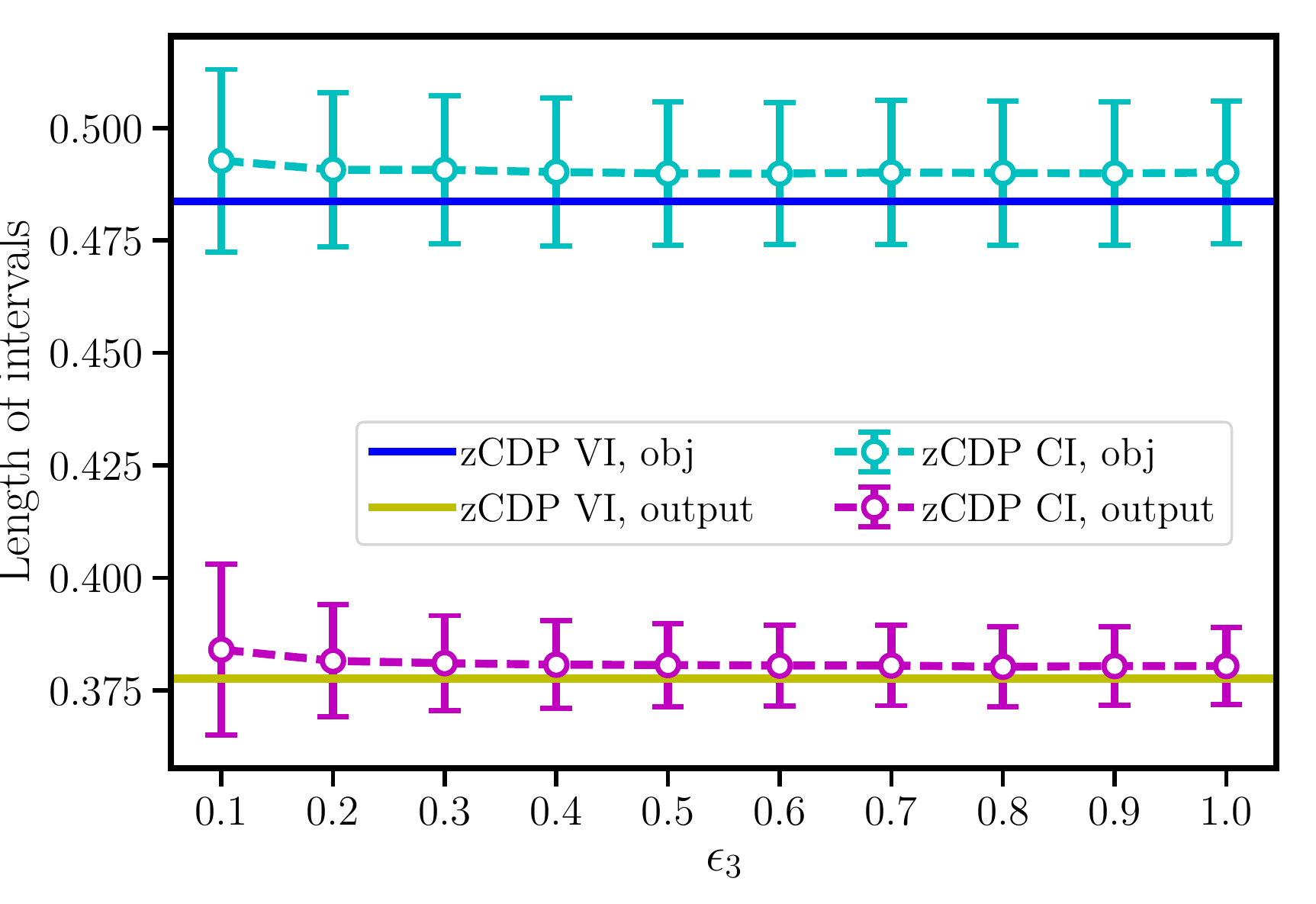}
}%
\vskip -8pt
\caption{Length of intervals for objective vs output perturbation with zCDP. Error bars correspond to one standard deviation for CI. (a)-(d): $\epsilon_1=0.5$, $\rho_1=0.125$, $\epsilon_2=\epsilon_3=0.25$, $\rho_2=\rho_3=0.03125$; (e)-(f): $n=45,211$, $d=10$, $\epsilon_2=\epsilon_3=0.25$, $\rho_2=\rho_3=0.03125$; (g): $n=30,162$, $d=10$, $\epsilon_1=0.5$, $\rho_1=0.125$, $\epsilon_3=0.25$, $\rho_3=0.03125$; (h)-(i): $n=30,162$ for Adult, $n=39,928$ for US, $d=10$, $\epsilon_1=0.5$, $\rho_1=0.125$, $\epsilon_2=0.25$, $\rho_2=0.03125$. Common parameters: $c=0.001$, $h=1.0$.}
\label{fig:zcdp}
\end{figure*}

\subsubsection{Objective vs Output Perturbation with zCDP}

In Figure~\ref{fig:zcdp}, we use zCDP and compare the confidence intervals for models learned using objective perturbation and output perturbation based ERM. Except for Figure~\ref{fig:zcdp}c and Figure~\ref{fig:zcdp}d, output perturbation outperforms objective perturbation when we protect zCDP. We note that the confidence intervals are close to the variability intervals, but the variability intervals for output perturbation are generally much shorter. For the two exceptions, with increasing dimensionality, the length of intervals with objective perturbation grows, but the length of intervals with output perturbation is almost steady, and there is a cross point for  the two methods. Together with the previous findings on $d$, it seems the dimensionality may add more uncertainty to the performance of the intervals than the other parameters. The effect of increasing sample size is the same as previous results.

\section{Conclusions}\label{sec:conclusions}
In this paper we proposed algorithms for generating confidence intervals of differentially private models that are learned using objective and output perturbations. Prior work only obtained confidence intervals for linear regression. Our experiments show that the confidence intervals obtain the desired coverage and provide intervals that are close to the true variability in the parameter coefficients. Future work includes generalizing these techniques to work with other model building algorithms.



\balance


\bibliographystyle{abbrv}
\bibliography{ref}



\conferenceversion{\newpage}

\begin{appendix}
\section{Proof of Theorem~\lowercase{\ref{thm:erm2}}}\label{app:erm2}
\thmerm*
\begin{proof}
The loss function $f(\vec{x},y,\theta)=f(y\theta^T\vec{x})$. Because $f(\cdot)$ and $\|\theta\|_2^2$ are convex and differentiable, given any data set $\mathcal{D}$, the gradient of the objective function equals to 0 at the empirical minimizer $\tilde{\theta}$:
\begin{equation*}
\beta=-2nc\tilde{\theta}-\sum_{i=1}^ny_if^{\prime}(y_i\tilde{\theta}^T\vec{x}_i)\vec{x}_i.
\end{equation*}

To show $\epsilon$-differential privacy, we compute the ratio of the densities of $\tilde{\theta}$ under the two neighboring data sets $\mathcal{D}$ and $\mathcal{D}^{\prime}$.
\[\frac{g(\tilde{\theta}|\mathcal{D})}{g(\tilde{\theta}|\mathcal{D}^{\prime})}=\frac{\mathtt{v}(\beta|\mathcal{D})}{\mathtt{v}(\beta^{\prime}|\mathcal{D}^{\prime})}\cdot\frac{|\det(\mathbf{J}(\tilde{\theta}\rightarrow \beta|\mathcal{D}))|^{-1}}{|\det(\mathbf{J}(\tilde{\theta}\rightarrow \beta^{\prime}|\mathcal{D}^{\prime}))|^{-1}},\]
where $\mathbf{J}(\tilde{\theta}\rightarrow \beta|\mathcal{D})$ is the Jacobian matrix of the mapping from $\tilde{\theta}$ to $\beta$.

Given $\mathcal{D}$, the $(j,k)$-th entry of $\mathbf{J}(\tilde{\theta}\rightarrow \beta|\mathcal{D})$ is
\[\frac{\partial \beta^{(j)}}{\partial\tilde{\theta}^{(k)}}=-2nc\mathbf{1}_{j=k}-\sum_{i=1}^ny_i^2f^{\prime\prime}(y_i\tilde{\theta}^T\vec{x}_i)\vec{x}_i^{(j)}\vec{x}_i^{(k)},\]
where $\mathbf{1}$ is the indicator function. The Jacobian is well defined since $f(\cdot)$ is doubly differentiable.

Given $\mathcal{D}$ and $\mathcal{D}^{\prime}$, define
\begin{align*}
A&=2nc\mathbf{I}_d+\sum_{i=1}^{n-1}y_i^2f^{\prime\prime}(y_i\tilde{\theta}^T\vec{x}_i)\vec{x}_i\vec{x}_i^T,\\
\vec{u}\vec{u}^T&=y_n^2f^{\prime\prime}(y_n\tilde{\theta}^T\vec{x}_n)\vec{x}_n\vec{x}_n^T,\\
\vec{v}\vec{v}^T&=y_z^2f^{\prime\prime}(y_z\tilde{\theta}^T\vec{x}_z)\vec{x}_z\vec{x}_z^T.
\end{align*}
Then, $\mathbf{J}(\tilde{\theta}\rightarrow \beta|\mathcal{D})=-(A+\vec{u}\vec{u}^T)$, $\mathbf{J}(\tilde{\theta}\rightarrow \beta^{\prime}|\mathcal{D}^{\prime})=-(A+\vec{v}\vec{v}^T)$.

Therefore,
\begin{align*}
\frac{|\det(\mathbf{J}(\tilde{\theta}\rightarrow \beta|\mathcal{D}))|^{-1}}{|\det(\mathbf{J}(\tilde{\theta}\rightarrow \beta^{\prime}|\mathcal{D}^{\prime}))|^{-1}}=&\frac{|\det(A+\vec{v}\vec{v}^T)|}{|\det A+\vec{u}\vec{u}^T|}\\
=&\frac{|(1+\vec{v}^TA^{-1}\vec{v})\det(A)|}{|(1+\vec{u}^TA^{-1}\vec{u})\det(A)|}\\
=&\frac{|1+\vec{v}^TA^{-1}\vec{v}|}{|1+\vec{u}^TA^{-1}\vec{u}|}.
\end{align*}
We can see $A$ is a symmetric positive definite matrix and its eigenvalues are at least $2nc$. So the eigenvalues of $A^{-1}$ are at most $\frac{1}{2nc}$.
Since $y\in\{-1,1\}$, $\|\vec{x}\|_2\leq 1$ and $f^{\prime\prime}(\cdot)\leq t$,
\begin{align*}
\frac{|1+\vec{v}^TA^{-1}\vec{v}|}{|1+\vec{u}^TA^{-1}\vec{u}|}&\leq \frac{1+\|\vec{v}^TA^{-1}\vec{v}\|_2}{1}\\
&\quad\text{(by the triangle inequality and $\vec{u}^TA^{-1}\vec{u}\geq 0$)}\\
&\leq 1+\|v\|_2\|A^{-1}\vec{v}\|_2
\leq 1+\|v\|_2^2\|A^{-1}\|_2\\
&= 1+y_z^2f^{\prime\prime}(y_z\tilde{\theta}^T\vec{x}_z)\|\vec{x}_z\|_2^2\|A^{-1}\|_2\\
&\leq 1+t\|A^{-1}\|_2
\leq 1+\frac{t}{2nc}.
\end{align*}

Then by the definition of $\epsilon^{\prime}$, $1+\frac{t}{2nc}=e^{\epsilon-\epsilon^{\prime}}$.

Next, we bound the ratio of the densities of the noise vectors:
\[\beta^{\prime}-\beta=y_nf^{\prime}(y_n\tilde{\theta}^T\vec{x}_n)\vec{x}_n-y_zf^{\prime}(y_z\tilde{\theta}^T\vec{x}_z)\vec{x}_z.\]
Since $y\in\{-1,1\}$, $\|\vec{x}\|_2\leq 1$ and $f^{\prime}(\cdot)\leq 1$,
\[\|\beta^{\prime}\|_2-\|\beta\|_2\leq\|\beta^{\prime}-\beta\|_2\leq 2,\]
so
\[\frac{\mathtt{v}(\beta|\mathcal{D})}{\mathtt{v}(\beta^{\prime}|\mathcal{D}^{\prime})}=\frac{e^{-\epsilon^{\prime}\|\beta\|_2/2}}{e^{-\epsilon^{\prime}\|\beta^{\prime}\|_2/2}}=e^{\epsilon^{\prime}(\|\beta^{\prime}\|_2-\|\beta\|_2)/2}\leq e^{\epsilon^{\prime}}.\]

Therefore, 
\[\frac{g(\tilde{\theta}|\mathcal{D})}{g(\tilde{\theta}|\mathcal{D}^{\prime})}=\frac{\mathtt{v}(\beta|\mathcal{D})}{\mathtt{v}(\beta^{\prime}|\mathcal{D}^{\prime})}\cdot\frac{|\det(\mathbf{J}(\tilde{\theta}\rightarrow \beta|\mathcal{D}))|^{-1}}{|\det(\mathbf{J}(\tilde{\theta}\rightarrow \beta^{\prime}|\mathcal{D}^{\prime}))|^{-1}} \leq e^{\epsilon^{\prime}}\cdot e^{\epsilon-\epsilon^{\prime}}=e^{\epsilon}.\]
\end{proof}

\section{Proof of Lemma~\lowercase{\ref{lem:privspdmat}}}\label{app:privspdmat}
\privspdmat*
\begin{proof}
In Algorithm~\ref{alg:privspdmat}, only Line 8 touches the matrix $M$ since Lines 1 through 7 are sampling from the noise distribution and all other lines are just post-processing on the perturbed matrix $\tilde{M}$. So we just need to prove getting $\tilde{M}$ through Line 8 satisfies differential privacy and zCDP.

\begin{enumerate}
\item When protecting differential privacy:

To simplify, let $\tovec{(M)}$ be the vector representation of $M$ by stacking its rows. Let $M^{\prime}$ be the neighbor of $M$ which differs in only one entry. Then given the density of the noise in Equation~\ref{eq:noisedist} and the $L_2$ sensitivity $Sens(M)$,
\begin{align*}
\frac{\tovec{(\tilde{M})}|M}{\tovec{(\tilde{M})}|M^{\prime}} =&\frac{\mathtt{v}[\tovec{(\tilde{M}-M)}]}{\mathtt{v}[\tovec{(\tilde{M}-M^{\prime})}]}\\
=&\frac{e^{-\phi/Sens(M) \cdot\|\tovec{(\tilde{M}-M)}\|_2}}{e^{-\phi/Sens(M) \cdot\|\tovec{(\tilde{M}-M^{\prime})}\|_2}}\\
=&e^{\phi/Sens(M) \cdot(\|\tovec{(\tilde{M}-M^{\prime})}\|_2-\|\tovec{(\tilde{M}-M^{\prime})}\|_2)}\\
\leq & e^{\phi}.
\end{align*}

Therefore, Line 8 satisfies $\phi$-differential privacy.
\item When protecting zCDP:

By Proposition~\ref{pro:gaussmechanism}, Line 8 satisfies $\phi$-zCDP.
\end{enumerate}

The rest of the algorithm is just post-processing on the perturbed matrix. By the post-processing property of differential privacy and zCDP, Algorithm~\ref{alg:privspdmat} satisfies $\phi$-differential privacy and $\phi$-zCDP.
\end{proof}

\section{Proof of Theorem~\lowercase{\ref{thm:ciobj}}}\label{app:ciobj}
\thmciobj*
\begin{proof}
In Algorithm~\ref{alg:ciobj}, there are three parts that touch the true data. 

First, the computation of the minimizer $\tilde{\theta}$ to the objective function. Based on Theorem~\ref{thm:erm2}, the computation is $\phi_1$-differentially private as long as the loss function $f(\cdot)$ is convex and doubly differentiable with $|f^{\prime}(\cdot)|\leq 1$ and $|f^{\prime\prime}(\cdot)|\leq t$ for some finite $t$. By Proposition~\ref{pro:dptozcdp}, the computation also satisfies $(\phi_1^2/2)$-zCDP.

The next two parts are the computations of the Hessian and the covariance matrix. By Lemma~\ref{lem:privspdmat}, the computation of the Hessian satisfies $\phi_2$-differential privacy and $\phi_2$-zCDP, and the computation of the covariance matrix satisfies $\phi_3$-differential privacy and $\phi_3$-zCDP.

All other computations are post-processing and therefore do not violate differential privacy or zCDP. By the composition theorem of differential privacy and zCDP, Algorithm~\ref{alg:ciobj} satisfies $(\phi_1+\phi_2+\phi_3)$-differential privacy and $(\phi_1^2/2+\phi_2+\phi_3)$-zCDP.
\end{proof}

\section{Proof of Theorem~\lowercase{\ref{thm:cioutput}}}\label{app:cioutput}
\thmcioutput*
\begin{proof}
In Algorithm~\ref{alg:cioutput}, there are three parts that touch the true data. 

First, the computation of the minimizer to the objective function. By Theorem~\ref{thm:ermoutput}, this process satisfies $\phi_1$-differential privacy and $\phi_1$-zCDP as long as the loss function $f(\cdot)$ is convex and differentiable with $|f^{\prime}(\cdot)|\leq 1$.

The next two parts are the computations of the Hessian and the covariance matrix. By Lemma~\ref{lem:privspdmat}, the computation of the Hessian satisfies $\phi_2$-differential privacy and $\phi_2$-zCDP, and the computation of the covariance matrix satisfies $\phi_3$-differential privacy and $\phi_3$-zCDP.

All other computations are post-processing and therefore do not violate differential privacy or zCDP. By the composition theorem of differential privacy and zCDP, Algorithm~\ref{alg:cioutput} satisfies $(\phi_1+\phi_2+\phi_3)$-differential privacy and $(\phi_1+\phi_2+\phi_3)$-zCDP.
\end{proof}

\section{Proof of Lemma~\lowercase{\ref{lem:senscovlr}}}\label{app:senscovlr}
\senscovlr*
\begin{proof}
We compute the $L_2$ sensitivity for $\Sigma$ as
\begin{align*}
& \max_{\mathcal{D},\mathcal{D}^{\prime}}\Big\Vert\tovec{(\Sigma_\mathcal{D}-\Sigma_{\mathcal{D}^{\prime}})}\Big\Vert_2\\
=& \max_{\vec{x}_n,y_n,\vec{x}_z,y_z} \frac{1}{n}\Big\Vert \tovec{\left[\nabla(f(\vec{x}_n,y_n,\theta_0))[\nabla f(\vec{x}_n,y_n,\theta_0)]^T\right]}-\\
& \tovec{\left[\nabla(f(\vec{x}_z,y_z,\theta_0))[\nabla f(\vec{x}_z,y_z,\theta_0)]^T\right]}\Big\Vert_2\\
=& \max_{\vec{x}_n,y_n,\vec{x}_z,y_z}\frac{1}{n}\Big\Vert\left[S(-y_n\theta_0^T\vec{x}_n)^2\tovec{(\vec{x}_n\vec{x}_n^T)}-\right.\\
& \left.S(-y_z\theta_0^T\vec{x}_z)^2\tovec{(\vec{x}_z\vec{x}_z^T)}\right]\Big\Vert_2\\
\leq & \max_{y,\vec{x}}\frac{2}{n}S(-y\theta_0^T\vec{x})^2\Vert \tovec{(\vec{x}\vec{x}^T)} \Vert_2.
\end{align*}

Since $\Vert \vec{x}\Vert_2\leq 1$, we get $\Vert\tovec{(\vec{x}\vec{x}^T)}\Vert_2 = \sqrt{\sum_{1\leq j,k\leq d}x[j]^2x[k]^2} = \sqrt{(\sum_{j=1}^dx[j]^2)^2}\leq 1$. From the Cauchy-Schwarz inequality, we get $\Vert\theta^T\vec{x}\Vert_2\leq\Vert\theta\Vert_2\Vert\vec{x}\Vert_2\leq\Vert\theta\Vert_2$. We know either $\theta^T\vec{x}=\Vert\theta^T\vec{x}\Vert_2$ or $\theta^T\vec{x}=-\Vert\theta^T\vec{x}\Vert_2$, then $-\Vert\theta\Vert_2\leq\theta^T\vec{x}\leq \Vert\theta\Vert_2$. Based on the fact that the sigmoid function $S(t)$ is monotonically increasing in $t$ and $y\in\{-1,1\}$,
\begin{align*}
& \max_{y,\vec{x}}\frac{2}{n}S(-y\theta_0^T\vec{x})^2\Vert \tovec{(\vec{x}\vec{x}^T)} \Vert_2
\leq \frac{2}{n}S(\Vert\theta_0\Vert_2)^2.
\end{align*}
\end{proof}

\section{Proof of Lemma~\lowercase{\ref{lem:senshessianlr}}}\label{app:senshessianlr}
\senshessianlr*
\begin{proof}
We compute the $L_2$ sensitivity for $H[J_n(\tilde{\theta})]$ as:
\begin{align*}
& \max_{\mathcal{D},\mathcal{D}^{\prime}} \Big\Vert\tovec{(H[J_n(\mathcal{D},\tilde{\theta})]-H[J_n(\mathcal{D}^{\prime},\tilde{\theta})])}\Big\Vert_2\\
=& \max_{\vec{x}_n,y_n\vec{x}_z,y_z}\frac{1}{n}\Big\Vert\tovec{\left[H[f(\vec{x}_n,y_n,\tilde{\theta})]-H[f(\vec{x}_z,y_z,\tilde{\theta})]\right]}\Big\Vert_2\\
=& \max_{\vec{x}_n,y_n\vec{x}_z,y_z}\frac{1}{n}\Big\Vert\tovec{\left[S(-y_n\tilde{\theta}^T\vec{x}_n)S(y_n\tilde{\theta}^T\vec{x}_n)\vec{x}_n\vec{x}_n^T\right]} \\
& - \tovec{\left[S(-y_z\tilde{\theta}^T\vec{x}_z)S(y_z\tilde{\theta}^T\vec{x}_z)\vec{x}_z\vec{x}_z^T\right]}\Big\Vert_2\\
\leq & \max_{\vec{x},y}\frac{2}{n}S(-y\tilde{\theta}^T\vec{x})S(y\tilde{\theta}^T\vec{x})\Vert\tovec{(\vec{x}\vec{x}^T)}\Vert_2\\
\leq & \max_{\vec{x}}\frac{2}{n}S(\tilde{\theta}^T\vec{x})S(-\tilde{\theta}^T\vec{x}).
\end{align*}

In Appendix~\ref{app:senscovlr}, we have shown that $-\Vert\theta\Vert_2\leq\theta^T\vec{x}\leq \Vert\theta\Vert_2$. The function $S(\tilde{\theta}^T\vec{x})S(-\tilde{\theta}^T\vec{x})$ achieves the maximum $1/4$ at $\tilde{\theta}^T\vec{x}=0$. So,
\begin{align*}
\max_{\vec{x}}\frac{2}{n}S(\tilde{\theta}^T\vec{x})S(-\tilde{\theta}^T\vec{x}) \leq \frac{1}{2n}.
\end{align*}
\end{proof}

\section{Proof of Lemma~\lowercase{\ref{lem:senscovsvm}}}\label{app:senscovsvm}
\senscovsvm*
\begin{proof}
We compute the $L_2$ sensitivity for $\Sigma$ as:
\begin{align*}
& \max_{\mathcal{D},\mathcal{D}^{\prime}}\Big\Vert\tovec{(\Sigma^{\prime}_\mathcal{D}-\Sigma^{\prime}_{\mathcal{D}^{\prime}})}\Big\Vert_2\\
=& \max_{\vec{x}_n,y_n,\vec{x}_z,y_z}\frac{1}{n}\Big\Vert\tovec{\left[\nabla f(\vec{x}_n,y_n,\theta_0)[\nabla f(\vec{x}_n,y_n,\theta_0)]^T\right]}\\
& -\tovec{\left[\nabla f(\vec{x}_z,y_z,\theta_0)[\nabla f(\vec{x}_z,y_z,\theta_0)]^T\right]}\Big\Vert_2\\
\leq & \max_{y,\vec{x}}\frac{2}{n}\Big\Vert \tovec{\left[\nabla f(\vec{x},y,\theta_0)[\nabla f(\vec{x},y,\theta_0)]^T\right]} \Big\Vert_2.
\end{align*}
There are three cases:
\begin{enumerate}
\item If $y\theta^T\vec{x}>1+h$,
\[\max_{y,\vec{x}}\frac{2}{n}\Big\Vert \tovec{\left[\nabla f(\vec{x},y,\theta_0)[\nabla f(\vec{x},y,\theta_0)]^T\right]} \Big\Vert_2=0.\]
\item If $|1-y\theta^T\vec{x}|\leq h$,

\begin{align*}
& \max_{y,\vec{x}}\frac{2}{n}\Big\Vert \tovec{\left[\nabla f(\vec{x},y,\theta_0)[\nabla f(\vec{x},y,\theta_0)]^T\right]} \Big\Vert_2\\
=& \max_{y,\vec{x}}\frac{2}{n}\left[\frac{y}{2h}(y\theta_0^T\vec{x}-1-h)\right]^2\|\tovec(\vec{x}\vec{x}^T)\|_2\\
\leq& \max_{y,\vec{x}}\frac{1}{2nh^2}(y\theta_0^T\vec{x}-1-h)^2\\
\leq& \frac{1}{2nh^2} \cdot 4h^2\quad\text{since $y\theta_0^T\vec{x}-1\in[-h,h]$}\\
=& 2/n.
\end{align*}
\item If $y\theta^T\vec{x}<1-h$,
\begin{align*}
& \max_{y,\vec{x}}\frac{2}{n}\Big\Vert \tovec{\left[\nabla f(\vec{x},y,\theta_0)[\nabla f(\vec{x},y,\theta_0)]^T\right]} \Big\Vert_2\\
=&  \max_{y,\vec{x}}\frac{2}{n}y^2\|\tovec(\vec{x}\vec{x}^T)\|_2\\
\leq& 2/n.
\end{align*}

Therefore, in all cases, \[\max_{y,\vec{x}}\frac{2}{n}\Big\Vert \tovec{\left[\nabla f(\vec{x},y,\theta_0)[\nabla f(\vec{x},y,\theta_0)]^T\right]} \Big\Vert_2
\leq 2/n.\]
\end{enumerate}
\end{proof}

\section{Proof of Lemma~\lowercase{\ref{lem:senshessiansvm}}}\label{app:senshessiansvm}
\senshessiansvm*
\begin{proof}
The $L_2$ sensitivity for $H[J_n(\tilde{\theta})]$ can be computed as:
\begin{align*}
& \max_{\mathcal{D},\mathcal{D}^{\prime}}\Big\|\tovec\{H[J_n(\mathcal{D},\tilde{\theta})]-H[J_n(\mathcal{D}^{\prime},\tilde{\theta})]\}\Big\|_2\\
=&\max_{\vec{x}_n,y_n,\vec{x}_z,y_z} \frac{1}{n}\Big\|\tovec\{H[f(y_n\tilde{\theta}\vec{x}_n)]-H[f(y_z\tilde{\theta}\vec{x}_z)]\}\Big\|_2.
\end{align*}
There are two cases:
\begin{enumerate}
\item If $|1-y\theta^T\vec{x}|\leq h$,
\begin{align*}
& \max_{\vec{x}_n,y_n,\vec{x}_z,y_z} \frac{1}{n}\Big\|\tovec\{H[f(y_n\tilde{\theta}\vec{x}_n)]-H[f(y_z\tilde{\theta}\vec{x}_z)]\}\Big\|_2\\
=& \max_{\vec{x}_n,y_n,\vec{x}_z,y_z} \frac{1}{n}\Big\|\tovec\left\{\frac{y_n^2}{2h}\vec{x}_n\vec{x}_n^T-\frac{y_z^2}{2h}\vec{x}_z\vec{x}_z^T\right\}\Big\|_2\\
\leq & \max_{\vec{x},y}\frac{2}{n}\cdot\frac{1}{2h}\|\tovec(\vec{x}\vec{x}^T)\|_2\\
\leq & 1/(nh).
\end{align*}
\item Otherwise,
\[\max_{\vec{x}_n,y_n,\vec{x}_z,y_z} \frac{1}{n}\Big\|\tovec\{H[f(y_n\tilde{\theta}\vec{x}_n)]-H[f(y_z\tilde{\theta}\vec{x}_z)]\}\Big\|_2=0.\]
\end{enumerate}
Therefore, in all cases,
\[\max_{\vec{x}_n,y_n,\vec{x}_z,y_z} \frac{1}{n}\Big\|\tovec\{H[f(y_n\tilde{\theta}\vec{x}_n)]-H[f(y_z\tilde{\theta}\vec{x}_z)]\}\Big\|_2\leq 1/(nh).\]
\end{proof}







\end{appendix}

\end{document}